\icmltitlerunning{Preselection Bandits}
\newcommand\R{\mathbb R} 
\newcommand\N{\mathbb N} 
\newcommand\E{\mathbb E} 
\newcommand{\argmax}[1]{\underset{#1}{\operatorname{arg}\,\operatorname{max}}\;} 
\newcommand{\argmin}[1]{\underset{#1}{\operatorname{arg}\,\operatorname{min}}\;} 
\newcommand\F{\mathcal{F}} 
\newcommand\pvec[1]{\mathbf{#1}}
\newcommand\inv{^{-1}} 
\newcommand\Kl[1]{\mathrm{KL} \big(#1\big)} 
\newcommand{\IGNORE}[1]{}
\newcommand{\argmaxcode}{{\operatorname{arg}\,\operatorname{max}}} 
\newcommand{\rank}{\mathbf r}
\renewcommand{\L}{\mathscr L}
\newcommand{\items}{n}
\newcommand{\pullnumber}{l}
\newcommand{\SK}{\mathbb S_{\items}}
\newcommand{\order}{\rank\inv}
\newcommand{\qioneetc}{q_{i_1,\ldots,i_{\pullnumber}}}
\newcommand{\oddij}{O_{i,j}}
\newcommand{\subij}{S_{i,j}}
\newcommand{\subminusi}{S_{-i,j}}
\newcommand{\subminusj}{S_{i,-j}}
\renewcommand{\P}{\mathbb P}
\newcommand{\timehorizon}{T}
\newcommand{\algo}{\varphi}
\newcommand{\oddiJ}{O_{i,J}}
\newcommand{\skill}{\theta}
\newcommand{\paramspace}{\Theta}
\newcommand{\estoddij}{\hat O_{i,j}}
\newcommand{\estoddiJ}{\hat O_{i,J}}
\newcommand{\oddiJtrcb}{\hat O^{TRCB}_{i,J}}
\newcommand{\Cshrink}{C_{shrink}}
\newcommand{\actionspace}{\mathbb A}
\newcommand{\subsets}{\mathbb A_\pullnumber}
\newcommand{\powerset}{\mathbb A_{full}}
\newcommand{\reward}{\mathrm{U}}
\newcommand{\rewardtilde}{\widetilde \reward}
\newcommand{\optsubset}{S^*}
\newcommand{\regret}{\mathcal{R}}
\newcommand{\skillmin}{\skill_{min}}
\newcommand{\skillmax}{\skill_{max}}
\newcommand{\rpreselect}{r}
\newcommand{\regpertime}{\tilde r}
\newtheorem{theorem}{Theorem}[section]
\newtheorem{lemma}[theorem]{Lemma}
\newtheorem{example}{Example}
\theoremstyle{remark}
\newtheorem{remark}{Remark}
\newcommand*{\defeq}{\mathrel{\vcenter{\baselineskip0.5ex \lineskiplimit0pt
			\hbox{\footnotesize.}\hbox{\footnotesize.}}}%
	=}
\newcommand{\defi}{\defeq}
\begin{document}
	
	\twocolumn[
	\icmltitle{Preselection Bandits }
	
	
	
	\icmlsetsymbol{equal}{*}
	
	\begin{icmlauthorlist}
		\icmlauthor{Viktor Bengs}{UPB}
		\icmlauthor{Eyke H\"ullermeier}{UPB}
	\end{icmlauthorlist}

	\icmlcorrespondingauthor{Viktor Bengs}{viktor.bengs@upb.de}
	
	\icmlaffiliation{UPB}{Heinz Nixdorf Institute and Department of Computer Science, Paderborn University, Germany}

	\icmlkeywords{Preference Learning, Exploration, Exploitation, Online Learning, Plackett-Luce, MNL model}
	
	\vskip 0.3in
	]
	
	
	
	\printAffiliationsAndNotice{}  

\begin{abstract}
	In this paper, we introduce the Preselection Bandit problem, in which the learner preselects a subset of arms (choice alternatives) for a user, which then chooses the final arm from this subset.
	The learner is not aware of the user's preferences, but can learn them from observed choices. 
	In our concrete setting, we allow these choices to be stochastic and model the user's actions by means of the Plackett-Luce model.
	The learner's main task is to preselect subsets that eventually lead to highly preferred choices. 
	To formalize this goal, we introduce a reasonable notion of regret and derive lower bounds on the expected regret. 
	Moreover, we propose algorithms for which the upper bound on expected regret matches the lower bound up to a logarithmic term of the time horizon. 
\end{abstract}

\section{Introduction}
\label{sec_introduction}

The setting of preference-based multi-armed bandits or \emph{dueling bandits} \citep{busa2018preference} is a generalization of the standard stochastic multi-armed bandit (MAB) problem \citep{lattimore2018bandit}. Instead of assuming numerical rewards of individual arms (choice alternatives), the former is based on pairwise preferences between arms. In this paper, we introduce the Preselection Bandit (or simply Pre-Bandit) problem, which is closely related to the preference-based setting, especially to the recent variant of \emph{battling bandits} \citep{sahabattle}. 

Our setting involves an \emph{agent} (learner), which preselects a subset of arms, and a \emph{selector} (a human user or another algorithm), which then chooses the final arm from this subset.  
This setting is motivated by various practical applications. In information retrieval, for example, the role of the agent is played by a search engine, and the selector is the user who seeks a certain information. Another example is online advertising, where advertisements recommended to users can be seen as a preselection. As a concrete application, we are currently working on the problem of algorithm (pre-) selection \cite{kerschke2019automated}, where the (presumably) best-performing algorithm needs to be chosen from a pool of candidates.

In the beginning, the agent is not aware of the selector's preferences. However, the choices made by the latter reveal information about these preferences, from which the agent can learn. Due to time constraints, information asymmetry, or other reasons, we do not assume the selector to act perfectly, which means that it may miss the actually best among the preselected arms. In web search, for example, a user clicks on links based on limited information such as snippets, but without knowing the full content behind. Likewise, in algorithm selection, the final choice might be made on the basis of a cross-validation study, i.e., estimated performances that not guarantee the identification of the truly best algorithm. By modeling the selector's actions by means of the Plackett-Luce (PL) model \citep{luceindividual,plackett1975analysis}, we allow some randomness in the process of decision making. 
The agent's main task is to preselect subsets that eventually lead to highly preferred choices. To formalize this goal, we introduce a reasonable notion of regret based on utilities of preselected subsets, where an arm's (latent) utility is weighted with the probability of choosing this arm from the subset. 
In particular, this allows for capturing decision-making biases of users as studied intensively in behavioral economics or psychology.

We study two variants of the problem.  
In the first variant, which we call \emph{restricted} Pre-Bandit problem, the size of the preselection is predefined and fixed throughout. In the second variant, the \emph{flexible} Pre-Bandit problem, the agent is allowed to adjust the size of the preselection in every round. For these settings, we derive lower bounds on the expected regret. Moreover, for both scenarios, we propose active learning algorithms for which the upper bound on expected regret matches the lower bound (possibly) up to 
a logarithmic term of the time horizon. 

We discuss related work in Section \ref{sec_related_work}. In Section \ref{sec:prel}, we introduce the notation used throughout the paper, and also give a concise review of the PL model and some of its properties. In Section \ref{sec:pbp}, the Pre-Bandit problem is formally introduced, together with a reasonable notion of regret, for which  lower bounds with respect to the time horizon are verified.  Near-optimal algorithms for the two variants of the Pre-Bandit problem are provided in Section \ref{sec_algorithms}. 
We devote Section \ref{sec_experiments} to a simulation study demonstrating the usefulness and efficiency of our algorithms.
Finally, Section \ref{sec_conclusion} summarizes our results and discusses directions for future work.
All proofs of the theoretical results  are deferred to the supplements.

\section{Related Work} \label{sec_related_work}

Bandit problems with the possibility of using more than one arm at a time have been considered in various ways in the literature. 
However, as will be detailed in the following, none of the previous works encompasses the  problem considered in this paper. Due to the specific meaning of the subset choice as a preselection in practical applications, this also justifies a new name for the setting.

A variant of the MAB problem is the combinatorial bandit problem \citep{cesa2012combinatorial,kveton2015tight}, in which the learner chooses a subset of arms in each time step, and then observes quantitative feedback, either in the form of rewards of each single arm (semi-bandit feedback) or the total sum of the rewards (bandit feedback) for the arms in the chosen subset. 
This differs fundamentally from our setting, in which no quantitative feedback is ever observed; instead, only qualitative feedback is provided, i.e., which arm is picked in a subset.

Qualitative forms of feedback for multiple arm choices at a time is considered in the realm of dueling, multi-dueling \citep{sui2017multi,busa2018preference} or battling bandits \citep{sahabattle}.
The flexible Pre-Bandit problem has obvious connections to the latter settings, with the freedom of adjusting the size of comparison for each time instance and can be interpreted as a combinatorial bandit problem with qualitative feedback.
\citet{saha2019pac} investigate  the effect of this flexibility in an active PAC-framework for finding the best arm under the PL model, while the active top-k-arm identification problem in this model is studied by \citet{chen2018nearly}. 
Recently, this scenario was considered in terms of a regret minimization problem with top-$m$-ranking feedback by \citet{saha2019regret}
for a straightforward extension of the dueling bandit notion of regret and not regarding the ``value'' of a subset in its entirety as we do (cf.\ Section \ref{subsec:regret_definition}).

Moreover, the algorithms suggested in \citet{sahabattle} are not applicable within our scenario for the restricted Pre-Bandit problem, as either the algorithms are focusing on the linear-subset choice model, which is fundamentally different from our choice model (cf.\ Section \ref{sec:prel}) or the algorithms allow replicates of the same arm within a chosen subset, which is inadmissible within our scenario (and in many practical applications as well).

The Pre-Bandit problem also reveals parallels to the \emph{Dynamic Assortment Selection} (DAS) problem  \citep{caro2007dynamic}, where a retailer seeks to find an optimal subset of his/her available items (or products) in an online manner, so as to maximize the expected revenue (or equivalently minimize the expected regret).
The DAS problem under the multinomial logit model, which is also known as the \emph{MNL-Bandits} problem \citep{rusmevichientong2010dynamic,saure2013optimal,agrawal2016near,agrawal2017thompson,wang2018near,chen2018dynamiccontext}  is especially close to our framework, as the corresponding concept of regret shares similarities with our definition of regret. 

However, our problem can rather be seen as complementary, since we do not assume a priori known revenues for each item or any revenues at all. While this might be natural for the retail management problem, it is arguably less so for applications we have in mind, such as recommendation systems or algorithm (pre-)selection.
In addition, we introduce a parameter in our setting that allows the learner to adjust its preselections with respect to the preciseness of the user's selections. This might also be an interesting direction for future work for the DAS problem.
Finally, to demonstrate the inappropriateness of the DAS algorithms for the restricted Pre-Bandit problem, we employ some of the algorithms in our experimental study. 

Another quite related branch of research is the so-called \emph{stochastic click model} \citep{zoghi2017online,lattimore2018toprank}, where a list of $\pullnumber$ items is presented to the selector in each iteration. Scanning the list from the top to the bottom, there is a certain probability that the selector chooses the item at the current position, or otherwise continues searching (eventually perhaps not choosing any item).
Thus, in contrast to our setting, the explicit order of the arms within a subset (or list) is a relevant aspect.
Further, the resulting learning task boils down to finding the $\pullnumber$ most attractive items, as these provably constitute the optimal list in this scenario (which is not necessarily the case for our setting).

\section{Preliminaries}\label{sec:prel}

\subsection{Basic Setting and Notation} \label{sec_notation}

We formalize our problem in the setting of preference-based multi-armed bandits \citep{busa2018preference}, which proceeds from a set of $\items$ arms, each of which is considered as a choice alternative (item, option).  We identify the arms by the index set $[\items] \defi \{1,\ldots,\items\}$, where $\items \in \N$ is arbitrary but fixed. Moreover, we assume a total preference order $\succ$, where $i \succ j$ means that the $i$th is preferred to the $j$th arm.  

Let $\subsets$ be the set of all $\pullnumber$-sized subsets of $[\items]$ and $\powerset \defi \cup_{\pullnumber=1}^\items  \subsets $. 
Moreover, let $\SK$ be the symmetric group on $[\items]$, the elements of which we refer to as \emph{rankings}: each $\rank \in \SK$ defines a ranking in the form of a total order of the arms $[n]$, with $\rank(i)$ the position of arm $i$.  We assume that $\SK$ is equipped with a probability distribution $\P:\, \SK \to [0,1]$. For an integer $\pullnumber > 1$ and a set of arms $\{{i_1}, {i_2},\ldots,{i_l} \} \subseteq [\items]$, the probability that $i_1$ is the most preferred among this set is given by
\begin{equation} \label{def_pairwise_marginals}
\qioneetc  \defi 
%
\sum\nolimits_{\rank \in \SK: \rank(i_1) = \min\left(\rank(i_1), \ldots, \rank(i_l)\right)  }  \P(\rank) \enspace .
\end{equation}

\subsection{The Plackett-Luce Model} \label{sec_pl_model}

The Plackett-Luce (PL) model \citep{plackett1975analysis,luceindividual} is a parametric distribution on the symmetric group $\SK$ with parameter $\skill=(\skill_1,\ldots,\skill_{\items})^T \in \R_+^{\items},$ where each component $\skill_{k}$ corresponds to the strength of an arm $k,$ which we will refer to as \emph{score parameter.} 
The probability of a ranking $\rank \in \SK$ under the PL model is 
\begin{equation} \label{def_pl_model}
\P_\skill(\rank)= \prod\nolimits_{i=1}^{\items}  \frac{\skill_{\order(i)}}{ \skill_{\order(i)} + \ldots + \skill_{\order(\items)}   } \enspace ,
\end{equation}
where $\order(i)$ denotes the index of the arm on position $i$. 
According to (\ref{def_pl_model}), PL models a stage-wise construction of a ranking, where in each round, the item to be put on the next position is chosen with a probability proportional to its strength. 
As a model of discrete choice, the PL distribution has a strong theoretical motivation.  For example, it is the only model that satisfies the Luce axiom of choice \citep{luceindividual}, including independence from irrelevant alternatives (\emph{ILA property}, see \cite{alvo2014statistical}). Besides, it has a  number of appealing mathematical properties. For instance, there is a simple expression for the $\pullnumber$-wise marginals in (\ref{def_pairwise_marginals}): 
\begin{equation} \label{def_lwise_marginals_pl_model}
q_{i_1,\ldots,i_\pullnumber} = \frac{\skill_{i_1}}{\skill_{i_1} + \skill_{i_2} + \ldots + \skill_{i_l}}
\end{equation}
This probability is identical to the popular Multinomial Logit (MNL) model, which is a discrete choice probability model considered in various frameworks \cite{train2009discrete}.
For our purposes, the use of the relative scores
\begin{equation} \label{defi_relative_score}
\oddij  \defi \frac{\skill_i}{\skill_j}, \qquad i,j \in [\items], 
\end{equation}
will turn out to be advantageous, as they are directly affected  by the ILA property of the PL model.
Indeed, for $i,j \in [\items]$, let $\subij \in \subsets$ be such that $i,j \in \subij.$
Furthermore, define $\subminusi \defi \subij \backslash \{i\}$ and similarly $\subminusj \defi \subij \backslash \{j\}$ for  $i,j \in [\items].$ 
Then, for any such a set $S_{i,j}$, (\ref{def_lwise_marginals_pl_model}) and (\ref{defi_relative_score}) imply 
\begin{align*}
	\oddij = 	\frac{\skill_i}{\skill_j} 
	= \frac{\skill_i}{\skill_j} \cdot \frac{\sum_{ t \in \subij}  \skill_t  }{\sum_{ t \in \subij}  \skill_t} 
	= \frac{ q_{i,  \subminusi }	}{  q_{j, \subminusj} } \enspace .	
	%
\end{align*}
Without restricting the parameter space $ \paramspace = \{ \skill \in \R_+^{\items} \}$, the PL model in (\ref{def_pl_model}) is not (statistically) identifiable, as $\skill \in \R_+^{\items}$ and $\tilde \skill = C \, \skill$ for any constant $C>0$  lead to the same models, i.e.\ $\P_{\skill}  = \P_{\tilde \skill}.$
Restricting the parameter space by assuming some normalization condition on the score parameters fixes this issue. 
Thus, we consider as parameter space the (restricted) unit square w.r.t.\ the infinity norm,
\begin{align*}
\paramspace = \Big\{	\skill=(\skill_1,\ldots,\skill_{\items})^T \in &[\skillmin,1]^\items \ |  \ \skillmin\in(0,1), \ \\ &\skillmax\defi\max_i \skill_i =1			\Big\},
\end{align*}
which leads to an identifiable statistical model $(\P_{\skill})_{\skill \in \paramspace}$ and naturally yields a normalization of each individual score parameter easing the fast grading of an arm's utility.

For technical reasons, we additionally exclude models that allow scores below a certain threshold $\skillmin$ (which will be a small constant), as the relative scores in (\ref{defi_relative_score}) are then well-defined for any pair $(i,j)\in [\items]^2.$

\subsection{Degree of Preciseness}

In our setting, we model the score parameter $\skill$ as 
\begin{equation} \label{def_modeling_scores}
	\skill_i = v_i^\gamma, \quad \forall i \in [\items],
\end{equation}
 where $v_i \in \R_+$ represents the (latent) utility of arm $i,$ while $\gamma \in (0,\infty)$ represents the degree of preciseness of the user's selections:
The higher $\gamma,$ the more (\ref{def_pl_model}) resembles a point-mass distribution on the ranking modeling a precise selector that is always able to identify the best arm, while the lower $\gamma$, the more (\ref{def_pl_model}) resembles a uniform distribution modeling a  selector acting purely at random. 
The effect of $\gamma$ on the $\pullnumber$-marginals in (\ref{def_lwise_marginals_pl_model}) is quite similar.

Note that $v_1,\ldots,v_n,\gamma$ are not separately identifiable (c.f.\ Section 3.2 in \cite{train2009discrete}), but $\skill_1,\ldots,\skill_n$ are identifiable under our assumptions on the parameter space $\paramspace$ above.
Hence, by fixing $\gamma$, the latent utilities $v_1,\ldots,v_n$ are guaranteed to be identifiable.

\section{The Pre-Bandit Problem}\label{sec:pbp}

The considered online learning problem proceeds over a finite time horizon $\timehorizon$.
For each time instance $t \in [\timehorizon]$, the agent (i.e., the learner) suggests a subset $S_t \in \actionspace $, where $\actionspace$ is the action space.
The agent's action $S_t$ is based on its observations so far. As a new piece of information, it observes the selector's choice (i.e., the user or the environment) of an arm $i_t$ among the offered subset $S_t$ (with probability $q_{i_t,S_t\backslash\{i_t\}}$ given by (\ref{def_lwise_marginals_pl_model})).

Suppose $r: \actionspace \to \R_+$ is a suitable regret function (to be defined in the next section below). The goal of the learner resp.\ the agent is to preselect the available arms by means of subsets $S_t$ in every time instance $t$ such that the  \emph{expected cumulative regret} over the time horizon, that is $	\E_{\skill} \sum_{t=1}^\timehorizon r(S_t)	$ with $\skill \in \paramspace$,  is minimized.
The problem is analyzed for two possible characteristics of the action space:
\begin{itemize}
	\item {(Restricted Preselection)} $\actionspace=\subsets,$ i.e., a preselection consists of exactly $\pullnumber$ many arms, where $\pullnumber$ is a fixed integer strictly greater than one.
	\item {(Flexible Preselection)} $\actionspace=\powerset,$ i.e., a preselection can be any non-empty subset of $[\items].$
\end{itemize}
In the following, we introduce sensible notions of regret for the considered problem setting. The key question we then address is the following: What is a good preselection to present the selector?
Moreover, we provide a lower bound on the related expected cumulative regret.

\subsection{Regret Definition} \label{subsec:regret_definition}

%
Assuming the selector to behave according to the PL model with score parameter (\ref{def_modeling_scores}), the expected utility of suggesting $S$ is given by
\begin{align} \label{defi_expected_utility}
\begin{split}
\reward(S) := \reward(S;v,\gamma)  &= \sum_{ i \in S} v_i \cdot q_{i,S\backslash\{i\}} \\
& = \frac{\sum_{ i \in S} v_i^{1+\gamma} }{\sum_{ i \in S} v_i^\gamma }.
\end{split}
\end{align}
Indeed, if $i_t \in [\items]$ is the chosen arm at time $t,$ then 
\begin{center}
	$	\E(v_{i_t} \, | \, S) = \sum_{ i \in S} v_i \cdot  \P(\mbox{$i$ is chosen} \, | \, S) = \reward(S).	$
\end{center}
Hence, the corresponding optimal preselection is  
\begin{align} \label{definition_optimal_subset}
\optsubset \in 
\begin{cases}
\argmaxcode_{ S \subseteq [\items], \,  |S|=\pullnumber  } \ \reward(S), & \mbox{if } \actionspace=\subsets, \\
%
\argmaxcode_{  i  } \ \ \skill_i, & \mbox{if } \actionspace=\powerset.
%
\end{cases}
\end{align}
The (instantaneous) regret suffered by the selector is anticipated by the agent through
\begin{align} \label{defi_regret_preselect}
\rpreselect(S) := \reward(\optsubset) - \reward(S), \quad S \in \actionspace.
\end{align}
Thus, if $S_1,\ldots,S_\timehorizon$ are suggested for times $1$ to $\timehorizon$, respectively, the corresponding cumulative regret over $\timehorizon$ is
%
\begin{align} \label{defi_regret}
\regret(\timehorizon) :=   \sum\nolimits_{ t =1}^{\timehorizon} \rpreselect(S_t) = \sum\nolimits_{ t =1}^{\timehorizon} \big(\reward(\optsubset) - \reward(S_t)\big).
\end{align}	

\begin{remark} [Relations to dueling bandits and battling bandits]
	Note that the optimal subset for $\actionspace=\powerset,$ i.e., for the flexible Pre-Bandit problem, always consists of the items whose score parameters equal the overall highest score  $\skill_{max}.$
	Thus, like for the dueling bandits and battling bandits problem, the goal is to find the best arm(s).
	However, whilst in the latter settings only pairwise resp.\ fixed $\pullnumber$-wise comparisons of arms are observed, we allow to draw comparisons of arbitrary size.  
In addition, the restricted Pre-Bandit problem can be interpreted as a dueling resp.\ battling bandit problem.
	Compared to the latter, however, the notion of regret has a more natural meaning in our setting. This is due to the different semantics of a selection of a pair (or any subset) of arms, which is a preselection that eventually leads to a concrete choice. In other words, the regret in \eqref{defi_regret_preselect} focuses on the perceived preference of a subset by  regarding the ``value'' of a subset in its entirety.
	%
	%
	%
\end{remark}

\begin{remark} [No-choice option]
	In the related branches of literature (cf.\ Section \ref{sec_related_work}), it is common to assume an additional choice alternative which represents the possibility of the user choosing none of the alternatives in the preselection.
	Formally, this can be expressed in our setting by extending the $n$-dimensional parameter space $\paramspace$ to $n+1$ dimensions by augmenting it with a dummy score parameter $\skill_0 \in (\skillmin,1]$ representative for this no-choice option.
	As this option is always available, this dummy item is part of every preselection, and consequently its latent utility affects only the choice probabilities in (\ref{defi_expected_utility}).
	However, although we refrain from incorporating the no-choice alternative in this paper, it is straightforward to show similar lower bounds as below for this problem and to modify the suggested algorithms to this scenario.
\end{remark}

\subsection{Most preferred Subsets} \label{subsec_most_preferred_subsets}

One tempting question is how the most preferred subsets look like, given our definition of regret.
As already mentioned, the optimal preselection $\optsubset$  for the flexible Pre-Bandit variant consists of the items with the same highest score parameter.
However, in the restricted Pre-Bandit variant, the optimal preselection does not necessarily consist of the $\pullnumber$ items with the highest scores in general, as the following examples demonstrate.
\begin{example} \label{example_optimizing_sets}
	In Table \ref{table_example_reward}, we provide three problem instances with fixed degree of preciseness $\gamma=1$ for $\items=5$ and the corresponding expected scores of (the relevant) 3-sized subsets of $[\items]$.
	In the first instance, where one arm has a much higher utility than the remaining ones, it is favorable to suggest this high utility arm together with the arms having smallest utility.
	This is due to the large differences between the utilities, so that the selector will take the best arm with a sufficiently high probability. Roughly speaking, the best strategy for the agent is to make the problem for the selector as easy as possible. 
	\begin{table}[ht]
	\centering		
	\caption{Problem instances with different optimal subsets (indicated in bold font) for the regret in (\ref{defi_regret_preselect}) with $\items=5$ and $\pullnumber=3$ (omitted subsets had smaller utilities throughout). }
	\label{table_example_reward}
	\resizebox{0.46\textwidth}{!}{
		\begin{tabular}{r|lllll}
			S & \{1, 2, 3\} & \{1, 2, 5\} &   \{1, 3, 5\} & \{1, 4, 5\} & \{2, 3, 4\}  \\ 
			\hline
			\hline
			\multicolumn{6}{c}{$v= (1\,,0.122\,,0.044\,,0.037\,,0.017), \gamma =1$}\\
			\hline
			$   \reward(S)  $ & 0.872 & 0.891 & 0.945 & \textbf{0.951} & 0.0896 \\
			\hline
			\multicolumn{6}{c}{$v= (1\,,0.681\,,0.572\,,0.543\,,0.399), \gamma =1$}\\
			\hline
			$   \reward(S) $ & \textbf{0.795} & 0.780 & 0.754 & 0.749 & 0.604 \\ 
			\hline
			\multicolumn{6}{c}{$v= (1\,,0.681\,,0.572\,,0.543\,,0.171), \gamma =1$}\\
			\hline
			$   \reward(S) $ & 0.795 & \textbf{0.806} & 0.778 & 0.773 & 0.604 \\ 
			\hline
			\hline
		\end{tabular}
	}
\end{table}
	The second instance is different, as the optimal preselection for the agent now consists of the top-3 arms with the highest scores. This comes with a non-negligible probability of missing the optimal arm, however, since the runner-up arms are sufficiently strong, the regret can be tolerated. On the other hand, adding a poor arm would be suboptimal, as one cannot be certain enough that it will not be taken. But by reducing the score for the worst arm notably as in the third instance, the worst arm substitutes the third best, as then the best item can again be better distinguished from the suboptimal ones inside the optimal subset.
\end{example}
As suggested by this example, a reasonable strategy is to compose the preselection of subsets of best and worst arms, respectively. 
In fact, we show in the supplementary material (Section \ref{subsec:algo_reward_maximization}) that the optimal subsets for the restricted Pre-Bandit problem are always composed of best and worst arms with the overall best arm(s) mandatory inside the optimal subset.

The obvious rationale of adding a strong arm is to guarantee a reasonably high utility, whereas a poor arm merely serves as a decoy to increase the probability of choosing the best arm. 
Such effects are known in the literature on decision theory as the \emph{attraction effect} or the \emph{decoy effect}, see \citep{dimara2017attraction} and references therein. 
In particular, our definition of regret is able to capture this effect and consequently emphasizes that our regret aims at penalizing difficult decisions for the selector in the restricted case. 

However, the manifestation of the decoy effect and consequently its demand for the application at hand, can be steered by the learner through fixing the degree of preciseness $\gamma$ as illustrated by the next example.

\begin{example}\label{example_optimizing_sets_second}
	In Table \ref{table_example_reward_sec}, we investigate the effect of the degree of preciseness $\gamma$ for the third instance of Example \ref{example_optimizing_sets}.
	In the first instance, where the degree of preciseness is moderate, i.e., $\gamma=1,$ the attraction effect is still present.
	But by increasing the degree of preciseness to $\gamma=20$ as in the second instance, the (rounded) utilities of all subsets containing the best item are the same.
	For such a problem instance, it is enough to provide the best item in the preselection, as the user's probability of choosing the best item is sufficiently high in these cases and the best arm will not be missed.
	In the last instance, the utility monotonically decreases with the sum of scores of the subset $S$ if the degree of preciseness is reduced to $\gamma=0.05.$
	This is due to a throughout non-negligible probability for choosing the worst item inside the preselected subset $S.$
	Thus, in order to dampen this effect and the resulting regret, it is best to preselect the top arms.
	\begin{table}[ht]
		\centering		
		\caption{Problem instances with different optimal subsets (indicated in bold font) for the regret in (\ref{defi_regret_preselect}) with $\items=5$ and $\pullnumber=3$ (omitted subsets had smaller utilities throughout). }
		\label{table_example_reward_sec}
		\resizebox{0.46\textwidth}{!}{
			\begin{tabular}{r|lllll}
				S & \{1, 2, 3\} & \{1, 2, 4\} &   \{1, 2, 5\} & \{1, 3, 5\} & \{2, 3, 4\}  \\ 
				\hline
				\hline
				\multicolumn{6}{c}{$v= (1\,,0.681\,,0.572\,,0.543\,,0.171), \gamma =1$}\\
				\hline
				$   \reward(S) $ & 0.795 & 0.791 & \textbf{0.806} & 0.778 &  0.605 \\ 	
				\hline
				\multicolumn{6}{c}{$v= (1\,,0.681\,,0.572\,,0.543\,,0.171), \gamma =20$}\\
				\hline
				$   \reward(S) $ & \textbf{1.000} & \textbf{1.000} & \textbf{1.000} & \textbf{1.000} & 0.676 \\ 
				\hline
				\multicolumn{6}{c}{$v= (1\,,0.681\,,0.572\,,0.543\,,0.171), \gamma =0.05$}\\
				\hline
				$   \reward(S) $ & \textbf{0.753} &  0.744 & 0.630 & 0.593 & 0.599 \\ 		
				\hline
				\hline
			\end{tabular}
		}
	\end{table}
\end{example}

In view of this example, the learner can interpolate between the two extreme cases of users by varying $\gamma:$
Taking a sufficiently large $\gamma,$ the learner can model a very precise user for whom it suffices to preselect a subset that just entails the best item.
Using a sufficiently small $\gamma$ instead, the learner is able to reproduce a random user for whom it is best to compose the preselection of the best items.
\begin{remark}
	Note that, in terms of regret, the case of a very precise user is related to the weak regret of the dueling bandits problem \citep{yue2012k,chen2017dueling}, where no regret occurs whenever the best arm is participating in the duel.
	Similarly, the semantics of the regret in the case of a random user corresponds to the strong regret of the dueling bandits problem, where the regret is zero only when the best arm is duelled with itself. 
\end{remark}

\subsection{Lower Bounds}

In this section, we prove lower bounds on the expected regret defined in (\ref{defi_regret}) for the two types of Pre-Bandit problems. 
\begin{theorem} \label{theorem_lower_bound_regret} [Restricted Preselection Bandits]
	Let $\items\in \N,$ $\pullnumber \leq \items/4 $, and $\timehorizon \geq \items $ be integers.
	Then, for any algorithm $\algo$ suggesting an $\pullnumber$-sized subset $S_t^{\algo}$ at time $t,$ 
%
	\begin{align*}
	\E_\skill \big( \regret(\timehorizon) \big) 
	&=  \sum\nolimits_{ t =1}^{\timehorizon} \E_\skill \big(\reward(\optsubset) - \reward(S_t^{\algo})\big) \\ 
	&\geq  \min\{1,1/\gamma	\} \,  C \, \sqrt{ \items \, \timehorizon }  	
	\end{align*}	
	 holds for any 	$\skill \in \paramspace$ and any $\gamma \in(0,\infty),$	 
	where $C>0$ is some constant independent of $\items,\pullnumber$, and $\timehorizon.$
\end{theorem}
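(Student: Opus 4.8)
The plan is to prove the bound by the standard recipe for minimax bandit lower bounds: exhibit a family of hard instances inside $\paramspace$, show that on such an instance any algorithm suffers $\Omega(\sepaconst)$ per-round regret on all but a few rounds unless it has already identified which instance it faces, and then use an information-theoretic argument to show identification is impossible within $\timehorizon$ rounds once the gap parameter $\sepaconst$ is small. Optimising $\sepaconst$ then produces the $\sqrt{\items\timehorizon}$ rate. Concretely, I would partition the $\items$ arms into disjoint blocks of size $\pullnumber$ and, for each block $B$, define an instance $\skill^{(B)}\in\paramspace$ in which every arm of $B$ has score $1$ and every other arm has score $1-\sepaconst$ (this lies in $\paramspace$ as soon as $1-\sepaconst\ge\skillmin$). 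By the contraharmonic-mean form (\ref{defi_expected_utility}) of $\reward$, the optimal $\pullnumber$-subset under $\skill^{(B)}$ is $B$ itself with $\reward(\optsubset)=1$, and a short computation (monotonicity of $\reward$ in the number of score-$1$ arms, with derivative $\ge c\sepaconst/\pullnumber$) shows that any $S$ containing $j$ arms of $B$ has $\reward(\optsubset)-\reward(S)\ge c\,\sepaconst\,(\pullnumber-j)/\pullnumber$. Summing over rounds, the cumulative regret under $\skill^{(B)}$ is at least $c\,\tfrac{\sepaconst}{\pullnumber}\big(\pullnumber\timehorizon-\sum_{a\in B}N_a\big)$, where $N_a$ counts how often arm $a$ was offered, so it remains to show the learner cannot concentrate its picks on the unknown block $B$.

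For that step I would change measure against the reference instance $\skill^{(0)}$ in which all arms have score $1-\sepaconst$ (this sits outside $\paramspace$ but is only an auxiliary): under $\skill^{(0)}$ the selector's choice is uniform on $S_t$ and carries no information, so $\E_{\skill^{(0)}}[N_a]$ is the same for all $a$ and $\sum_a\E_{\skill^{(0)}}[N_a]=\pullnumber\timehorizon$. A one-round KL estimate between the uniform winner-distribution and the winner-distribution under $\skill^{(B)}$ — expanding the closed form (\ref{def_lwise_marginals_pl_model}) for the $\pullnumber$-wise marginals, the linear-in-$\sepaconst$ terms cancel — gives a per-round divergence of order $\sepaconst^2\,|S_t\cap B|/\pullnumber$, whence by the chain rule $\mathrm{KL}\!\big(\P^{\timehorizon}_{\skill^{(0)}}\,\|\,\P^{\timehorizon}_{\skill^{(B)}}\big)\lesssim \tfrac{\sepaconst^2}{\pullnumber}\,\E_{\skill^{(0)}}\!\big[\sum_{a\in B}N_a\big]$. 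Combining this with Pinsker (or the Bretagnolle--Huber inequality), averaging over the blocks, and applying Cauchy--Schwarz over the blocks transports the ``blind'' behaviour under $\skill^{(0)}$ to the true instances: one gets $\E_{B}\E_{\skill^{(B)}}\big[\tfrac1\pullnumber\sum_{a\in B}N_a\big]\le \tfrac{\pullnumber\timehorizon}{\items}+(\text{error term})$ as long as $\sepaconst$ is not too large. Using $\pullnumber\le\items/4$ and $\timehorizon\ge\items$ to absorb the first term below $\timehorizon/4$, and then choosing $\sepaconst\asymp\sqrt{\items/\timehorizon}$ to balance the error term against $\timehorizon$, makes the average regret over the family at least a constant times $\sepaconst\timehorizon\asymp\sqrt{\items\timehorizon}$; since the average lower bounds the supremum, the theorem follows (the same scheme also forces $\sepaconst\le 1-\skillmin$, which is why $\timehorizon\ge\items$ is needed).

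The step I expect to be the real obstacle is keeping the constant independent of $\pullnumber$. Because $\reward(S)$ is a normalised average over $\pullnumber$ scores, perturbing a single arm moves the reward by only $O(\sepaconst/\pullnumber)$, so a ``single-needle'' construction loses a factor $\pullnumber$; this forces perturbing whole blocks of size $\Theta(\pullnumber)$, which however cuts the number of distinguishable instances to $\items/\pullnumber$ and must be compensated when counting information — so the trade-off between ``how wrong a suboptimal subset is'' and ``how many rounds it costs to locate the good block'' has to be bookkept very carefully, and the PL-specific one-round divergence and reward-gap estimates must be checked to hold uniformly in $\pullnumber$ (and in $\skillmin$). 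The remaining routine parts are: verifying $\skill^{(B)}\in\paramspace$ and the reward-gap inequality, the single-round KL computation, the chain-rule/Pinsker bookkeeping, and the final arithmetic of optimising $\sepaconst$.
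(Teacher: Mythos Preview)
Your overall recipe is the right one, and your per-round regret bound $\reward(\optsubset)-\reward(S)\ge c\,\sepaconst\,(\pullnumber-|S\cap B|)/\pullnumber$ is correct. However, the block-partition construction, combined with a change of measure against the uniform null $\skill^{(0)}$, does \emph{not} deliver $\sqrt{\items\timehorizon}$: it loses a factor $\sqrt{\pullnumber}$. Concretely, your one-round KL bound $\mathrm{KL}\lesssim \sepaconst^{2}\,|S_t\cap B|/\pullnumber$ is right, so by the chain rule $\mathrm{KL}\big(\P_{\skill^{(0)}}^{\timehorizon}\,\|\,\P_{\skill^{(B)}}^{\timehorizon}\big)\lesssim (\sepaconst^{2}/\pullnumber)\,\E_{\skill^{(0)}}\!\big[\sum_{a\in B}N_a\big]$. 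With $\items/\pullnumber$ disjoint blocks and $\sum_{a}N_a=\pullnumber\timehorizon$, the average of $\E_{\skill^{(0)}}\!\big[\sum_{a\in B}N_a\big]$ over blocks is $\pullnumber^{2}\timehorizon/\items$, so the average KL is $\sepaconst^{2}\pullnumber\timehorizon/\items$. Any Pinsker/Bretagnolle--Huber/Cauchy--Schwarz bookkeeping then forces $\sepaconst\lesssim\sqrt{\items/(\pullnumber\timehorizon)}$ to keep the error term below a constant fraction of $\timehorizon$, and the resulting bound is $\sepaconst\timehorizon\asymp\sqrt{\items\timehorizon/\pullnumber}$, not $\sqrt{\items\timehorizon}$. (The very trade-off you flag in your last paragraph is exactly what breaks: reducing the number of distinguishable instances to $\items/\pullnumber$ is \emph{not} compensated anywhere.) As an aside, ``$\E_{\skill^{(0)}}[N_a]$ is the same for all $a$'' is false---an algorithm may behave asymmetrically regardless of feedback---though only the total $\sum_a N_a=\pullnumber\timehorizon$ is needed.

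The paper avoids this loss by a different information-theoretic step. It takes as hard instances \emph{all} $S\in\subsets$ with the same parameterisation (\ref{parametrization_lower_restricted}), averages the regret over $S$, and---crucially---does \emph{not} compare to a global null. Instead, following \cite{chen2018note}, it compares each $(\pullnumber-1)$-set $S'$ to its $\items-\pullnumber+1$ one-arm extensions $S'\cup\{i\}$. The single-round KL for such a local move is $O(\varepsilon^{2}/\pullnumber)$ only on rounds where $i$ is offered (Lemma~\ref{lemma_kl_distance_restricted_bandits}), giving $\mathrm{KL}(\P_{S'},\P_{S'\cup\{i\}})\le C\varepsilon^{2}\,\E_{S'}[N_i]/\pullnumber$. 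Summing over the $\items-\pullnumber+1$ neighbours uses only $\sum_i\E_{S'}[N_i]=\pullnumber\timehorizon$, so the relevant quantity becomes $\varepsilon^{2}\timehorizon/(\items-\pullnumber+1)\asymp\varepsilon^{2}\timehorizon/\items$, with no spurious $\pullnumber$. Choosing $\varepsilon\asymp\sqrt{\items/\timehorizon}$ then yields the claimed $\sqrt{\items\timehorizon}$. In short, the missing idea is to replace the ``blocks versus global null'' scheme by the ``all $\pullnumber$-subsets plus one-arm-neighbour change of measure'' scheme; this is precisely where the constraint $\pullnumber\le\items/4$ enters, via $\items-\pullnumber+1\gtrsim\items$.
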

\begin{remark} \label{remark_on_lower_bound_restricted_pb}
	The order of the lower bound in Theorem \ref{theorem_lower_bound_regret} coincides with the lower bound on the expected regret derived by \citet{chen2018note} for the DAS problem under the MNL model with capacity constraints.
	In particular, the preselection size $\pullnumber$ does not affect the order, at least if it is smaller than $\items/4.$ 
	Although the lower bounds are theoretically of the same order, it is not directly possible to use the lower bound results of \citet{agrawal2016near} or \citet{chen2018note}, as in both proofs the probability of the no-choice option is assumed to be strictly positive, and the revenues all equal 1.
	Moreover, the results for the stochastic click model are quite different from ours (cf.\ Theorem 2 by \citet{lattimore2018toprank}), as there the size of the subset $\pullnumber$ is present in the lower bound.
	Therefore, we provide a proof in the supplementary material (Section \ref{sec:lower_bounds_proofs}).
\end{remark}

\begin{theorem} \label{theorem_lower_bound_regret_flexible_size}[Flexible Preselection Bandits]
	Let $\items\in \N$ and $\timehorizon \geq \items $ be integers.
	Then, for any algorithm $\algo$ suggesting subset $S_t^{\algo} \in \powerset$ at time $t,$ the following holds for any $\gamma \in(0,\infty)$:
	
	\noindent
	(i) [Gap-independent version] There exists a constant $C>0$ independent of $\items$ and $\timehorizon$, such that 
		$$	\sup\nolimits_{\skill \in \paramspace}	\E_\skill \big( \regret(\timehorizon) \big) 
		\geq C \, \min\{1,1/\gamma\} \, \sqrt{  \timehorizon  } \, .		$$
		%
		%
	(ii) [Gap-dependent version]	%
		If  $\algo$ is a no-regret algorithm (cf.\ Definition 2 in \cite{saha2019regret}), there exists a constant $C >0$ independent of $\items$ and $\timehorizon$, such that 
		\begin{align*}
				\sup\nolimits_{\skill \in \paramspace} \Big(\min_{i\notin \optsubset} \,  (&\skillmax - \skill_i ) \, \cdot \,	\E_\skill \big( \regret(\timehorizon) \big)  \Big)\\ &\geq C  \, \min\{1,1/\gamma\} \, (\items-1) \, \log(\timehorizon)  \, .	
		\end{align*}	
\end{theorem}
\begin{remark}
 	Note that the gap-independent lower bound is independent of the number of arms $\items.$ 
	This is in line with the enhancement for the DAS problem for the uncapacitated compared to the capacitated MNL model \citep{wang2018near}.
	On the other hand, the gap-dependent lower bound depends on the number of arms $\items$, and is of the same order as in the dueling bandit setting.
	In particular, compared to the dueling bandits setting, there is (theoretically) no improvement by offering subsets larger than two. 
	This is in accordance with the observations  made by \citet{sahabattle,saha2019regret}.
%
	%
\end{remark}

\section{Algorithms} \label{sec_algorithms}

In this section, we propose the \emph{Thresholding-Random-Confidence-Bound} (TRCB) algorithm stated in
Algorithm \ref{algo_explore_exploit}. This algorithm returns subsets $S_1,\ldots,S_T$ for the restricted Pre-Bandit problem. As will be shown, it has a satisfactory upper bound for the expected cumulative regret in (\ref{defi_regret}).
For the flexible Pre-Bandit problem, we  further suggest the \emph{Confidence-Bound-Racing} (CBR) algorithm as stated in Algorithm \ref{algo_racing}.
It is inspired by the idea of \emph{racing algorithms,} initially introduced by \citet{maron1997racing}  to find the best model in the framework of model selection.

\subsection{The TRCB Algorithm} \label{sec_TRCB_algo}

		\begin{algorithm}  [h]
			%
			\caption{TRCB algorithm}\label{algo_explore_exploit}
			\begin{algorithmic}[1]
				%
				\INPUT{ Set of arms $[\items],$ preselection size $l\in [2,\items] \cap \N,$
				lower bound for score parameters $\skillmin,$ magnitude of uncertainty consideration $\Cshrink \in (0,1/2),$ degree of preciseness $\gamma$}
				\STATE{\textbf{initialization:} $ W= [w_{i,j}]_{i,j} \leftarrow \pvec(0)_{\items \times \items}$} 
				\STATE $ \hat O = [\estoddij]_{i,j} \leftarrow \pvec(1)_{\items \times \items}$  
				\REPEAT
				\STATE 	$t \ \ \leftarrow t+1$\; 
				
				\STATE $J	\leftarrow \argmaxcode_{i \in [\items]} \ \# \{w_{i,j} \geq w_{j,i}	 \ | \ j \neq i		\}$
				\STATE  \COMMENT{Break ties arbitrarily} 
				
				\FOR{$i \in \{1,\ldots,\items\}\backslash\{J\}$}

				\STATE Sample $\beta_i \sim \text{Unif}[ \pm \sqrt{\frac{  32 \log( \pullnumber t^{3/2})}{ \skillmin^4 (w_{i,J} + w_{J,i}) }}	 	]  $
				
				\STATE $\oddiJtrcb \leftarrow \min \big(  \skillmin\inv  , \max\big( \estoddiJ + \Cshrink \, \beta_i				, \skillmin \big)					\big)$
				
				\ENDFOR
				
				
				\STATE		Compute $\hat S \leftarrow \argmaxcode_{ S \in  \subsets  }  \rewardtilde(S; \hat O_J^{TRCB},\gamma) $  

				\STATE		Suggest $S_t = \hat S$ and obtain choice $i_t \in S_t$
				
				\STATE		Update $ w_{i_t,j} \leftarrow w_{i_t,j} +1, j\in S_t\backslash\{i_t\}$   \\
				and for $i,j\in S_t$ \\

				 $ \hat O_{i,j} \leftarrow \begin{cases}
				\nicefrac{w_{i,j}}{w_{j,i}} , & w_{j,i} \neq 0, \\
				\skillmin, & \mbox{else.}			
				\end{cases}  $
				
				\UNTIL{$t==\timehorizon$}
				%
				%
			\end{algorithmic}
		\end{algorithm}

First of all, note that an estimation of the score parameter $\skill$ is not necessary for the goal of regret minimization. Instead, a proper estimation of the relative scores in (\ref{defi_relative_score}) is sufficient.
Indeed, maximizing the expected utility (\ref{defi_expected_utility}) is equivalent to maximizing the expected utility with respect to some reference arm $J,$ that is
\begin{align} \label{defi_expected_utility_with_rel_score}
\rewardtilde(S) = \rewardtilde(S; O_J , \gamma ) \defi \frac{\sum_{ i \in S} O_{i,J}^{\nicefrac{(1+\gamma)}{\gamma}}}{\sum_{ i \in S} O_{i,J}} \, ,
\end{align}
where $O_J=(O_{1,J},\ldots,O_{\items,J}   ),$
simply because $\rewardtilde(S)  = v_{  J}^{-1} \cdot \reward(S).$

Thanks to Lemma 1 by \citet{saha2019pac}, one can derive appropriate confidence region bounds based on a similar exponential inequality for the relative score estimates, so that one might be tempted to use a UCB-like policy for the restricted Pre-Bandit problem.
However, the main problem of such an approach is UCB's principle of ``optimism in the face of uncertainty'', which tends to exclude arms with low score from a preselection. As we have seen in Example~\ref{example_optimizing_sets}, such arms could indeed be part of the optimal subset $\optsubset$ depending on the specific value of $\gamma.$

The core idea of the TRCB algorithm is to solve this issue with a certain portion of pessimism. 
Instead of using the upper confidence bound estimates for the relative scores, a random value inside the confidence region of the relative score estimate is drawn (lines 7--8), so that pessimistic guesses for the relative scores are considered as well, which in turn ensures sufficient  exploration of the algorithm.
This sampling idea can be interpreted as a frequentist statistical version of Thompson Sampling.
To exclude inconsistencies with the score parameter space (cf.\ Section \ref{sec_pl_model}), these random confidence values are appropriately thresholded.

Until the a priori unknown time horizon is reached (lines 3, 4, 13), the TRCB algorithm repeatedly does the following.
Primarily, the arm with the highest total number of wins for the pairwise comparisons is determined as the reference arm $J$ (line 5).
Next, for every other arm, a random value inside its confidence region for its relative score with respect to the reference arm $J$ is drawn with uniform distribution and appropriately thresholded (lines 7--10).
These thresholded random values correspond to the current belief on the actual relative scores with respect to $J$ and are used to determine the preselection with the highest utility in (\ref{defi_expected_utility_with_rel_score}) (line 11).
After offering this preselection to the selector and observing its choice (line 12), the pairwise winning counts are updated (by breaking down the $\pullnumber$-wise comparison into pairwise comparisons) as well as the estimates for the relative scores (line 13). 

The following theorem shows that the upper bound for the worst-case cumulative regret of the proposed TRCB algorithm matches the information-theoretic lower bound on the cumulative regret in Theorem \ref{theorem_lower_bound_regret} with regard to $\items$ and $T$ up to a logarithmic term of $\timehorizon$ (the proof is given in Section \ref{sec:proof_TRCB} of the supplement).
\begin{theorem} \label{theorem_upper_bound_regret}
	If $ \Cshrink \in (0,1/2),$ then for any $\gamma\in(0,\infty)$ and any $ \timehorizon > \items,$ 
	\begin{align*}
				&\sup\nolimits_{\skill \in \paramspace} \E^{\mathrm{TRCB}}_\skill \, \regret(\timehorizon)	
				\\ &\leq C \,  \frac{\max \{\skillmin^{(\gamma-1)/(\gamma)},\skillmin^{(1-\gamma)/(\gamma)} \} }{ \gamma \, \skillmin^{2(3+\gamma)}}   \,  \sqrt{ \items \, \timehorizon \log(\timehorizon) },  
	\end{align*}	
	where $C>0$ is some constant independent of $\items,\pullnumber,$ $\timehorizon$ as well as  $\skillmin$ and $\gamma.$ 
\end{theorem}
\begin{remark}		
	The maximization over $\subsets$ in Algorithm \ref{algo_explore_exploit} (line 11) can be realized by Algorithm \ref{algo_reward_maxi} provided in the supplementary material. It keeps the computational cost low by exploiting structural properties of the utility function $\reward$ and the most preferred subsets (see Section \ref{subsec_most_preferred_subsets}).
\end{remark}

%
\subsection{The CBR Algorithm} \label{sec_CBR_algo}

The CBR algorithm is structurally similar to the TRCB algorithm. However, it uses estimates of the pairwise winning probabilities and the corresponding confidence intervals instead of the relative scores.  

\begin{algorithm} [h]
	%
	\begin{algorithmic}[1]
		\caption{CBR-algorithm}\label{algo_racing}
		\INPUT{ \footnotesize Set of arms $[\items],$  sigmoidal function $\sigma:\mathbb R \to [0,1]$ \normalsize}
		
		\STATE{\textbf{Initialization:} $ W= [w_{i,j}] \leftarrow \pvec(0)_{\items \times \items}$} 
		
		\STATE $ \hat Q= [\hat q_{i,j}] \leftarrow \pvec(\nicefrac{1}{2})_{\items \times \items}$, \ $ A \leftarrow  [\items]$  
		
		
		\REPEAT
		
		\STATE 	$t \leftarrow t+1$\; 
		
		\STATE $J	\leftarrow \argmaxcode_{i \in [\items]} \# \{w_{i,j} \geq w_{j,i}	 \ | \ j \neq i		\}$ 
		\STATE \COMMENT{Break ties arbitrarily} 
		\STATE $S  \leftarrow \{J\}$
		
		\FOR{$i \in A$} 
		
		\STATE  $c_i \leftarrow \sqrt{\nicefrac{   2 \log(\items t^{3/2})}{   (w_{i,J} + w_{J,i}) }}$ 
		\STATE  $\hat t_{i,J} \leftarrow  \sigma\big( \nicefrac{ (\hat q_{i,J} + c_i -1/2)}{2 c_i } \big)$
		
		\STATE $S \leftarrow \begin{cases}
		S \cup  \{i\}, & \mbox{with probability \ } \hat t_{i,J}  \\
		S, & \mbox{with probability \ } 1 - \hat t_{i,J}
		\end{cases}$

		
		\IF{ $\hat t_{i,J} = 0$ }
		
		\STATE $A  \leftarrow A \backslash \{i\}$ 
		
		\ENDIF
		
		\ENDFOR
		
		\STATE		Suggest $S_t = S$ and obtain choice $i_t \in S_t$
		
		\STATE		Update $ w_{i_t,j} \leftarrow w_{i_t,j} +1, j\in S_t\backslash\{i_t\}$  \\ and  $ \hat q_{i,j} \leftarrow
		\nicefrac{w_{i,j}}{(w_{i,j} + w_{j,i})} $ for $i,j\in S_t$
		
		\UNTIL{$t==\timehorizon$}
		\normalsize
	\end{algorithmic}
\end{algorithm}
%
%
In particular, the CBR algorithm maintains a pool of candidates $A \in [\items]$ and admits an arm $i\in A$ to be part of the preselection with a certain probability determined by the rate of uncertainty that $i$ could beat the current arm $J$ with the most winning counts.
This uncertainty is expressed through the ratio between the length of the confidence interval for $q_{i,J}$ (cf.\ the definition in (\ref{def_pairwise_marginals})) exceeding $1/2$ and the overall confidence interval's length.
More specifically, if $[l_i(t),u_i(t)]$ is the confidence interval for $q_{i,J}$ in time instance $t,$ then arm $i$ is included into the preselection with probability 
$
\sigma \big(\nicefrac{(u_i(t)-1/2)}{(u_i(t)-l_i(t))} \big),
$
where $\sigma:\mathbb R \to [0,1]$ is a sigmoidal function, i.e., a surjective monotone function with $\sigma(1/2)=1/2$ and $\sigma(x)> 0$ iff $x> 0.$
Note that the degree of preciseness $\gamma$ can be taken into account by the learner through the shape of $\sigma.$

Hence, if the confidence interval lies mostly above $1/2,$ that is $l_i(t) \approx 1/2,$ the chance is high that this particular arm could possibly beat the current best arm and consequently has a large probability of being included in the preselection.
In contrast, if the upper bound of the confidence interval is beneath $1/2,$ that is $u_i(t)\leq 1/2,$ the arm is discarded from the pool of candidates (lines 12--14), as one can be sure that this arm is already beaten by another. 

At the beginning, the major part of the arms have a high chance to be part of the preselection, which however decreases over the course of time until finally the preselection consists of only the best arm(s).
In the repetition phase, the preselection is successively built starting from the current arm with the most total number of wins for the pairwise comparisons and adding arms from the active set depending on the outcome of a Bernoulli experiment (lines 5--11), whose success probability depends on the length of the confidence interval (of the arm's pairwise winning probability against $J$) above $1/2.$
After offering this preselection to the selector and observing its choice (line 16), the pairwise winning counts and estimates on the pairwise winning probabilities are updated (line 17).

We have the following theorem for the upper bound on the cumulative regret for CBR, which matches the information-theoretic gap-dependent lower bound on the cumulative regret in Theorem \ref{theorem_lower_bound_regret_flexible_size} (the proof is given in Section \ref{sec:proof_CBR} of the supplement).
\begin{theorem} \label{theorem_upper_bound_regret_flexible}
	%
	%
	There are  universal constants $C_0,C_1>0,$ which do not dependent on $T$ or $n$, such that 	%
		\begin{align*}
					\sup\nolimits_{\skill \in \paramspace} &\E^{\mathrm{CBR}}_\skill \, \regret(\timehorizon)	\leq C_0\, \items  \\ &\qquad +  C_1 \, \frac{\max \{\skillmin^{(\gamma-1)/(\gamma)},\skillmin^{(1-\gamma)/(\gamma)} \} }{ \gamma \, \skillmin } \, \times \\&\quad \qquad  \sum_{i \in [\items]\backslash \optsubset } \frac{\log(\timehorizon)\sum_{i\in [\items] \backslash \optsubset}(\skillmax - \skill_i)}{(\skillmax - \skill_i)^2} 	
		\end{align*}
for any $ \timehorizon > \items,$ $\gamma \in (0,\infty)$.
\end{theorem}
\section{Experiments} \label{sec_experiments}

In this section, we investigate the performance of TRCB (Algorithm \ref{algo_explore_exploit}) as well as CBR (Algorithm \ref{algo_racing}) on synthetic data for some specific scenarios, while providing further scenarios in the supplementary material.

\subsection{Restricted Pre-Bandit Problem}
First, we analyze the empirical regret growth with varying time horizon $\timehorizon$ for the restricted Pre-Bandit problem.
We consider the case $\items=10,$ $\pullnumber=3$, and time horizons $\timehorizon\in \{ i \cdot 2000\}_{i=1}^{5}.$ 
The degree of preciseness is $\gamma=1$ throughout, and the score parameters $\skill=(\skill_i)_{i\in [\items]}$ are drawn uniformly at random from the $\items$-simplex, i.e., without a restriction on their minimal value and thus allowing $\skillmin$ to be infinitesimal.
The left plot in Figure \ref{fig:regretanalysis} provides the performance of our algorithms together with some algorithms for the DAS problem (see Section \ref{sec:further_exp} in the supplement for more information on these).

\begin{figure}
	\centering
	\subfigure
	{	\includegraphics[width=0.48\linewidth]{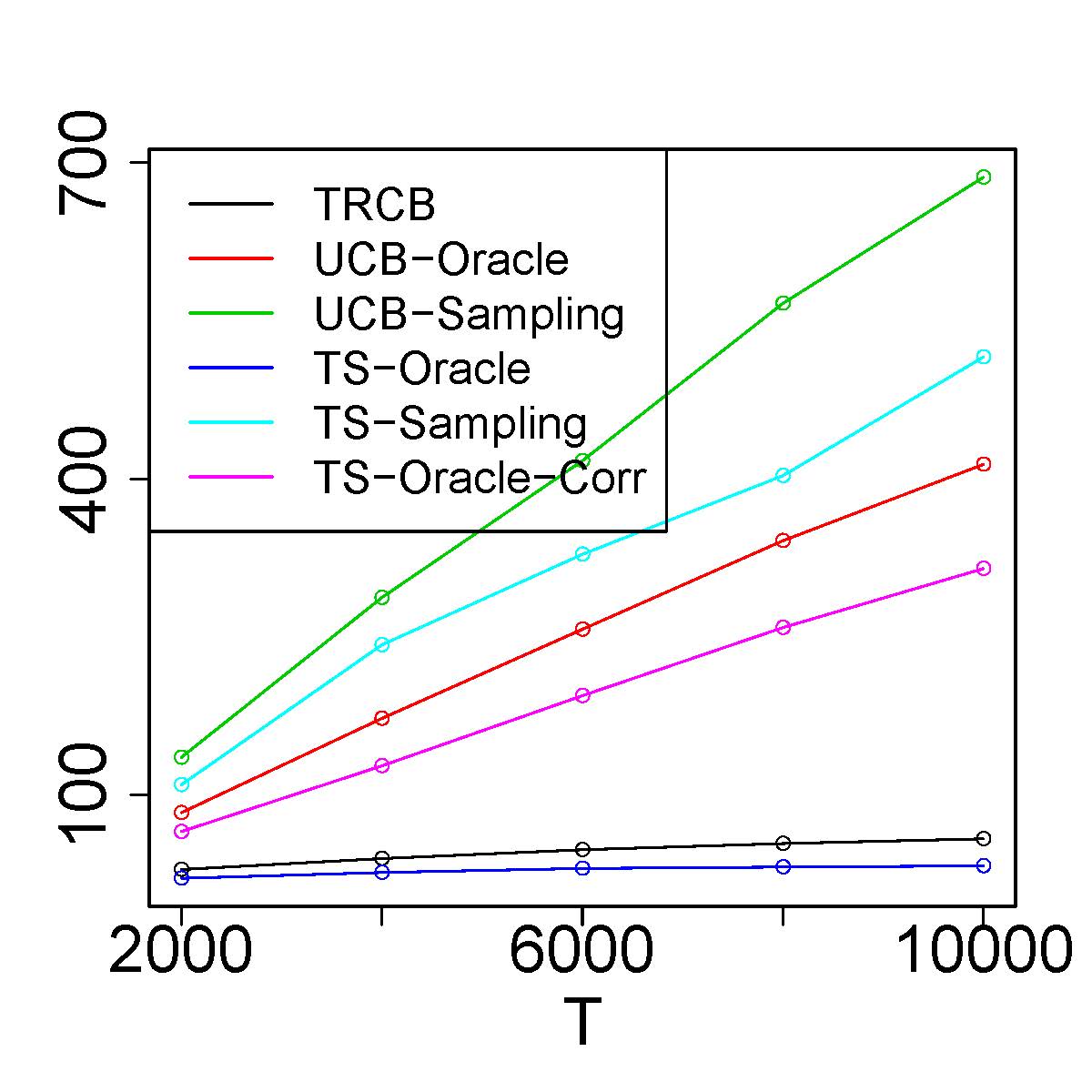}}
	\subfigure
	{  \includegraphics[width=0.48\linewidth]{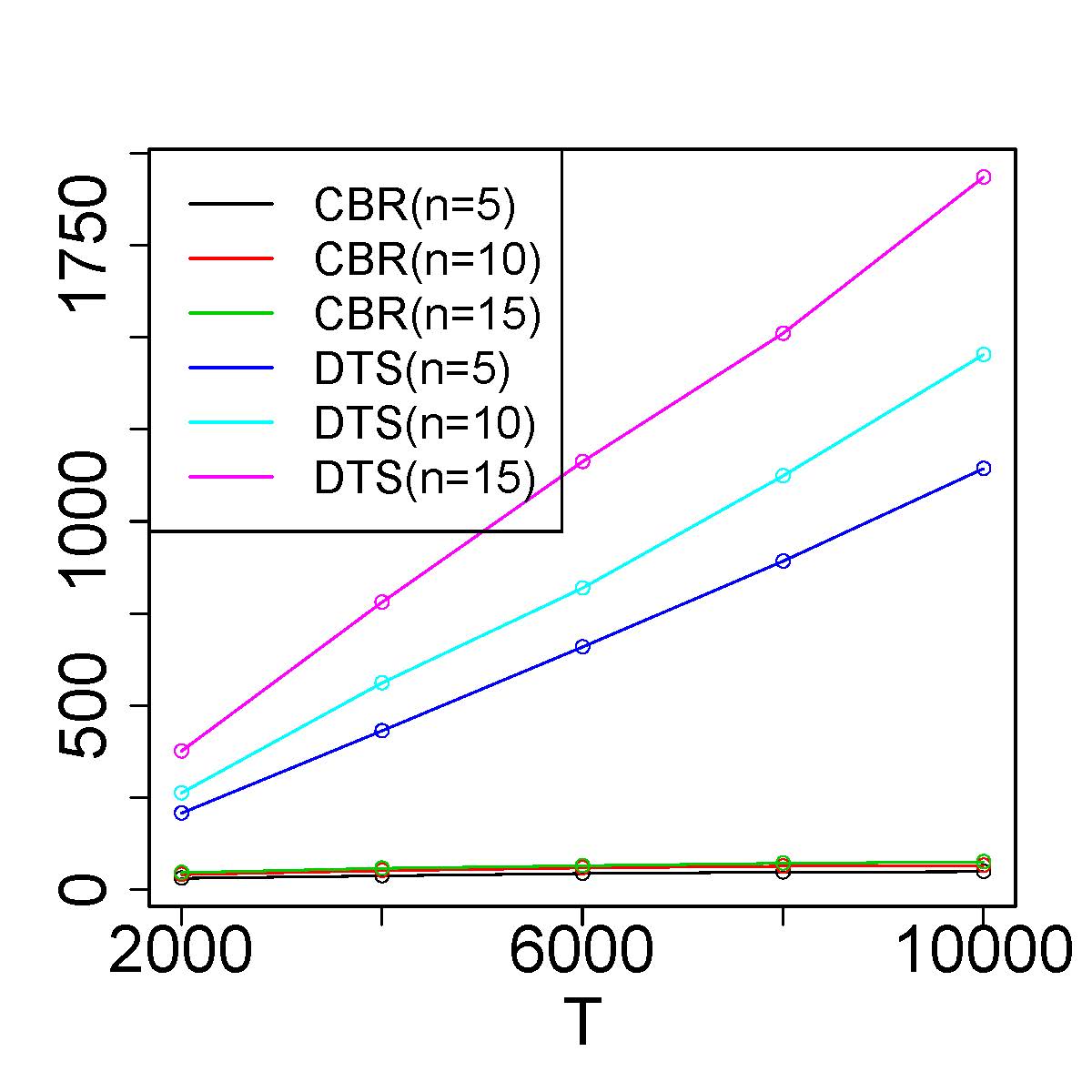}}
	\caption{
		{Left: Mean cumulative regret for 1000 runs of randomly generated restricted Pre-Bandit instances.
			Right: Mean cumulative regret for 1000 runs of randomly generated flexible Pre-Bandit instances.
	}}
	\label{fig:regretanalysis}
\end{figure}

For the algorithms of the DAS problem, the best arm is set to be the no-choice option, thereby putting (most of) them in the advantageous position of knowing a priori one element of the optimal subset.
Nevertheless, only TS-Oracle, with the advantage of knowing the best arm a priori, is able to slightly outperform TRCB in this scenario, whereas all other algorithms are distinctly outperformed by TRCB.

To explain this observation, recall our remark on UCB-like strategies in Section \ref{sec_TRCB_algo}. The UCB-based algorithms UCB-Oracle resp.\ UCB-Sampling as well as the UCB-like approximation of the variance of TS-Oracle-Corr tend to exclude arms with a low score from the suggested subset, even though they are contained in the optimal preselection.  
TS-Oracle and TS-Sampling, which do not use upper confidence bounds and include low score arms in the suggested subsets, are performing much better. The gap between these two TS algorithms shows how heavily the algorithms depend on the assumption that the  no-choice option corresponds to the highest scored arm, since we designed TS-Sampling such that, in each run, it samples once the best-arm from the top three arms according to an MNL model. 

In summary, this simulation confirms that the introduced (restricted) Pre-Bandit problem is indeed a new framework that differs from the DAS problem. A na\"ive  application of existing methods for the DAS problem is not suitable for this kind of problem. 

\subsection{Flexible Pre-Bandit Problem}
 Next, we investigate the empirical regret growth with varying time horizon $\timehorizon$ and varying  numbers of arms $\items$ for the flexible Pre-Bandit problem. In addition, we compared our algorithms with the Double Thompson Sampling (DTS) algorithm by \citet{wu2016double}, which is considered state-of-the-art for the dueling bandits problem with a small numbers of arms \citet{sui2017multi}. 

In the right picture of Figure \ref{fig:regretanalysis}, the results are displayed for the CBR resp.\ DTS algorithm on 1000 repetitions, respectively, with $n\in \{5,10,15\},$  $\timehorizon \in \{ i \cdot 2000  \}_{i=1}^5$, and $\sigma(x)= (1 \wedge x) 1_{[0,\infty)}(x).$ 
The score parameters are generated randomly as before.
%
It is clearly recognizable that  CBR  distinctly outperforms  DTS  in all scenarios, indicating that offering larger subsets is at least experimentally beneficial to find the best arm more quickly.

\section{Conclusion} \label{sec_conclusion}
In this paper, we have introduced the Pre-Bandit problem as a practically motivated and theoretically challenging variant of preference-based multi-armed bandits in a regret minimization setting. More specifically, we proposed two scenarios, one in which preselections are of fixed size and another one in which the size is under the control of the agent. For both scenarios, we derived lower bounds on the regret of algorithms solving these problems.
Moreover, we proposed concrete algorithms and analyzed  their performance theoretically and experimentally.

Our new framework suggests a multitude of conceivable paths for future work.
Most naturally, it would be interesting to analyze the Pre-Bandit problem under different assumptions on the user's choice behavior---despite being natural and theoretically justified \citep{mcfadden2001economic,train2009discrete}, the assumption of the PL model is relatively strong, and the question is to what extent it could be relaxed.
The main challenge surely lies in defining a sensible notion of regret, but an extension to the nested logit-model \citep{chen2018dynamicnested} or considering contextual information \citep{chen2018dynamiccontext} seems to be possible.
However, it is worth noting that our derived lower bounds on the regret based on expected utilities in the spirit of \eqref{defi_expected_utility} even hold for more general choice models, such as generalized random utility models \citep{walker2002generalized,train2009discrete}, which encompass the PL model.

Last but not least, like the related dynamic assortment selection problem studied in operational research, the motivation of our new framework stems from practical applications. Therefore, we are also interested in applying our algorithms to real-world problems, such as algorithm (pre-)selection already mentioned in the introduction.
In particular, the realm of algorithm selection seems to be a canonical candidate for our setting, as the decisions made are based on the noisy performance values of the algorithms, which justify a stochastic modeling of the process.

\subsubsection*{Acknowledgments}
The authors gratefully acknowledge financial support by the Germany Research Foundation (DFG). Moreover, the authors would like to thank the Paderborn Center for Parallel Computation (PC2) for the use of the OCuLUS cluster.



\bibliographystyle{icml2020}
\bibliography{Literature}

\medskip

\appendix

 \newpage
 
 \onecolumn
 
\renewcommand{\L}{\mathscr L}

\renewcommand{\P}{\mathbb P}
 
\newcommand{\Oterm}{\mathcal{O}}

\icmltitle{Supplementary material to ''Preselection Bandits''}








%

\section{Proofs of Theorems \ref{theorem_lower_bound_regret} and \ref{theorem_lower_bound_regret_flexible_size}} \label{sec:lower_bounds_proofs}

\newcommand{\subsetssmaller}{\mathbb A_{\pullnumber-1}}

\allowdisplaybreaks
For the proofs of Theorem \ref{theorem_lower_bound_regret} and Theorem \ref{theorem_lower_bound_regret_flexible_size} we need the following result on the Kullback-Leibler divergence of categorical probability distributions, which is Lemma 3 in \citet{chen2018note}.
Throughout the proofs we let $\gamma \in (0,\infty)$ be some arbitrary degree of preciseness.
\begin{lemma} \label{lemma_categrotical_kl_dist}
	Let $P \sim \mbox{Cat}(p_1,\ldots,p_m),$ i.e.\ $P(i)=p_i$ \, for $i=1,\ldots,m$ and $\sum_{i=1}^m p_i =1,$ as well as $Q\sim \mbox{Cat}(q_1,\ldots,q_m),$  such that $q_i = p_i + \varepsilon_i$  and $|\varepsilon_i|<1$ for any $i=1,\ldots,m.$
	Then,
	$$	\Kl{P,\, Q} \leq \sum_{i=1}^m \frac{\varepsilon_i^2}{q_i}.		$$
\end{lemma}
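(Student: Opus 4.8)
The plan is to expand the Kullback--Leibler divergence by its definition and bound each summand with the elementary inequality $\log t \le t-1$, after which the coordinate-wise perturbations telescope away.

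First I would write $\Kl{P,\,Q} = \sum_{i=1}^m p_i \log\!\big(p_i/q_i\big)$, using the usual convention $0\log 0 = 0$ for any vanishing $p_i$; here $q_i>0$ on the support of $P$ is implicitly needed for the divergence (and the claimed bound) to be finite. Applying $\log t \le t-1$ with $t = p_i/q_i$ gives
$$ \log\!\big(p_i/q_i\big) \;\le\; \frac{p_i}{q_i} - 1 \;=\; \frac{p_i - q_i}{q_i} \;=\; -\frac{\varepsilon_i}{q_i}, $$
since $p_i - q_i = -\varepsilon_i$. Multiplying by $p_i \ge 0$ and summing over $i$ yields $\Kl{P,\,Q} \le -\sum_{i=1}^m p_i\varepsilon_i/q_i$.

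Next I would substitute $p_i = q_i - \varepsilon_i$ into this bound, obtaining
$$ -\sum_{i=1}^m \frac{p_i\varepsilon_i}{q_i} \;=\; -\sum_{i=1}^m \frac{(q_i-\varepsilon_i)\,\varepsilon_i}{q_i} \;=\; -\sum_{i=1}^m \varepsilon_i \;+\; \sum_{i=1}^m \frac{\varepsilon_i^2}{q_i}. $$
Finally, since both $P$ and $Q$ are probability vectors, $\sum_{i=1}^m \varepsilon_i = \sum_{i=1}^m q_i - \sum_{i=1}^m p_i = 1 - 1 = 0$, so the first sum drops out and we are left with $\Kl{P,\,Q} \le \sum_{i=1}^m \varepsilon_i^2/q_i$, which is the assertion.

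There is essentially no serious obstacle here: the argument is just the standard ``linearize the logarithm'' estimate combined with the observation that the coordinate perturbations between two probability vectors sum to zero. The only points requiring a word of care are the degenerate terms with $p_i=0$ (handled by $0\log 0 = 0$, where the per-term bound holds trivially) and the tacit assumption that $q_i > 0$ whenever $p_i>0$, without which both sides would be infinite.
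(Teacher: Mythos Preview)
Your argument is correct: the inequality $\log t \le t-1$ applied termwise, followed by the substitution $p_i = q_i - \varepsilon_i$ and the observation $\sum_i \varepsilon_i = 0$, gives exactly the stated bound. The paper does not actually prove this lemma but simply cites it as Lemma~3 in \cite{chen2018note}, so there is no in-paper proof to compare against; your self-contained derivation is the standard one and would serve perfectly well in its place.
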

\noindent Moreover, we will need the following auxiliary result for all lower bound results.
\begin{lemma} \label{lemma_auxiliary_ineq_lower_bounds}
	For any $\delta\in(0,1)$ and any $\gamma \in (0,\infty)$ it holds that
	$$	1-(1-\delta)^{1/\gamma} \geq \min\{1,1/\gamma\} \, \delta.			$$
\end{lemma}
\begin{proof}
	First, consider the case $\gamma \in (0,1].$ 
	Then the assertion follows immediately as the left-hand side of the inequality is monotonically decreasing with $\gamma$ and for $\gamma=1$ the inequality is valid.

	Next, let us consider the case $\gamma \in (1,\infty).$
	The assertion is equivalent to showing that $f(x)=1-x\delta-(1-\delta)^x$ is non-negative for $x\in (0,1).$
	The first and second derivatives are respectively
	\begin{align*}
	f'(x) &= -\delta - \log(1-\delta) (1-\delta)^x, \\
	f''(x)&=  - \log(1-\delta)^2 (1-\delta)^x.
	\end{align*}
	By straightforward computations it can be shown that $f$ has a global maximum on $(0,1)$ at $x_{max}=\frac{\log\left(  \frac{-\delta}{\log(1-\delta)} \right)}{\log(1-\delta)}$ and $f$ is strictly increasing on $(0,x_{max})$ and strictly decreasing on $(x_{max},1).$
	As $\lim\limits_{x\to 0} f(x) = \lim\limits_{x\to 1} f(x) = 0,$ we can conclude the lemma.
\end{proof}

\begin{proof}[\sl Proof of Theorem \ref{theorem_lower_bound_regret}]
	We will use a similar proof technique as in \citet{chen2018note}.
	Let $\algo$ be some arbitrary algorithm suggesting the $\pullnumber$-sized subsets (preselections) $(S_t^\algo)_{t\in[\timehorizon]} \subset \subsets.$
	For a set $S \in \subsets$ we write $\skill_S=(\skill_S(1),\ldots,\skill_S(\items)) $ to denote the score parameter of the PL-model with components given by
	\begin{align*} 
	%
	\skill_S(i)	:= \begin{cases}
	1, & i\in S, \\
	1-\varepsilon, & i\notin S,
	\end{cases}		
	\end{align*}
	where $\varepsilon \in (0,1/2)$ is some hardness parameter specified below.
	%
	%
	Note that for any $S \in \subsets$ the score parameter $\skill_S$ is an element of the parameter space $\paramspace$.
	For sake of convenience, we will write $\P_S$ and $\E_S$ to express the law and expectation associated with the parameter $\skill_S,$ i.e., $\P_S = \P_{\skill_S}.$
	Recall the decomposition in (\ref{def_modeling_scores}) such that we have $\skill_S(i) = v_S(i)^\gamma$ for some suitable $v_S(i)$'s respectively and we define  $v_S$ in the same spirit as  $\skill_S.$

	First, for any $S,\tilde S \in \subsets$ with $S \neq \tilde S$ it holds that
	\begin{align} \label{help_ineq_lower_bound_restricted}
	\begin{split}
	\reward(S;v_S,\gamma)  - \reward(\tilde S;v_S,\gamma)	
	\geq 1 - \frac{(\pullnumber-1)+(1-\varepsilon)^{\frac{(1+\gamma)}{\gamma}}}{\pullnumber - \varepsilon}   
	&=  \frac{(1-\varepsilon) (1-( 1- \varepsilon)^{\frac{1}{\gamma}} ) }{  \pullnumber - \varepsilon}	  
	>  \frac{\min\{1,1/\gamma	\} \, \varepsilon}{2 \, \pullnumber} ,
	\end{split}
	\end{align}
	where we used for the last step $1-\varepsilon\geq 1/2$ and $\pullnumber - \varepsilon < \pullnumber$ as well as Lemma \ref{lemma_auxiliary_ineq_lower_bounds}.
	%
	%
	For $i\in [\items]$ let $N_i(t) = \sum_{s=1}^t 1_{ \{i \in S_s^\algo\}}$ denote the number of times an arm $i$ is part of a preselection  till time instance $t$ suggested by some algorithm $\algo.$
	In particular, write $N_i = N_i(\timehorizon),$ then (\ref{help_ineq_lower_bound_restricted}) implies
	\begin{align} \label{help_ineq_sec_lower_bound_restricted}
	\begin{split}
	\E_S \sum_{t=1}^\timehorizon \, \reward(S,\skill_S) - \reward(S_t^\algo,\skill_S) 
	&\geq  \frac{\min\{1,1/\gamma	\} \, \varepsilon}{2 \, \pullnumber}  \, \sum_{ i \notin S}  \E_S N_i.
	\end{split}
	\end{align}
	We can bound the expected regret  from below as follows
	\begin{align*}
	\sup_{\skill \in \paramspace} \, \E_{\skill} \regret(\timehorizon) 
	&\geq \sup_{S \in \subsets} \, \E_S \, \regret(\timehorizon) 
	=  \sup_{S \in \subsets} \E_S \sum_{t=1}^\timehorizon \, \reward(S,\skill_S) - \reward(S_t^\algo,\skill_S) \\
	&\geq \frac{1}{\binom{\items}{\pullnumber}} \sum_{S \in \subsets} \E_S \sum_{t=1}^\timehorizon \, \reward(S,\skill_S) - \reward(S_t^\algo,\skill_S) \\
	&\geq \frac{1}{\binom{\items}{\pullnumber}} \sum_{S \in \subsets} \sum_{i \notin S} \frac{\min\{1,1/\gamma	\} \, \varepsilon}{2 \, \pullnumber}  \, \E_S N_i  \\
	&=
	\frac{\min\{1,1/\gamma	\} \, \varepsilon}{2 }  \, \Big( \timehorizon - 
	\frac{1}{\pullnumber \, \binom{\items}{\pullnumber}} \sum_{S \in \subsets} \sum_{i \in S}  \, \E_S N_i \Big),  
	\end{align*}
	where we used for the last inequality (\ref{help_ineq_sec_lower_bound_restricted}) and for the last equality that
	$
	\timehorizon \, \pullnumber = 
	\sum_{i=1}^\items \E_S N_i = 
	\sum_{i \in S} \E_S N_i + \sum_{i \notin S} \E_S N_i.
	$
	Now,  using Formulas (5) -- (7) in \citet{chen2018note} and H\"older's resp.\ Jensen's inequality as in Section 3.4 of \citet{chen2018note} one obtains
	\begin{align*}
	\sup_{\skill \in \paramspace} \E_{\skill} \regret(\timehorizon)  
	\geq 	
	\frac{\min\{1,1/\gamma	\} \, \varepsilon\,
		\timehorizon}{2}  
 \Big( \frac{2}{3} - \sup_{S' \in \subsetssmaller}  \, \sqrt{  \sum_{i \in S'} \frac{\Kl{\P_{S'},\P_{S'\cup\{i\}}}}  {2(\items-\pullnumber+1)} } \ \Big).	
	\end{align*}
	The Kullback-Leibler divergence in the latter display can be dealt with by the following lemma which is proved below.
	\begin{lemma} \label{lemma_kl_distance_restricted_bandits}
		For each $ S' \in \subsetssmaller$ and $i \in  S'$ the following bound is true
		$$	\Kl{ \P_{ S'}, \P_{ S'  \cup \{i\}}		}	\leq \frac{22 \, \varepsilon^2 \, \E_{ S'} N_i}{\pullnumber} .	$$
	\end{lemma}
	\noindent With Lemma \ref{lemma_kl_distance_restricted_bandits} we have that for any $ S' \in \subsetssmaller$
	\begin{align*}
	\sqrt{  \sum_{i \in S'} \frac{\Kl{\P_{S'},\P_{S'\cup\{i\}}}}  {2(\items-\pullnumber+1)} }
	\leq \sqrt{ \frac{11\, \varepsilon^2 \, \timehorizon}{\items} },
	\end{align*}
	since $ \sum_{i \in S'} \E_{ S'} N_i \leq \timehorizon \pullnumber.$
	Thus, choosing $\varepsilon=\min( C
	\sqrt{\nicefrac{\items}{\timehorizon  }}, 1/2)$ for some appropriate small constant $C>0,$  independent of $\timehorizon,\items$ and $\pullnumber,$
	we obtain the assertion.
\end{proof}
%
%
%
%

%
\begin{proof} [\sl Proof of Lemma \ref{lemma_kl_distance_restricted_bandits}]
	Let $\tilde S \in \subsets$ be arbitrary.
	Then $\P_{ S'}( \cdot |  \tilde S)$ denotes the (categorical) probability distribution on the set $\tilde S$ parameterized by $\skill_{S'},$ i.e.,
	$$	\P_{ S'}( j |  \tilde S) = \begin{cases}
	\frac{\skill_{S'}(j)}{\sum_{k \in \tilde S} \skill_{S'}(k)}, & j \in \tilde S, \\
	0, & \mbox{else.}
	\end{cases}	$$
	If $i \notin \tilde S$ then $\Kl{ \P_{ S'}(\cdot | \tilde S), \P_{ S'  \cup \{i\}}(\cdot | \tilde S)		} = 0,$ as both distributions coincide in this case.
	Thus, we have the following bound
	\begin{align} \label{kl_ineq_help}
	\begin{split}
	\Kl{ &\P_{ S'}, \P_{ S'  \cup \{i\}}		} 
	\leq \Kl{ \P_{ S'}(\cdot | \tilde S, \, i \in \tilde S), \P_{ S'  \cup \{i\}}(\cdot | \tilde S, \, i \in \tilde S)		}    \E_{ S'} N_i  ,
	\end{split}	
	\end{align}
	as $i \in \tilde S$ happens  $\E_{ S'} N_i$ times in expectation.
	We proceed by bounding the Kullback-Leibler-divergence on the right-hand side of (\ref{kl_ineq_help}).
	Define $J_+ = |\tilde S \cap S'|,$ 
	and $J_- = |\tilde S \cap (S')^{\complement}|.$ 
	Since $\tilde S \in \subsets$ it holds that $J_+ + J_-  = \pullnumber.$
	With this, the categorical probabilities for $j \in \tilde S$ are given by
	\begin{align*}
	\begin{split}
	p_j &:= \P_{ S'}( j | \tilde S, \, i \in \tilde S) 	= \frac{\skill_{ S'   } (j)}{   J_+  +   (1-\varepsilon)   J_-   } , 
	\\
	q_j &:= \P_{ S'  \cup \{i\}}( j | \tilde S, \, i \in \tilde S) 	= 
	\frac{\skill_{ S'   } (j)}{   J_+ + 1  +   (1-\varepsilon)  ( J_-  -1 ) }.
	\end{split}
	\end{align*}
	For $j \neq i$ it holds that $\nicefrac{(p_j-q_j)^2}{q_j} \leq  \nicefrac{8 \varepsilon^2}{\pullnumber^3}.$
	We show this exemplary for the case, where $j\neq i$ and $j \in \tilde S \cap S',$ while the case $j\neq i$ and  $j \notin \tilde S \cap S',$  can be dealt with similarly.
	It holds that $		 J_+  +   (1-\varepsilon)   J_- = \pullnumber - \varepsilon J_- $ and $	 J_+ + 1  +   (1-\varepsilon)  ( J_-  -1 ) = \pullnumber + \varepsilon(1 - J_- ),$ so that 
	\begin{align*}
	p_j-q_j &=  \frac{ \varepsilon  }{  \big[\pullnumber - \varepsilon J_-  \big]\big[ \pullnumber + \varepsilon(1 - J_- )  \big]   }
	\end{align*}
	and with this
	\begin{align*}
	\frac{(p_j-q_j)^2}{q_j} &= \frac{ \varepsilon^2   }{   \big[\pullnumber - \varepsilon J_-  \big]^2\big[ \pullnumber + \varepsilon(1 - J_- )  \big]    }
	\leq   \frac{ 8 \varepsilon^2}{\pullnumber^3},
	\end{align*}
	as the terms inside the squared brackets are respectively greater than $\pullnumber/2,$ since $\varepsilon\in (0,1/2)$ and $|J_+|,| J_-|\leq \pullnumber.$ 
	If $j=i,$ then $\nicefrac{(p_j-q_j)^2}{q_j} \leq  \nicefrac{ 20 \varepsilon^2}{\pullnumber}.$
	%
	%
	Indeed, we have 
	\begin{align*}
	p_j-q_j &=  \frac{ \varepsilon \big(	1 - \pullnumber - \varepsilon(1  - J_-)		\big)    }{   \big[\pullnumber - \varepsilon J_-  \big]\big[ \pullnumber + \varepsilon(1 - J_- )  \big]   },
	\end{align*}  
	so that 
	\begin{align*}
	\frac{(p_j-q_j)^2}{q_j} &= \frac{ \varepsilon^2  \big(	1 - \pullnumber - \varepsilon(1  - J_-)		\big)^2 }{    \big[\pullnumber - \varepsilon J_-  \big]^2  \big[ \pullnumber + \varepsilon(1 - J_- )  \big]   }
	\leq  \frac{20  \varepsilon^2}{\pullnumber},
	\end{align*}
	since $\big(	1 - \pullnumber - \varepsilon(J_+ - J_-)		\big)^2 \leq 2 \pullnumber^2 + 2 \varepsilon^2 \pullnumber^2 \leq \nicefrac{5 \pullnumber^2}{2}. $
	Note that $|p_j-q_j|<1$ for each case, so that by using Lemma \ref{lemma_categrotical_kl_dist} and $\pullnumber\geq 2$ we obtain for Equation (\ref{kl_ineq_help}) that
	\begin{align*}
	\begin{split}
	\Kl{ \P_{ S'}, \P_{ S'  \cup \{i\}}		}  
	\leq \E_{ S'} N_i \cdot \Big(	 \frac{(\pullnumber-1) 8 \varepsilon^2}{\pullnumber^3} + \frac{20 \varepsilon^2}{\pullnumber}		\Big) \leq  \E_{ S'} N_i \cdot \frac{22 \varepsilon^2}{\pullnumber},
	\end{split}
	\end{align*}
	which completes the proof.
\end{proof}


\begin{proof} [\sl Proof of Theorem \ref{theorem_lower_bound_regret_flexible_size} (i)]
	Let $\algo$ be some arbitrary algorithm suggesting the subsets $(S_t^\algo)_{t\in[\timehorizon]} \subset \powerset.$
	%
	In the following we define two problem instances characterized by score parameters $\skill^{(1)}, \skill^{(2)} \in \paramspace$ such that 
	\begin{align} \label{ineq_lower_bound_idea}
	\begin{split}
	\inf_{\algo} \, \Big\{\E_{\skill^{(1)}}^\algo\big(	\regret(\timehorizon) \big) + \E_{\skill^{(2)}}^\algo\big( \regret(\timehorizon)	\big) \Big\} 
	\geq 	\check C \sqrt{ \timehorizon},
	\end{split}
	\end{align}	
	where the infimum is taken over all terminating algorithms $\algo$ for the flexible Pre-bandit problem and $\check C>0$ is a constant similar to $C$ as in the assertion.
	The proof will be then complete due to 
	\begin{align*}
	\begin{split}
	\inf_{\algo} \sup_{\skill \in \paramspace} \E_\skill^\algo 	(\regret(\timehorizon)) 
	&\geq \frac12	\inf_{\algo} \, \Big\{ \E_{\skill^{(1)}}^\algo\big(	\regret(\timehorizon) \big) + \E_{\skill^{(2)}}^\algo\big( \regret(\timehorizon)	\big) \Big\}.	
	\end{split}
	\end{align*}
	%
	%
	Thus, we proceed by showing (\ref{ineq_lower_bound_idea}).

	The observation at $t$ under the PL model assumption for the algorithm $\algo$ for an instance with score parameter $\skill$ is a random sample of $ P_{S_t^\algo,\skill} =	P_{S_t^\algo},$ where 
	\begin{align} \label{defi_categorical_distr}
	P_{S_t^\algo,\skill}(i) := \begin{cases}
	\frac{\skill_i}{\sum_{j \in S_t^\algo}  \skill_j}, & i\in S_t^\algo, \\
	0, &\mbox{else.}	
	\end{cases}
	\end{align}	
	The probability distribution with respect to $\algo$ and $\skill$ is denoted by $\P_{\skill}^\algo = \P_\skill$ and the corresponding expectation by $\E_\skill^\algo = \E_\skill.$
	The regret of $\algo$ for a PL model with parameter $\skill$ over the time horizon $\timehorizon$ is 
	%
	\begin{align}  \label{repr_expected_regret_algo}
	\begin{split}
	\E_\skill^\algo  \big( \regret(\timehorizon) \big) 
	= \sum_{t=1}^\timehorizon \E_\skill^\algo \big(	\reward(\optsubset)  - \reward(S_t^\algo)		\big) 
	= \sum_{S \in \powerset}  \big( \reward(\optsubset)  - \reward(S)		\big) \E_\skill^\algo(N_S(T)),
	\end{split}
	\end{align}
	where $N_S(t) = \sum_{s=1}^t 1_{ \{S_s^\algo =	S\}} $ denotes the number of times the subset $S \in \powerset$ was suggested by $\algo$ till time $t \in [\timehorizon].$
	Note that we suppressed here the dependency of $\optsubset$ on $\skill$ in the notation for sake of brevity.
	Next, define
	\begin{align}
	\label{defi_score_param_lower_bound_flexible_size}
	\begin{split}
	\skill^{(1)} &:= \Big( 1, 1-\varepsilon, \skillmin, \ldots,\skillmin		\Big),  \qquad \mbox{and} \qquad
	\skill^{(2)} := \Big( 1-\varepsilon, 1,\skillmin, \ldots,\skillmin		\Big),
	\end{split}
	\end{align}
	where $\varepsilon \in (0,1-\skillmin)$ is a hardness parameter of the instances, which will be specified below.
	Note that both score parameters are elements of $\paramspace$ and only differ in two of the $\items$ components.
	It is easy to see that for any $S \in \powerset\backslash\{ 1  \}$ and $S' \in \powerset\backslash\{ 2  \}$ one has that
	\begin{align} \label{ineq_regret_lower_bounds_first_flex}
	\begin{split}
	  \reward(\{ 1  \},\skill^{(1)})  - \reward(S,\skill^{(1)})  	\geq \min\{1,1/\gamma\} \, \varepsilon, \qquad \mbox{and} \qquad
	& 	\reward(\{ 2  \},\skill^{(2)})  - \reward(S',\skill^{(2)}) 	\geq \min\{1,1/\gamma\} \, \varepsilon.
	\end{split}
	\end{align}
	Indeed, recall the decomposition of $\theta$ in (\ref{def_modeling_scores}) and obtain
	\begin{align*}
	\begin{split}
	\reward(\{ 1  \},\skill^{(1)})  - \reward(S,\skill^{(1)}) 
	\geq 1 - \frac{(1-\varepsilon)^{\frac{1+\gamma}{\gamma}}}{1-\varepsilon} 
	= 1- ( 1- \varepsilon)^{\frac{1}{\gamma}} 
	&\geq \min\{1,1/\gamma\} \, \varepsilon.
	\end{split}	
	\end{align*}
	The inequality $\reward(\{ 2  \},\skill^{(2)})  - \reward(S',\skill^{(2)}) \geq \min\{1,1/\gamma\} \, \varepsilon$ can be shown similarly.
	Clearly, the optimal subset to suggest for the problem instance characterized by $\skill^{(1)}$ is $\{ 1  \},$ while $\{ 2 \}$ is optimal for the other scenario associated with $\skill^{(2)}.$
	Suggesting other subsets respectively results in an at least linear regret in the hardness parameter $\varepsilon.$
	By means of representation (\ref{repr_expected_regret_algo}) and (\ref{ineq_regret_lower_bounds_first_flex}) it follows that for $i=1,2$
	\begin{align*}
	\begin{split}
	\E_{\skill^{(i)}}^\algo  &\big( \regret(\timehorizon) \big) 
	> \P_{\skill^{(i)}}^\algo \big( N_{\{1\}}(\timehorizon) \leq \timehorizon/2 \big) \, \frac{ \min\{1,1/\gamma\} \, \varepsilon \timehorizon}{2  }.
	%
	%
	\end{split}
	\end{align*} 
	%
	%
	The inequalities are intuitive: if the optimal set $\{1\}$ for the parameter $\skill^{(1)}$ is suggested at most $\timehorizon/2$ times, then one obtains a regret of at least $ \varepsilon$  for the suggested sets in the remaining cases, which occur at least $\timehorizon/2$ times.
	Similarly, if the suboptimal set  $\{1\}$ for the problem instance with $\skill^{(2)}$ is suggested at least $\timehorizon/2$ times, then one obtains a regret of at least $ \varepsilon$ in all these timesteps.
	The latter display implies
	\begin{align*}
	\E_{\skill^{(1)}}^\algo\big(	\regret(\timehorizon) \big) + \E_{\skill^{(2)}}^\algo\big( \regret(\timehorizon)	\big)  
	&>  \frac{ \min\{1,1/\gamma\} \, \varepsilon \timehorizon}{2  } \big[ \P_{\skill^{(1)}}^\algo \big( N_{\{1\}}(\timehorizon) \leq \timehorizon/2 \big) + \P_{\skill^{(2)}}^\algo \big( N_{\{1\}}(\timehorizon) > \timehorizon/2 \big) \big] \\
	&\geq \frac{ \min\{1,1/\gamma\} \, \varepsilon \timehorizon}{2  } \exp\big[ -\Kl{ \P_{\skill^{(1)}}^\algo, \P_{\skill^{(2)}}}^\algo			\big],
	\end{align*}
	where we used in the last line a version of Pinkser's inequality, see Theorem 14.2 in \citet{lattimore2018bandit}. \\
	We proceed by analyzing the Kullback-Leibler distance in the latter display by means of Lemma \ref{lemma_categrotical_kl_dist} and the 
	following decomposition of the Kullback-Leibler divergence for the family of probability distributions $(\P_\skill^\algo)_{\skill \in \paramspace}$ which can be shown analogously to Lemma 15.1 in \citet{lattimore2018bandit}.
	\begin{lemma} \label{lemma_kl_decomp}
		Let $\skill,\skill' \in \paramspace,$ then
		$$	\Kl{\P_\skill^\algo, \, \P_{\skill'}^\algo} = \sum_{S \in \powerset} \, \E_\skill(N_S(T)) \, \Kl{P_{S,\skill}, P_{S,\skill'}}.		$$
	\end{lemma}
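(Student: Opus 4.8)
The plan is to follow the standard divergence-decomposition argument for stochastic bandits (Lemma 15.1 in \cite{lattimore2018bandit}), adapted to the present setting where an ``action'' is a subset $S \in \powerset$ and an ``observation'' is the selector's choice, distributed according to the categorical law $P_{S,\skill}$ from (\ref{defi_categorical_distr}). First I would fix the canonical probability space on which the interaction between $\algo$ and a PL environment with parameter $\skill$ lives: a run produces a history $H_\timehorizon = (S_1, i_1, \ldots, S_\timehorizon, i_\timehorizon)$, where, conditionally on the past history $H_{t-1}$ (and on the algorithm's internal randomization), $S_t$ is drawn from the algorithm's parameter-free policy kernel $\algo(\cdot \mid H_{t-1})$, and $i_t$ is drawn from $P_{S_t,\skill}$. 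Writing $p_\skill$ for the density of $H_\timehorizon$ under $\P_\skill^\algo$ with respect to a suitable product reference measure, the chain rule gives the factorization
$$p_\skill(H_\timehorizon) = \prod_{t=1}^\timehorizon \algo(S_t \mid H_{t-1}) \cdot P_{S_t,\skill}(i_t),$$
and crucially the factors $\algo(S_t \mid H_{t-1})$ do not depend on $\skill$.

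Next I would compute $\Kl{\P_\skill^\algo, \P_{\skill'}^\algo} = \E_\skill\big[\log\!\big(p_\skill(H_\timehorizon)/p_{\skill'}(H_\timehorizon)\big)\big]$. Substituting the factorization, the algorithm's contributions cancel, leaving $\Kl{\P_\skill^\algo, \P_{\skill'}^\algo} = \E_\skill \sum_{t=1}^\timehorizon \log\big( P_{S_t,\skill}(i_t)/P_{S_t,\skill'}(i_t)\big)$. Conditioning the $t$-th summand on the $\sigma$-algebra generated by $(H_{t-1}, S_t)$ and using that $S_t$ is measurable with respect to it while $i_t \mid S_t \sim P_{S_t,\skill}$, the tower property turns the inner expectation into $\Kl{P_{S_t,\skill}, P_{S_t,\skill'}}$. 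Finally, writing $\Kl{P_{S_t,\skill}, P_{S_t,\skill'}} = \sum_{S \in \powerset} 1_{\{S_t = S\}}\,\Kl{P_{S,\skill}, P_{S,\skill'}}$ and exchanging the finite sum with the expectation yields $\sum_{S\in\powerset} \E_\skill(N_S(\timehorizon))\,\Kl{P_{S,\skill}, P_{S,\skill'}}$, which is the claim.

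The main technical point to be careful about --- rather than a genuine obstacle --- is well-definedness of all densities and divergences: one must check that $P_{S,\skill}$ and $P_{S,\skill'}$ are mutually absolutely continuous for every $S \in \powerset$, so that $\log\big(P_{S,\skill}(i_t)/P_{S,\skill'}(i_t)\big)$ is finite $\P_\skill^\algo$-almost surely and each $\Kl{P_{S,\skill}, P_{S,\skill'}}$ is finite. This is exactly guaranteed by the restriction $\skill \in \paramspace \subseteq [\skillmin,1]^\items$ with $\skillmin > 0$: for any $S$, every $i \in S$ receives strictly positive mass under both $P_{S,\skill}$ and $P_{S,\skill'}$, and the two laws share the common support $S$. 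A secondary bookkeeping point is accommodating internal randomization of $\algo$ --- one augments the canonical space with the algorithm's random seed, which, being independent of $\skill$, factors out exactly like the policy kernels --- together with the routine measurability and integrability checks needed to justify the tower property and the interchange of sum and expectation; these are immediate since both $\powerset$ and $[\timehorizon]$ are finite.
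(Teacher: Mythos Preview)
Your proposal is correct and matches the paper's own treatment: the paper does not spell out a proof but simply states that the lemma ``can be shown analogously to Lemma 15.1 in \cite{lattimore2018bandit}'', which is precisely the divergence-decomposition argument you carry out. Your added remarks on mutual absolute continuity (ensured by $\skillmin>0$) and on handling the algorithm's internal randomization are appropriate care points that the paper leaves implicit.
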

	Note that by definition of the score parameters in (\ref{defi_score_param_lower_bound_flexible_size}) it holds that $\Kl{P_{S,\skill^{(1)}}, P_{S,\skill^{(2)}}  } = 0 $ for any subset $S \in \powerset$ which does not contain $\{1\}$ and $\{2\},$ as both distributions are the same for such subsets. 
	%
	%
	For the remaining subsets $S'$, which are of order $\Oterm(2^{n-2})$ many, Lemma \ref{lemma_categrotical_kl_dist} yields $\Kl{P_{S',\skill^{(1)}}, P_{S',\skill^{(2)}}  } \leq   {2 \skillmin^{-1} \varepsilon^2}$ (cf.\ the proof of Lemma \ref{lemma_kl_distance_restricted_bandits}). 
	We distinguish two cases in the following.

	\emph{Case 1: $\timehorizon>2^\items-1.$}

	As $\sum_{S \in \powerset} \E_\skill (N_S(\timehorizon)) = \timehorizon$ for any $\skill \in \paramspace$ it is true that $\E_\skill (N_S(\timehorizon)) \leq \nicefrac{\timehorizon}{2^{\items}-1}$ for each $S \in \powerset$ by the pigeonhole principle.
	Thus, by means of Lemma \ref{lemma_kl_decomp} obtain 
	$	\Kl{\P_{\skill^{(1)}}, \P_{\skill^{(2)}}}	 \leq \widetilde C \, \timehorizon \varepsilon^2,		$
	where  $\widetilde C>0$ is some constant independent of $\items$ and $\timehorizon.$
	Hence,
	\begin{align*}
	\E_{\skill^{(1)}}^\algo &\big(	\regret(\timehorizon) \big) + \E_{\skill^{(2)}}^\algo\big( \regret(\timehorizon)	\big) 
	\geq 	\frac{  \min\{1,1/\gamma\} \, \varepsilon \timehorizon}{2  }  \exp\Big(-\widetilde C \timehorizon \varepsilon^2 \Big).	
	\end{align*}

	\emph{Case 2: $\timehorizon\leq 2^\items-1.$}
	
	In this case, note that there are at least $2^\items-1 - \timehorizon$ many zero summands in $\sum_{S \in \powerset} \E_\skill (N_S(\timehorizon))$ as the sum equals $\timehorizon.$
	Therefore, similar to the case before obtain by means of Lemma \ref{lemma_kl_decomp}   that
	$	\Kl{\P_{\skill^{(1)}}, \P_{\skill^{(2)}}}	 \leq \widetilde C   \timehorizon  \varepsilon^2$ for some constant $\widetilde C>0$ independent of $\items$ and $\timehorizon.$ 
	Consequently,
	\begin{align*}
	\E_{\skill^{(1)}}^\algo &\big(	\regret(\timehorizon) \big) + \E_{\skill^{(2)}}^\algo\big( \regret(\timehorizon)	\big) 
	\geq 	\frac{  \min\{1,1/\gamma\} \, \varepsilon \timehorizon}{2  }  \exp\Big(-\widetilde C \timehorizon \varepsilon^2 \Big).	
	\end{align*}

	By choosing in both cases $\varepsilon = \min (\bar{ C} \,  \sqrt{ 1/\timehorizon}, \, 1- \skillmin)$ for some appropriate constant $\bar C >0$  we obtain the assertion with some constants $C,C'>0$ which are independent of $\timehorizon,\pullnumber$ and $\items.$
	%
	%
	%
\end{proof}


\begin{proof} [\sl Proof of Theorem \ref{theorem_lower_bound_regret_flexible_size} (ii)]
	
	For the gap-dependent lower bound we will make use of the following result, which is Lemma 1 in \citet{kaufmann2016complexity}.
	
	\begin{lemma} \label{lemma_kaufmann_et_al}
		Let $\nu$ and $\nu'$ be two MAB models with $\items$ arms and $\nu_i$ resp.\ $\nu_i'$ denotes the reward distribution for arm $i\in [\items]$ respectively.
		Let $A_t$  denote the arm played at round $t$ and $R_t$ be the corresponding observed reward.
		Moreover, let $\F_t=\sigma(A_1,R_1,\ldots,A_t,R_t)$ be the sigma algebra generated by the observations till time instance $t.$
		Suppose that $\nu_i$ and $\nu_i'$ are mutually absolutely continuous for each $i\in [\items],$ then it holds that  
		$$		\sum_{i\in[\items]} \E_{\nu}[N_i(T)] \Kl{\nu_i,\nu_i'} \geq d(\E_{\nu}(\mathcal E) ,\E_{\nu'}(\mathcal E) )			$$
		for any $\F_T$-measurable random variable $\mathcal E.$ 	
		Here, $d(x,y)= x \log(\nicefrac{x}{y}) + (1-x) \log(\nicefrac{(1-x)}{(1-y)})$ and $N_i(t)=\sum_{s=1}^t 1_{i_s^\algo=i}$ is the number of times an algorithm $\algo$ plays arm $i$ till time instance $t.$
	\end{lemma}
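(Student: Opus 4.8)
The plan is to prove this change-of-measure inequality by the standard two-step information-theoretic route: first establish a \emph{divergence decomposition} identity that rewrites the Kullback-Leibler divergence between the two trajectory laws as the sum on the left-hand side, and then invoke the \emph{data-processing inequality} to descend from the full trajectory divergence to the binary divergence $d(\E_{\nu}(\mathcal E),\E_{\nu'}(\mathcal E))$ of the statistic $\mathcal E$. Throughout, the assumed mutual absolute continuity of $\nu_i$ and $\nu_i'$ guarantees that all likelihood ratios and densities are well defined.

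For the first step, let $\P_\nu$ and $\P_{\nu'}$ denote the laws of the complete trajectory $(A_1,R_1,\ldots,A_T,R_T)$ under the two models. I would factorize the joint density via the chain rule over rounds. The decisive observation is that the conditional law of the action $A_t$ given the past $\F_{t-1}$ is dictated solely by the fixed (possibly randomized) algorithm $\algo$ and is therefore identical under $\nu$ and $\nu'$, whereas the conditional law of the reward $R_t$ given $A_t=i$ equals $\nu_i$ under $\nu$ and $\nu_i'$ under $\nu'$. Hence the log-likelihood ratio collapses to $\log\frac{d\P_\nu}{d\P_{\nu'}}=\sum_{t=1}^{T}\log\frac{d\nu_{A_t}}{d\nu'_{A_t}}(R_t)$, the policy-induced factors cancelling. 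Taking expectation under $\nu$, conditioning on $A_t$ at each round, and using the counting identity $\sum_{t=1}^{T}\mathbf 1\{A_t=i\}=N_i(T)$ then yields
$$\Kl{\P_\nu,\P_{\nu'}}=\sum_{i\in[\items]}\E_{\nu}[N_i(T)]\,\Kl{\nu_i,\nu_i'}.$$

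For the second step, since $\mathcal E$ is $\F_T$-measurable it is a measurable function of the trajectory; composing this with the Bernoulli kernel that, given the $[0,1]$-valued quantity $\mathcal E$, outputs $1$ with probability $\mathcal E$ defines a Markov kernel whose output is $\mathrm{Ber}(\E_{\nu}(\mathcal E))$ under $\nu$ and $\mathrm{Ber}(\E_{\nu'}(\mathcal E))$ under $\nu'$. By contraction of KL divergence under Markov kernels (the data-processing inequality),
$$\Kl{\P_\nu,\P_{\nu'}}\geq\Kl{\mathrm{Ber}(\E_{\nu}(\mathcal E)),\mathrm{Ber}(\E_{\nu'}(\mathcal E))}=d\big(\E_{\nu}(\mathcal E),\E_{\nu'}(\mathcal E)\big),$$
the final equality being precisely the definition of the binary relative entropy $d$. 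Chaining this with the decomposition from the first step delivers the asserted inequality.

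The main obstacle is the rigorous justification of the divergence decomposition under an adaptive algorithm: one must set up the trajectory probability space carefully, verify the requisite measurability, and make precise the cancellation of the policy-induced conditional laws, which hinges on $\algo$ being a fixed, model-independent policy so that only the reward kernels differ between $\nu$ and $\nu'$. Once this identity is established, the data-processing step is entirely routine, so the decomposition is where the genuine work lies.
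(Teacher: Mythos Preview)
Your proof is correct and follows the standard route (divergence decomposition plus data-processing/Bernoulli contraction), which is exactly how the original reference establishes it. Note, however, that the paper does not prove this lemma at all: it is simply quoted as Lemma~1 of \cite{kaufmann2016complexity} and used as a black box in the proof of Theorem~\ref{theorem_lower_bound_regret_flexible_size}(ii), so there is no in-paper proof to compare against.
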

	
	\newcommand{\skillone}{\skill^{(1)}}
	\newcommand{\skilli}{\skill^{(i)}}
	
	\noindent In the following, we will adapt the proof of Theorem 3 in \cite{saha2019regret} to our case, which boils down to incorporating our (different) notion of regret into their proof.
	\medskip
	
	\noindent To make use of Lemma \ref{lemma_kaufmann_et_al} we embed the flexible Pre-Bandit problem into a classical MAB problem by considering each subset $S \in \powerset$ as an arm.
	Moreover, we define the score parameters 
	\begin{align} \label{defi_score_para_gap_dep_lower}
	\begin{split}
	\skillone &=(1,1-\Delta,\ldots,1-\Delta), \\ 
	\skilli &= \big(1,1-\Delta,\ldots,1-\Delta,1+\varepsilon,1-\Delta,\ldots, 1-\Delta\big), \quad i=2,\ldots,\items,
	\end{split}
	\end{align}
	where $\Delta \in (0,1-\skillmin)$ and $\varepsilon>0$ and the $i$-th component of $\skilli$ is $1+\varepsilon.$ 
	%
	%
	For $\skill\in\paramspace$ and $S\in \powerset$ let $P_{S,\skill}$ denote the categorical distribution as in (\ref{defi_categorical_distr}).
	Using Lemma \ref{lemma_kaufmann_et_al} with  $\nu_{S} = P_{S,\skillone}$ and $\nu_{S}'=P_{S,\skilli}$ for $i\neq 1$ for any $S\in \powerset$ as the reward distributions of the arms and the $\F_T$-measurable random variable $\mathcal E = \nicefrac{N_{\{i\}}(\timehorizon)}{\timehorizon},$ one has that
	\begin{align} \label{eq_lower_bound_gap_dep_main}
	\begin{split}
	\sum_{S \in \powerset} \E_{\skillone}[N_S(\timehorizon)] \, \Kl{P_{S,\skillone},P_{S,\skilli}} 
	&=\sum_{S \in \powerset} \E_{\skillone}[N_S(\timehorizon)] \, \Kl{\nu_S,\nu_S'} \\
	&\geq d(\E_{\skillone}[\nicefrac{N_{\{i\}}(\timehorizon)}{\timehorizon}] ,\E_{\skilli}[\nicefrac{N_{\{i\}}(\timehorizon)}{\timehorizon}] ).	
	\end{split}
	\end{align}
	Now, since $d(x,y) \geq (1-x)\log(\nicefrac{1}{(1-y)}) - \log(2) $ derive that 
	\begin{align*}
	d(\E_{\skillone}[\nicefrac{N_{\{i\}}(\timehorizon)}{\timehorizon}] ,\E_{\skilli}[\nicefrac{N_{\{i\}}(\timehorizon)}{\timehorizon}] ) 
	&\ \geq \Big(	1 - \frac{\E_{\skillone}[N_{\{i\}}] }{\timehorizon}		\Big) \log\Big(	\frac{\timehorizon}{\timehorizon- \E_{\skilli}[N_{\{i\}}]}		\Big) - \log(2).
	\end{align*}
	As we assume that $\algo$ is a no-regret algorithm, we have that $\E_{\skillone}[N_{\{i\}}] = o(\timehorizon^\alpha)$ and $\timehorizon- \E_{\skilli}[N_{\{i\}}] = \E_{\skilli}[\sum_{S\in \powerset, S\neq \{i\} }  N_{\{i\}}] = o(\timehorizon^\alpha) $ for some $\alpha\in(0,1].$
	Hence, by dividing the latter display by $\log(\timehorizon)$ and by considering $\timehorizon\to\infty$ one obtains
	\begin{align*}
	\lim_{\timehorizon \to \infty} \frac{ d(\E_{\skillone}[\nicefrac{N_{\{i\}}(\timehorizon)}{\timehorizon}] ,\E_{\skilli}[\nicefrac{N_{\{i\}}(\timehorizon)}{\timehorizon}] )}{\log(\timehorizon)} 
	&\geq 	\lim_{\timehorizon \to \infty}  \frac{ 1}{\log(\timehorizon)} \Big(	1 -  o(\timehorizon^{\alpha-1}) 	\Big) \log\Big( \frac{T}{ o(\timehorizon^{\alpha})} 		\Big) - \frac{ \log(2)}{\log(\timehorizon)} \\
	&\geq (1-\alpha).
	\end{align*}
	Hence, dividing (\ref{eq_lower_bound_gap_dep_main}) by $\log(\timehorizon)$ and considering the limit case obtain 
	\begin{align} \label{eq_lower_bound_gap_dep_main_sec}
	\begin{split}
	\lim_{\timehorizon \to \infty}  \frac{ 1}{\log(\timehorizon)}  \sum_{S \in \powerset} \E_{\skillone}&[N_S(\timehorizon)] \, \Kl{P_{S,\skillone},P_{S,\skilli}} 
	\geq (1-\alpha).
	\end{split}
	\end{align}
	The Kullback-Leibler divergence in (\ref{eq_lower_bound_gap_dep_main_sec}) can be bounded by the following lemma, which first statement can be shown by following the lines of display (2) in \citet{saha2019regret}, while the second statement is straightforward from the choice of the score parameters in (\ref{defi_score_para_gap_dep_lower}).
	\begin{lemma} \label{lemma_KL_gap_dependent_aux}
		For each $i\neq 1$ it holds that
		$$	 \Kl{P_{S,\skillone},P_{S,\skilli}} \leq \frac{(\Delta+\varepsilon)^2}{(1-\Delta)|S|(1+\varepsilon)}.	$$
		Moreover, if $i \notin S$	or if $|S|=1,$ then $$ \Kl{P_{S,\skillone},P_{S,\skilli}} = 0.	$$
	\end{lemma}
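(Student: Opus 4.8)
The plan is to dispatch the second (vanishing) statement by inspection and then to obtain the quantitative bound from the categorical KL estimate of Lemma \ref{lemma_categrotical_kl_dist}, following the computation behind display (2) in \cite{saha2019regret}.

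For the vanishing statement, note that $\skillone$ and $\skilli$ differ only in the $i$th coordinate, where $\skillone$ carries $1-\Delta$ and $\skilli$ carries $1+\varepsilon$. Hence if $i\notin S$ the two vectors agree on all coordinates indexed by $S$, so the normalizing constants $\sum_{k\in S}\skillone(k)$ and $\sum_{k\in S}\skilli(k)$ coincide and $P_{S,\skillone}=P_{S,\skilli}$, whence the divergence is $0$. If $|S|=1$ both distributions are the point mass on the unique element of $S$, and again the divergence is $0$. It therefore remains to treat the case $i\in S$, $m:=|S|\ge 2$.

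For that case I would write $Z_1:=\sum_{j\in S}\skillone(j)$ and $Z_i:=\sum_{j\in S}\skilli(j)$. Two elementary facts drive the argument: (a) since the two score vectors differ only at coordinate $i$ (and there by $\Delta+\varepsilon$), $Z_i=Z_1+(\Delta+\varepsilon)$; and (b) every coordinate of $\skillone$ indexed by $S$ is at least $1-\Delta$, so $Z_1\ge m(1-\Delta)$. Setting $p_j=\skillone(j)/Z_1$ and $q_j=\skilli(j)/Z_i$ for $j\in S$ (all strictly between $0$ and $1$ because $m\ge 2$, so that $|q_j-p_j|<1$), Lemma \ref{lemma_categrotical_kl_dist} gives $\Kl{P_{S,\skillone},P_{S,\skilli}}\le\sum_{j\in S}(q_j-p_j)^2/q_j$. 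For $j\in S\setminus\{i\}$ one has $\skilli(j)=\skillone(j)$, so $q_j-p_j=-\skillone(j)(\Delta+\varepsilon)/(Z_1Z_i)$ and the corresponding summand is $\skillone(j)(\Delta+\varepsilon)^2/(Z_1^2Z_i)$; for $j=i$ a short computation using (a) gives $q_i-p_i=(\Delta+\varepsilon)\bigl(Z_1-(1-\Delta)\bigr)/(Z_1Z_i)$ and a summand $(\Delta+\varepsilon)^2\bigl(Z_1-(1-\Delta)\bigr)^2/\bigl(Z_1^2Z_i(1+\varepsilon)\bigr)$. Abbreviating $a:=Z_1-(1-\Delta)=\sum_{j\in S\setminus\{i\}}\skillone(j)$ and using $Z_i=a+1+\varepsilon$, the two contributions combine to
\[
\Kl{P_{S,\skillone},P_{S,\skilli}}\;\le\;\frac{(\Delta+\varepsilon)^2}{Z_1^2Z_i}\Bigl(a+\frac{a^2}{1+\varepsilon}\Bigr)=\frac{(\Delta+\varepsilon)^2\,a}{Z_1^2(1+\varepsilon)}.
\]

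Finally I would control $a/Z_1^2=\bigl(Z_1-(1-\Delta)\bigr)/Z_1^2$ by noting that the map $Z\mapsto\bigl(Z-(1-\Delta)\bigr)/Z^2$ is decreasing on $[2(1-\Delta),\infty)$; since $Z_1\ge m(1-\Delta)\ge 2(1-\Delta)$ by (b), this yields $a/Z_1^2\le (m-1)/\bigl(m^2(1-\Delta)\bigr)\le 1/\bigl(m(1-\Delta)\bigr)$, and substituting back gives exactly $\Kl{P_{S,\skillone},P_{S,\skilli}}\le(\Delta+\varepsilon)^2/\bigl((1-\Delta)|S|(1+\varepsilon)\bigr)$. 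The only mildly delicate point is the middle step: one has to keep straight which coordinates of the two score vectors agree and to spot the cancellation $1+\varepsilon+a=Z_i$ that collapses the bound to a single fraction; once facts (a) and (b) are in hand, everything else is routine algebra.
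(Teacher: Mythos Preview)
Your proposal is correct and follows essentially the same approach the paper indicates: it invokes Lemma~\ref{lemma_categrotical_kl_dist} and carries out the computation in the spirit of display~(2) of \cite{saha2019regret}, with the vanishing part handled by inspection. The paper does not spell out the algebra, so your derivation---in particular the cancellation $1+\varepsilon+a=Z_i$ and the monotonicity bound on $a/Z_1^2$---fills in exactly the details the paper defers to the reference.
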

	\noindent Using Lemma \ref{lemma_KL_gap_dependent_aux} we can derive from (\ref{eq_lower_bound_gap_dep_main_sec}) by multiplying with $\nicefrac{(1-\Delta)^2}{(\Delta+\varepsilon)} $ that
	\begin{align*} 
	\begin{split}
	\lim_{\timehorizon \to \infty}  \frac{ 1}{\log(\timehorizon)} \sum_{\stackrel{S \in \powerset\backslash\{i\},}{i \in S}  } \, \frac{ \E_{\skillone}[N_S(\timehorizon)] (1-\Delta)(\Delta+\varepsilon)}{|S|(1+\varepsilon)} 
	&\geq \frac{(1-\Delta)^2}{(\Delta+\varepsilon)} \, (1-\alpha).
	\end{split}
	\end{align*}
	Summing over $i\in\{2,\ldots,\items\}$ and taking the limit $\varepsilon \to 0$ in the latter display leads to 
	\begin{align} \label{eq_lower_bound_gap_dep_main_third}
	\begin{split}
	\lim_{\timehorizon \to \infty}  \frac{ 1}{\log(\timehorizon)}  \sum_{ i =2}^\items \sum_{  \underset{i \in S}{S \in \powerset\backslash\{i\},}  } \E_{\skillone}[N_S(\timehorizon)] \, \frac{(1-\Delta)\Delta}{|S|} 
	&\geq \frac{(1-\Delta)^2}{\Delta} \, (\items-1) \, (1-\alpha).
	\end{split}
	\end{align}
	Next, we bound the cumulative regret in (\ref{defi_regret}) for any algorithm $\algo$ for the flexible Pre-Bandit problem from below. For this purpose recall the decomposition in (\ref{def_modeling_scores}) and denote the $i$th component of $\skillone$ by $\skillone_i$ and let $v_i^{(1)}= (\skillone_i)^{1/\gamma}.$
	Hence, we get
	\begin{align*} 
	\E_{\skillone} \big( \regret(\timehorizon) \big) 
	&=  \sum_{ t =1}^{\timehorizon} \E_{\skillone}\big(\reward(\optsubset) - \reward(S_t^\algo)\big) \\
	&=  \sum_{ t =1}^{\timehorizon} \E_{\skillone}\Big(  v_1^{(1)}  - \frac{\sum_{ i \in S_t^\algo} \big(v_i^{(1)}\big)^{1+\gamma} }{\sum_{ i \in S_t^\algo} \big(v_i^{(1)}\big)^\gamma }\Big) \\
	&=   \E_{\skillone} \Big( \sum_{ t =1}^{\timehorizon} \sum_{S \in \powerset} 1_{S_t^\algo=S}  \,  \frac{\sum_{ i =2}^\items 1_{i\in S} \, \big(v_i^{(1)}\big)^\gamma  (v_1^{(1)}-v_i^{(1)})}{\sum_{ i =1}^\items 1_{i\in S} \big(v_i^{(1)}\big)^\gamma}\Big) \\
	&\geq   \min\{1,1/\gamma\}  \E_{\skillone} \Big( \sum_{ t =1}^{\timehorizon} \sum_{S \in \powerset} 1_{S_t^\algo=S}  \, \sum_{ i =2}^\items   \frac{ 1_{i\in S} \, (1-\Delta) \Delta }{|S|}\Big) \\
	&=   \min\{1,1/\gamma\}  \sum_{ i =2}^\items \sum_{S \in \powerset} \E_{\skillone} \Big( \sum_{ t =1}^{\timehorizon}  1_{S_t^\algo=S}  \Big) \,  1_{i\in S}  \,\frac{  (1-\Delta) \Delta }{|S|} \\
	&=  \min\{1,1/\gamma\}  \sum_{ i =2}^\items \sum_{S \in \powerset, \, i\in S} \E_{\skillone} (N_S[\timehorizon]) \,   \,\frac{  (1-\Delta) \Delta }{|S|},
	\end{align*}
	%
	where we used Lemma \ref{lemma_auxiliary_ineq_lower_bounds} for the inequality together with $\sum_{ i =1}^\items 1_{i\in S} \big(v_i^{(1)}\big)^\gamma \leq |S|.$
	With this obtain from (\ref{eq_lower_bound_gap_dep_main_third}) that if $\algo$ is a no-regret algorithm, then
	\begin{align*} 
	%
	\lim_{\timehorizon \to \infty}  \frac{ 1}{\log(\timehorizon)} &\E_{\skillone} \big( \regret(\timehorizon) \big) \geq \frac{ \min\{1,1/\gamma\} \cdot (1-\alpha)(1-\Delta)^2}{\Delta} \, (\items-1),
	%
	%
	\end{align*}
	which concludes the proof as $\Delta$ corresponds to $\min_{i\notin \optsubset} \skillmax - \skill_i$ for $\skill=\skillone$ and $(1-\alpha)(1-\Delta)^2$ is some constant independent of $\timehorizon$ and $\items.$
\end{proof}

\section{Proof of Theorem \ref{theorem_upper_bound_regret}} \label{sec:proof_TRCB}

We start by introducing the notation  for the rest of the proof and recalling the main terms of the TRCB algorithm. 
Thereafter we give an outline of the proof, before deriving the details.

\subsection{Notation and relevant terms}

\newcommand{\wiA}{w_{i,A}}
\newcommand{\wi}{w_{i}}
\newcommand{\wjA}{w_{j,A}}
\newcommand{\wj}{w_{j}}
\newcommand{\wJA}{w_{J,A}}
\newcommand{\oddijhat}{\hat O_{i,j}}

\newcommand{\timesi}{T_i}
\newcommand{\timesj}{T_j}

\newcommand{\wiJ}{w_{i,J}}
\newcommand{\wJi}{w_{J,i}}
\newcommand{\wij}{w_{i,j}}
\newcommand{\wijbar}{\overline{w}_{i,j}}
\newcommand{\wiJbar}{\overline{w}_{i,J}} 
\newcommand{\cij}{c_{i,j}}
\newcommand{\ciJ}{c_{i,J}}

\newcommand{\Ft}{\F_t}

\newcommand{\wji}{w_{j,i}}

Throughout $(S_t)_{t=1,\ldots,\timehorizon}$ denotes the suggested subsets (the preselections) of the TRCB algorithm  at each  time instance respectively and $(i_t)_{t=1,\ldots,\timehorizon}$ the corresponding decisions of the selector, i.e., $i_t \in S_t.$
Furthermore, let  $\gamma \in (0,\infty)$ be some arbitrary degree of preciseness.
Next, we clarify the notation as well as recall the main terms emerging in the TRCB algorithm.
We define
\begin{align} \label{defi_number_of_picks}
\wij(t) := \begin{cases}
\sum_{s=1}^{t-1} 1_{	\{  i_s= i , \, \{i,j\} \in S_s	\}	}, & t > 1, \\
0, & t=1,
\end{cases}
\end{align}
to denote the number of times $i$ has been picked by the selector till time instance $t,$ when $i$ and $j$ were both part of the preselection, while	$\wijbar(t)	:= \wij(t)  + \wji(t)$ is the number of times either $i$ or $j$ was picked till time instance $t,$ when both were part of the preselection.
The relative scores in (\ref{defi_relative_score}) are estimated in time instance $t$ by
\begin{align} \label{defi_estoddij}
\estoddij(t) := \begin{cases}
\frac{\wijbar(t) }{ \wji(t) } -1, & \wji(t) \neq 0, \\
\skillmin, & \mbox{else,}			
\end{cases} \quad i,j \in [\items].
\end{align}
The arm with the most picks till time instance $t$ is 
\begin{align} \label{defi_most_picked_arm}
J := J(t)	=  \argmax{i \in [\items]} \# \{w_{i,j}(t) \geq w_{j,i}(t)	 \ | \ j \neq i		\}.		
\end{align}	 
Note that in the following we will suppress its dependency on the time instance $t$ in the notation.
The (thresholded) random value inside the confidence region of $\estoddiJ(t)$ is
%
\begin{align*}
\oddiJtrcb(t) =    \skillmin\inv  \wedge \Big( \big( \estoddiJ + \Cshrink \, \beta_i(t)	\big)	\vee \skillmin \big)					\Big),
\end{align*}
for $i \neq J$ and $\oddiJtrcb(t)=1$ for $i=J,$
where 
\begin{align*}
\beta_i(t) &\sim \text{Unif} [ -\ciJ(t), \ciJ(t)	] , \qquad 
\ciJ(t)	= \sqrt{\frac{ 32 \,   \log(  \pullnumber \, t^{3/2})}{ \, \skillmin^4 \, \wiJbar(t) }},	 
\end{align*}
and $\Cshrink\in(0,1/2)$ is some finite constant.
Note that the $\beta_i$'s are mutually independent.
Recall the definition of regret for any time instance $t\in [\timehorizon]$ in (\ref{defi_regret}).
Due to (\ref{defi_expected_utility_with_rel_score}) we will consider the following scaled regret per time
\begin{align} \label{defi_regret_scaled}
\regpertime(t) := \rewardtilde(\optsubset;O_J,\gamma) - \rewardtilde(S_t;O_J,\gamma) = v_J\inv\, \rpreselect(S_t).
\end{align}
Finally, let
$\Ft$ denote the $\sigma$-algebra  generated by $S_1,i_1,\ldots,S_{t-1},i_{t-1}$ in time instance $t,$ with $\F_1$ being the trivial $\sigma$-algebra.
Note that $J(t)$ as well as $\wiJbar(t)$ resp.\ $\ciJ$ are $\Ft$-measurable for any $t\in [\timehorizon].$

\subsection{Outline of the proof} \label{subsec:outline_proof_TRCB}

\newcommand{\At}{A_t}
\newcommand{\Bt}{B_t}
\newcommand{\skillmintilde}{\tilde \skill_{min}}
\newcommand{\const}{\mathrm{const}}

%

%
We introduce in the following the core lemmas to prove the main result, which will be gradually verified in the next subsection.
For  $t\in[\timehorizon]$  define
\begin{align}
\At := \{	\exists i \in S_t \cup \optsubset : \ 	| \oddiJtrcb(t) - \oddiJ		| > \ciJ(t)		\}.
%
%
\end{align}
Thus, $\At$ is the event on which the estimates for the relative scores  for arms in the chosen preselection and the optimal preselection with respect to the currently most winning arm $J$ are not close enough to their actual relative score, where the length of the confidence region $\ciJ(t)$ determines how closeness is to be understood in this case.

%
%
As a consequence, one wishes that the probability that $\At$ happens is sufficiently small.
%
The following lemma establishes this requirement.
\begin{lemma} \label{lemma_at_bt_event}
	It holds that
	$$		\E_\skill \big(	1_{\{\At \}} \, | \Ft		\big)	= \Oterm\Big(\sqrt{\frac{\log(t)}{t}}\Big),
	$$
	where the constant in the $\Oterm$-term is independent of $\timehorizon,\pullnumber$ and $\items.$
	In particular,  for any $i\in S_t \cup \optsubset,$
	\begin{align*}
	\E_\skill  &\Big[ \E_\skill \big(  	| \oddiJtrcb(t) - \oddiJ		| \, 1_{\At^\complement} \, | \Ft  \big) \Big] 
	\leq  \E_\skill \big[   \ciJ(t) \big]  = \E_\skill \Big[\sqrt{\frac{ 32 \,   \log(  \pullnumber \, t^{3/2})}{ \, \skillmin^4 \, \wiJbar(t) }}\Big] .	
	\end{align*}
\end{lemma}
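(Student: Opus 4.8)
The plan is to peel off the injected uniform noise, observe that only a few arms can possibly violate the confidence bound, and dispose of those few via the exponential inequality behind the radii $\ciJ(t)$. First I would bound the number of ``dangerous'' arms. Since $\oddiJ=\skill_i/\skill_J\in[\skillmin,\skillmin\inv]$ for every $i$ and $\oddiJtrcb(t)$ is produced by clipping to $[\skillmin,\skillmin\inv]$, one always has $|\oddiJtrcb(t)-\oddiJ|\le\skillmin\inv-\skillmin$; on the other hand $\ciJ(t)=\sqrt{32\log(\pullnumber t^{3/2})/(\skillmin^4\wiJbar(t))}$ equals $+\infty$ when $\wiJbar(t)=0$ and satisfies $\ciJ(t)^2\ge(\skillmin\inv-\skillmin)^2$ as soon as $\wiJbar(t)\le m_0(t):=32\log(\pullnumber t^{3/2})/(\skillmin^2(1-\skillmin^2)^2)$. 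Hence the event $B_i:=\{|\oddiJtrcb(t)-\oddiJ|>\ciJ(t)\}$ is empty whenever $\wiJbar(t)\le m_0(t)$, so $\At\subseteq\bigcup_{i:\,\wiJbar(t)>m_0(t)}B_i$. Moreover $\sum_{i\ne J}\wiJbar(t)\le(\pullnumber-1)(t-1)$, because in each past round at most $\pullnumber-1$ of the counters $\wiJbar$ are incremented, so the number of arms with $\wiJbar(t)>m_0(t)$ is at most $(\pullnumber-1)(t-1)/m_0(t)=\Oterm\big(\pullnumber t/\log(\pullnumber t^{3/2})\big)$ on every realization of $\Ft$.

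Next I would remove the randomization. For each $i$, $\oddiJtrcb(t)$ is the projection of $\estoddiJ(t)+\Cshrink\theta_i(t)$ onto $[\skillmin,\skillmin\inv]$; as this projection is $1$-Lipschitz, fixes $[\skillmin,\skillmin\inv]$, and $\oddiJ\in[\skillmin,\skillmin\inv]$, one gets $|\oddiJtrcb(t)-\oddiJ|\le|\estoddiJ(t)-\oddiJ|+\Cshrink|\theta_i(t)|\le|\estoddiJ(t)-\oddiJ|+\tfrac12\ciJ(t)$, using $|\theta_i(t)|\le\ciJ(t)$ and $\Cshrink<1/2$. Therefore $B_i\subseteq\tilde B_i:=\{|\estoddiJ(t)-\oddiJ|>\tfrac12\ciJ(t)\}$, and $\tilde B_i$ is $\Ft$-measurable: it depends neither on the sampled $\theta$'s nor on the round-$t$ choice. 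Combining with the first paragraph, $\E(1_{\At}\mid\Ft)\le\sum_{i:\,\wiJbar(t)>m_0(t)}1_{\tilde B_i}$, so everything reduces to controlling $\P(\tilde B_i)$ for a fixed arm.

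For that I would invoke the exponential inequality for the relative-score estimates derived, as in Section~\ref{sec_TRCB_algo}, from Lemma~1 of \cite{saha2019pac}: a comparison restricted to $\{i,J\}$ produces winner $i$ with probability exactly $q_{i,J}$ by the ILA property of the PL model, and since $q_{J,i}\ge\skillmin/2$ a deviation in $\hat q_{i,J}(t)$ transfers to $\estoddiJ(t)=\hat q_{i,J}(t)/(1-\hat q_{i,J}(t))$ up to a factor $\Oterm(\skillmin^{-2})$; feeding this into the time-uniform (peeling) form of the bound — so as to handle the adaptive count $\wiJbar(t)$ and the adaptively chosen reference $J(t)$ — the radius $\ciJ(t)$, whose constants $32$, $\skillmin^{-4}$ and the factor $t^{3/2}$ are tuned for exactly this purpose, gives $\P(\tilde B_i)=\Oterm\big((\pullnumber t^{3/2})^{-1}\big)$. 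Taking expectations and combining with the counting bound then yields
$$\P(\At)=\E\big[\E(1_{\At}\mid\Ft)\big]\le\sum_{i:\,\wiJbar(t)>m_0(t)}\P(\tilde B_i)=\Oterm\!\Big(\frac{\pullnumber t}{\log(\pullnumber t^{3/2})}\cdot\frac{1}{\pullnumber t^{3/2}}\Big)=\Oterm\!\Big(\frac{1}{\sqrt t\,\log t}\Big)=\Oterm\big(\sqrt{\log t/t}\,\big),$$
with a constant depending only on $\skillmin$ (for small $t$ the trivial bound $\E(1_{\At}\mid\Ft)\le1$ is absorbed into the $\Oterm$). The ``in particular'' assertion is immediate and uses none of this: on $\At^\complement$, by the very definition of $\At$, every $i\in S_t\cup\optsubset$ obeys $|\oddiJtrcb(t)-\oddiJ|\le\ciJ(t)$, hence $|\oddiJtrcb(t)-\oddiJ|\,1_{\At^\complement}\le\ciJ(t)$; as $\ciJ(t)$ is $\Ft$-measurable, taking $\E(\cdot\mid\Ft)$ and then the outer expectation gives $\E\big[\E(|\oddiJtrcb(t)-\oddiJ|\,1_{\At^\complement}\mid\Ft)\big]\le\E[\ciJ(t)]$.

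The main obstacle is the concentration step: $\wiJbar(t)$ is a stopping-time-like quantity and $J=J(t)$ is chosen adaptively, so a plain fixed-sample Hoeffding inequality does not literally apply and one must use the time-uniform form behind Lemma~1 of \cite{saha2019pac}; one also has to be careful that the ``which arms are dangerous'' event (which is $\Ft$-measurable) is coupled correctly with the per-arm concentration statement so that no factor $\items$ survives in the final bound — precisely the bookkeeping this line of work is designed around. Everything else (the clipping contraction, counting the incremented counters, reducing $B_i$ to $\tilde B_i$) is elementary.
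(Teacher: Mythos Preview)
Your concentration estimate is off by a factor of $t$, and this breaks the argument. The per-arm bound you assert, $\P(\tilde B_i)=\Oterm\big((\pullnumber t^{3/2})^{-1}\big)$, is what you would get for a \emph{single fixed} value of $\wiJbar(t)$: the Saha--Gopalan inequality (Lemma~\ref{lemma_saha_gopalan}) gives $\P(|\hat q_{J,i}-q_{J,i}|>\varepsilon_r,\ \wiJbar(t)=r)\le 2\exp(-2r\varepsilon_r^2)$, and with $\varepsilon_r=(\skillmin^2/4)\,\ciJ(t)/2$ evaluated at $\wiJbar(t)=r$ this yields $2\exp(-\log(\pullnumber t^{3/2}))=2/(\pullnumber t^{3/2})$. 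But $\wiJbar(t)$ is random, so one must union-bound over $r\in\{1,\dots,t-1\}$, giving only
\[
\P(\tilde B_i)\ \le\ 2(t-1)/(\pullnumber t^{3/2})\ =\ \Oterm\big(1/(\pullnumber\sqrt t)\big).
\]
No ``time-uniform'' form avoids this loss here: the radius $\ciJ(t)$ scales with $\log t$, not with $\log \wiJbar(t)$, so the extra $t^{3/2}$ inside the logarithm is precisely the budget for the $r$-union bound and the $\pullnumber$-union bound, not for your counting step. Plugging the correct $\Oterm(1/(\pullnumber\sqrt t))$ into your count of $\Oterm(\pullnumber t/\log(\pullnumber t^{3/2}))$ ``dangerous'' arms yields $\Oterm(\sqrt t/\log t)$, which diverges.

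The counting step is also unnecessary: by definition $\At$ ranges only over $i\in S_t\cup\optsubset$, so there are at most $2\pullnumber$ terms to begin with, and the direct union bound $\P(\At)\le 2\pullnumber\cdot\Oterm(1/(\pullnumber\sqrt t))=\Oterm(1/\sqrt t)$ already suffices. This is exactly what the paper does. A smaller point: your Lipschitz transfer from $\hat q_{i,J}$ to $\estoddiJ$ needs the \emph{empirical} $\hat q_{J,i}=\wJi/\wiJbar$ bounded away from $0$, not just the population $q_{J,i}$. The paper secures this via the choice of $J$ as the most-winning arm, which forces $\wJi/\wiJbar\ge 1/2$; your justification ``since $q_{J,i}\ge\skillmin/2$'' does not control the intermediate point in the mean-value argument. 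The clipping/contraction reduction $B_i\subseteq\tilde B_i$ and the ``in particular'' part are fine.
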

\noindent Next, we investigate the deviation between the scaled regret per time (cf.\ (\ref{defi_regret_scaled})) and its empirical counterpart.
For this purpose, note that
\begin{align} 
\label{motivation_regret_deviation}
\begin{split}
\regpertime(t) &= \rewardtilde(\optsubset;O_J,\gamma) - \rewardtilde(S_t;O_J,\gamma)  \\
%
%
%
&\leq  \big[\rewardtilde(\optsubset;O_J,\gamma) - \rewardtilde(\optsubset;\hat O_J^{\mathrm{TRCB}},\gamma)\big]  +	 \big[\rewardtilde(S_t;\hat O_J^{\mathrm{TRCB}},\gamma)	 -	\rewardtilde(S_t;O_J,\gamma)\big],
\end{split}
\end{align} 
since $\rewardtilde(\optsubset;\hat O_J^{\mathrm{TRCB}})	- \rewardtilde(S_t;\hat O_J^{\mathrm{TRCB}}) \leq 0,$ by the definition of $S_t$ in line 11 of the TRCB algorithm.
Here, we abbreviated $\hat O_J^{\mathrm{TRCB}}=(\hat O_{1,J}^{\mathrm{TRCB}},\ldots,\hat O_{\items,J}^{\mathrm{TRCB}}).$
\medskip 

\noindent The following lemma gives a bound on the ratio between the two terms in squared brackets on the right-hand side of the latter display.
\begin{lemma} \label{lemma_bound_regret_deviation}
	Conditioned on $\Ft$ there exist constants $C_1,C_2>0$ depending if at all on $\skillmin$ and $\gamma$ (but independent of $\timehorizon,\pullnumber$ and $\items$) such that on $\At^\complement$ it holds with probability at least $1-\frac{C_1}{\sqrt{t}}$ that
	$$		\frac{ \rewardtilde(\optsubset;O_J,\gamma) - \rewardtilde(\optsubset;\hat O_J^{\mathrm{TRCB}},\gamma)}{\rewardtilde(S_t;\hat O_J^{\mathrm{TRCB}},\gamma)	 -	\rewardtilde(S_t;O_J,\gamma)}	\leq C_2.	$$
	Moreover, $C_2$ is of the form $\const \cdot \skillmin^{-2(3+\gamma)}.$
	In particular, with probability at least $1-\frac{C_1}{\sqrt{t}}$
	\begin{align*} 
	%
	&\E_\skill \big(\regpertime(t) 1_{\At^\complement} \, | \Ft \big) 
	\leq (C_2+1) \E_\skill \big( \big|\rewardtilde(S_t;\hat O_J^{\mathrm{TRCB}})	 -	\rewardtilde(S_t;O_J)\big| \, 1_{\At^\complement} \, | \Ft \big) .
	\end{align*}
\end{lemma}
\noindent The next pillar of the proof is to transfer the high concentration of $\hat O_J^{\mathrm{TRCB}}$ around $O_J$  to a high concentration of the corresponding utilities $\rewardtilde$ by exploiting its Lipschitz smoothness.

\begin{lemma} \label{lemma_lipschitz_cont_reward}
	%
	For any $t\in[\timehorizon] $ 
	\begin{align*}
	&\big|\rewardtilde(S_t;\hat O_J^{\mathrm{TRCB}},\gamma)	 -	\rewardtilde(S_t;O_J,\gamma)\big| 
	\leq \frac{\max \{\skillmin^{(\gamma-1)/(\gamma)},\skillmin^{(1-\gamma)/(\gamma)} \} }{ \gamma } \sum_{i \in S_t} |\hat O_{i,J}^{\mathrm{TRCB}}(t)  - O_{i,J} | . 
	\end{align*}
\end{lemma}
\noindent Finally, an upper bound on the expected length of the confidence regions over time  (that is basically $(\wiJbar(t))^{-1/2}$) has to be verified.
\begin{lemma} \label{lemma_length_conf_region}
	The following statement is valid,
	$$	\sum_{t \in \timehorizon }  \E_{\skill} \big(\sum_{i \in S_t}	\nicefrac{1}{\sqrt{\wiJbar(t)}} 		\big) \leq 4 \sqrt{\timehorizon \items}.		$$

\end{lemma}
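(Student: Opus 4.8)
The plan is to reduce the sum $\sum_{t}\sum_{i\in S_t}\nicefrac{1}{\sqrt{\wiJbar(t)}}$ to a sum of $\nicefrac{1}{\sqrt{k}}$-type terms by a standard counting argument, and then use $\sum_{k=1}^{m}\nicefrac{1}{\sqrt{k}}\le 2\sqrt{m}$ together with a Cauchy--Schwarz / concavity step to absorb the number of arms. The first step is to dominate $\wiJbar(t)$ from below by a quantity that increases deterministically along the relevant time steps. Observe that whenever arm $i$ and the reference arm $J=J(t)$ are both in the suggested preselection $S_t$ (which holds for all $i\in S_t$ since $J\in S_t$ by construction in line 5 of the algorithm together with lines 10--12), exactly one of the counters $\wiJ,\wJi$ is incremented by the selector's choice being in $\{i,J\}$ or not --- but $\wiJbar(t)=\wiJ(t)+\wJi(t)$ only counts rounds where the picked arm was $i$ \emph{or} $J$. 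So $\wiJbar(t)$ need not increase every round that $i$ is offered; I would instead track, for the ordered pair $(i,J)$, the number of rounds in which \emph{some} arm of $\{i,J\}$ was picked, and note that over the time horizon the reference arm $J$ changes. The cleanest route: fix the final reference arm is not possible since $J$ varies, so I would rather bound
$$
\sum_{t=1}^{\timehorizon}\sum_{i\in S_t}\nicefrac{1}{\sqrt{\wiJbar(t)}}
\le \sum_{i\in[\items]}\sum_{t:\, i\in S_t}\nicefrac{1}{\sqrt{\wiJbar(t)}}
$$
and handle each arm $i$ separately.

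For a fixed arm $i$, let $t_1<t_2<\dots$ be the rounds in which $i\in S_t$. In each such round the pair $\{i,J(t)\}$ is compared, and with probability one the selector picks some arm of $S_t$; conditioning on $\Ft$, the probability that the picked arm lies in $\{i,J(t)\}$ is $q_{i,S_t\setminus\{i\}}+q_{J(t),S_t\setminus\{J(t)\}}\ge v_{min}^2/(\pullnumber) \ge$ a positive constant depending only on $\skillmin$ --- wait, this gives only an \emph{expected} increment. The honest way, matching the clean bound $4\sqrt{\timehorizon\items}$ with no $\skillmin$-dependence, is to note that $\wiJbar(t)\ge \#\{s<t:\ i\in S_s\}\cdot(\text{something})$ is \emph{not} true deterministically; instead I suspect the intended argument replaces $\wiJbar(t)$ by the count $N_i(t-1):=\#\{s<t: i\in S_s\}$ via a different accounting, using that $J$ is the arm with the most wins and hence $\wiJbar(t)\ge$ (number of direct comparisons $i$ vs.\ $J$) which, \emph{summed appropriately over the reference arms that ever occur}, telescopes. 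Concretely, I would group the time steps by the value of $J(t)$: for each candidate reference value $j$, the rounds with $J(t)=j$ and $i\in S_t$ contribute $\sum_{m=1}^{M_{i,j}}\nicefrac{1}{\sqrt{m}}\le 2\sqrt{M_{i,j}}$ where $M_{i,j}$ is the number of such rounds, since within this block $\wiJbar$ is nondecreasing and increments by at least one each round (here I use that when $J$ is the reference and $i$ is offered, the comparison $\{i,j\}$ is realized and \emph{one} of the two counters moves only if the pick is in $\{i,j\}$ --- so I actually need the stronger fact that the pick is recorded; re-examining line 12, $w_{i_t,j}$ is incremented for \emph{all} $j\in S_t\setminus\{i_t\}$, so $\wiJbar(t)=w_{i,J}(t)+w_{J,i}(t)$ increases by exactly one whenever $i_t\in\{i,J\}$, i.e.\ not always).

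Given this subtlety, the safe plan is: (i) bound $\sum_{t}\sum_{i\in S_t}\nicefrac{1}{\sqrt{\wiJbar(t)}}\le \sum_{i\in[\items]}\sum_{k\ge 1}\nicefrac{1}{\sqrt{k}}\cdot\ind\{\wiJbar(\cdot)=k \text{ at some offered round}\}$, using that for fixed $i$ the value $\wiJbar(t)$ (for the \emph{current} reference $J(t)$) takes each integer value at most a bounded number of times across the different reference arms --- and here one invokes that the reference arm stabilizes, or more simply, one shows the total contribution per arm $i$ is $O(\sqrt{N_i})$ where $N_i=\#\{t:i\in S_t\}$ via the elementary inequality $\sum_{k=1}^{N}\nicefrac{1}{\sqrt{k}}\le 2\sqrt N$ applied after re-indexing; (ii) conclude with Cauchy--Schwarz / concavity of $\sqrt{\cdot}$:
$$
\sum_{i\in[\items]} 2\sqrt{N_i}\le 2\sqrt{\items}\,\sqrt{\textstyle\sum_{i}N_i}=2\sqrt{\items}\,\sqrt{\pullnumber\,\timehorizon}\le 2\sqrt{\items}\cdot\sqrt{\items\,\timehorizon}=2\sqrt{\items\timehorizon}\cdot\sqrt{\items}
$$
--- which overshoots, so the re-indexing must instead yield $\sum_i 2\sqrt{N_i}$ with $\sum_i N_i=\pullnumber\timehorizon\le\items\timehorizon$ and Cauchy--Schwarz giving $2\sqrt{\items}\sqrt{\items\timehorizon}$; to land exactly on $4\sqrt{\items\timehorizon}$ one uses $\sum_i N_i = \pullnumber\timehorizon$ and $\pullnumber\le\items$, obtaining $2\sqrt{\items}\cdot\sqrt{\pullnumber\timehorizon}\le 2\sqrt{\items\timehorizon}\cdot\sqrt{\pullnumber}$, hence the claim needs the per-arm bound to be $\le 2\sqrt{N_i/\pullnumber}$ or similar --- I would recover the factor by noting each round contributes $\pullnumber$ terms but to $\pullnumber$ \emph{different} arms, so the double sum is really $\sum_t\pullnumber\cdot(\text{avg over }i\in S_t)$ and a careful Jensen step over the $\pullnumber$ offered arms reinstates the missing $\sqrt{\pullnumber}$.

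\textbf{Main obstacle.} The genuinely delicate point is step (i): $\wiJbar(t)$ does \emph{not} increase deterministically each round arm $i$ is offered (it moves only when the selector's pick lands in $\{i,J(t)\}$), and the reference arm $J(t)$ itself changes over time, so one cannot simply say "the $m$-th time $i$ is offered, $\wiJbar\ge m$''. Handling this requires either (a) a martingale/Azuma argument showing $\wiJbar(t)$ is with high probability $\gtrsim$ const $\cdot\,\#\{\text{offered rounds}\}$ and paying a lower-order additive term, or (b) a clever deterministic accounting that pairs up the $\nicefrac1{\sqrt{\wiJbar}}$ terms across reference-arm epochs so that each integer level $k$ is charged at most a bounded number of times. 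I expect the paper takes route (b) together with the elementary $\sum_{k=1}^{N}k^{-1/2}\le 2\sqrt N$ bound; the bookkeeping of how often a given level $k$ of $\wiJbar$ is revisited as $J$ changes is where all the work sits, and getting the final constant down to $4$ (rather than a larger universal constant) is purely a matter of how tightly that bookkeeping is done.
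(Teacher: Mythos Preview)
Your overall plan matches the paper's exactly: the two tools are (i) the elementary bound $\sum_{k=1}^{m}k^{-1/2}\le 2\sqrt m$ applied to the running count $\wiJbar(t)$, and (ii) a concavity/Jensen step using a constraint on $\sum_i \E\,\wiJbar(\timehorizon)$. The paper's entire proof is three lines: it writes $\sum_{\wiJbar(t)=1}^{\wiJbar(\timehorizon)}\wiJbar(t)^{-1/2}\le 2\sqrt{\wiJbar(\timehorizon)}$, asserts $\sum_{i\in[\items]}\E\,\wiJbar(\timehorizon)\le \timehorizon$ and hence ``$\E\,\wiJbar(\timehorizon)\le \timehorizon/\items$ for each $i$'' (sic), and concludes by Jensen.

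What you flag as the ``main obstacle'' is real, and the paper does \emph{not} address it. The re-indexing step tacitly assumes that $\wiJbar(t)$ visits each integer level at most once over the rounds with $i\in S_t$; as you observe, $\wiJbar(t)$ only increments when the selector's pick lands in $\{i,J(t)\}$, and $J(t)$ itself changes, so this is not literally true. Likewise the accounting identity $\sum_i \E\,\wiJbar(\timehorizon)\le \timehorizon$ is stated without proof and is sensitive to how one interprets the time-varying $J$. In short, you and the paper take the same route; you are simply more honest about where the argument is loose. Neither of your proposed fixes (a) or (b) is what the paper does---the paper just writes the heuristic version and moves on---so if you want a fully rigorous statement you will have to supply the bookkeeping yourself; the constant $4$ should then be read as ``some universal constant'' rather than a sharp value.
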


\paragraph{Conclusion: Proof of Theorem \ref{theorem_upper_bound_regret}}

Given these core lemmas, we are now in the position to verify Theorem \ref{theorem_upper_bound_regret}.
\medskip

\noindent Let $\skill \in \paramspace$ and $\timehorizon\in \N$ with $\timehorizon>n,$ then since $r(S_t) \leq  \regpertime(t),$ for any  $t \in [\timehorizon],$ we have
\begin{align*}
\E_{\skill}[ \regret(\timehorizon) ]
&\leq  \sum_{t=1}^\timehorizon \, \E_{\skill} \big( \E( \regpertime(t) | \Ft) \big),
\end{align*}
where we used the tower property of the conditional expected value.
Note that $\regpertime \leq 1/\skillmin $ such that by applying  Lemma \ref{lemma_bound_regret_deviation}, Lemma \ref{lemma_at_bt_event} and then Lemma \ref{lemma_lipschitz_cont_reward}, one can derive that
\begin{align*}
\E_{\skill} [\regret(\timehorizon)] 
&\leq \sum_{t=1}^\timehorizon \, \Big[ \E_{\skill} \big( \E( \regpertime(t) 1_{ \At } | \Ft) \big) +    \E_{\skill}  \sum_{ i \in S_t } \big( \E( \regpertime(t) 1_{\At^\complement} \, | \Ft ) \big)  \Big] \\
&\leq \sum_{t=1}^\timehorizon \, \Big[ \E_{\skill} \big( \E( \regpertime(t) 1_{ \At } | \Ft) \big) + C_0 \sum_{t=1}^\timehorizon \, \frac{1}{\sqrt{t}}
%
 + C_1    \E_{\skill} \big( \sum_{ i \in S_t } \E(  \, |\hat O_{i,J}^{\mathrm{TRCB}}(t)  - O_{i,J} | \, 1_{\At^\complement} \, | \Ft ) \big)  \Big] \\
%
%
&\leq C_2 \,  \sum_{t=1}^\timehorizon \, \sqrt{\frac{\log(t)}{t}} 
+  C_1 \sum_{t=1}^\timehorizon    \E_{\skill}  \big(   \sum_{ i \in S_t } \E( \, |\hat O_{i,J}^{\mathrm{TRCB}}(t)  - O_{i,J} | \, 1_{\At^\complement} \, | \Ft ) \big) \\
&\leq C_2 \sum_{t=1}^\timehorizon \sqrt{\frac{\log(t)}{t}} +  C_3 \sum_{t=1}^\timehorizon   \E_{\skill}   \big( \sum_{ i \in S_t } \, \sqrt{\nicefrac{ \log( \pullnumber \cdot t)}{ \wiJbar(t) }} \big),
\end{align*}
where $C_i>0,$ for $i\in \{0,1,2,3\},$ are constants depending if at all only on $\skillmin$ and $\gamma,$  but are independent of $\timehorizon,\pullnumber$ and $\items$.
Next, since $\sum_{t=1}^\timehorizon t^{-1/2} \leq 2 \sqrt{\timehorizon}$ and $\log(\pullnumber\cdot  t) \leq  2 \log( \timehorizon),$ due to $\pullnumber\leq \items < \timehorizon,$ we can further estimate the right-hand side of the latter display to obtain
\begin{align*}
\E_{\skill} [\regret(\timehorizon) ]
&\leq	C_4 \sqrt{\timehorizon\, \log(\timehorizon)} 
 +  C_5 \sqrt{\log(\timehorizon)} \sum_{t=1}^\timehorizon   \E_{\skill}  \sum_{ i \in S_t } \, \sqrt{\nicefrac{1}{ \wiJbar(t) }} \\
&\leq C_4 \sqrt{\timehorizon \, \log(\timehorizon)}  +  C_6 \sqrt{\log(\timehorizon)\, \timehorizon \,\items} ,
\end{align*}
where we used Lemma \ref{lemma_length_conf_region} for the second last inequality.
Here, the constants $C_4,C_5,C_6>0$ are as before depending (if at all) on $\skillmin$  and $\gamma,$  but are independent of $\timehorizon,\pullnumber$ and $\items$.
In particular, we have $C_4$ is of the form $\const \cdot \skillmin^{-1},$ while $C_6$ is of the form $$\const \cdot \frac{\max \{\skillmin^{(\gamma-1)/(\gamma)},\skillmin^{(1-\gamma)/(\gamma)} \} }{ \gamma } \cdot \skillmin^{-2(3+\gamma)}.$$
This concludes the proof.

\subsection{Proofs of the core lemmas in Subsection \ref{subsec:outline_proof_TRCB}}

We start with the proof of Lemma \ref{lemma_at_bt_event}. For this we need the following result, which is Lemma 1 in \citet{saha2019pac}.
\begin{lemma} \label{lemma_saha_gopalan}
	It holds that for any $r \in \N, i,j \in [\items]$ and $\varepsilon>0$ that
	\begin{align*}
	\P\Big(	\, \Big| \frac{\wij(t)}{\wijbar(t)}	- \frac{\skill_i}{\skill_i+\skill_j}	\Big| \geq \varepsilon, \ \wijbar(t) = r	\Big) 
	&\leq 
	\P\Big(	\, \Big| \frac{\wij(t)}{\wijbar(t)}	- \frac{\skill_i}{\skill_i+\skill_j}	\Big| \geq \varepsilon, \ \wijbar(t) \geq r	\Big) \\
	&\leq 2 \exp(-2\, r \, \varepsilon^2).
	\end{align*}
\end{lemma}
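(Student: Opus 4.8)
The plan is to reduce the claim to an exponential tail bound for a sum of i.i.d.\ Bernoulli random variables whose random length $\wijbar(t)$ is a bounded stopping time.

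\textbf{Reduction via independence of irrelevant alternatives.}
Put $q_{i,j}=\skill_i/(\skill_i+\skill_j)$. Whenever both $i$ and $j$ lie in the offered subset $S_s$, the closed form of the $\pullnumber$-wise marginals in (\ref{def_lwise_marginals_pl_model}) gives, conditionally on $S_s$ and on everything up to round $s$, $\P(i_s=i)=\skill_i/\sum_{u\in S_s}\skill_u$ and likewise for $j$; dividing, the probability that the selector picks $i$ rather than $j$ \emph{given that it picks one of the two} equals $q_{i,j}$, irrespective of $S_s$ and of the history. Call a round $s<t$ a \emph{comparison} (for the pair $i,j$) if $\{i,j\}\subseteq S_s$ and $i_s\in\{i,j\}$, and let $X_m\in\{0,1\}$ indicate whether $i_s=i$ at the $m$-th comparison. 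By the above and the tower rule, $(X_m)_{m\ge1}$ is i.i.d.\ $\mathrm{Bernoulli}(q_{i,j})$ relative to the filtration $(\tilde{\F}_m)$ in which $\tilde{\F}_m$ gathers the learner's internal randomness and all observations up to and including the $m$-th comparison round; moreover $N:=\wijbar(t)\le t$ is an $(\tilde{\F}_m)$-stopping time and $\wij(t)=\sum_{m=1}^{N}X_m=:S_N$. Since $\{\wijbar(t)=r\}\subseteq\{\wijbar(t)\ge r\}$, the first inequality of the lemma is immediate, and it only remains to bound $\P(|S_N/N-q_{i,j}|\ge\varepsilon,\ N\ge r)$.

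\textbf{Stopped exponential martingale.}
For $\lambda>0$ let $\psi(\lambda)=\log\E[e^{\lambda(X_1-q_{i,j})}]$; then $Z_m=\exp\!\big(\lambda(S_m-q_{i,j}m)-m\psi(\lambda)\big)$ is a nonnegative $(\tilde{\F}_m)$-martingale with $Z_0=1$, so optional stopping at the bounded time $N$ gives $\E Z_N=1$. On $\{S_N/N-q_{i,j}\ge\varepsilon\}\cap\{N\ge r\}$ we have $\log Z_N=\lambda(S_N-q_{i,j}N)-N\psi(\lambda)\ge N\big(\lambda\varepsilon-\psi(\lambda)\big)\ge r\big(\lambda\varepsilon-\psi(\lambda)\big)$; taking $\lambda=4\varepsilon$ and using Hoeffding's lemma $\psi(\lambda)\le\lambda^2/8$ gives $\lambda\varepsilon-\psi(\lambda)\ge2\varepsilon^2$, so $Z_N\ge e^{2r\varepsilon^2}$ there. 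Markov's inequality then yields $\P(S_N/N-q_{i,j}\ge\varepsilon,\ N\ge r)\le e^{-2r\varepsilon^2}$, and the symmetric argument (replace each $X_m$ by $1-X_m$) handles the lower tail, so the two one-sided bounds add to $2e^{-2r\varepsilon^2}$.

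\textbf{Main obstacle.}
Everything past the reduction is routine; the delicate point is the first step, namely arranging the filtration so that the comparison outcomes $(X_m)$ are genuinely i.i.d.\ and $N=\wijbar(t)$ is a stopping time adapted to it, even though \emph{which} rounds count as comparisons is itself determined by the data and the algorithm's randomness. This is the familiar device of breaking a $\pullnumber$-wise choice into pairwise duels in preference-based bandits. As an alternative to the stopped-martingale step one may invoke a time-uniform (maximal) Hoeffding inequality for the reverse martingale $m\mapsto S_m/m$, using $\{|S_N/N-q_{i,j}|\ge\varepsilon,\ N\ge r\}\subseteq\{\exists\,m\ge r:\ |S_m/m-q_{i,j}|\ge\varepsilon\}$; this delivers the same constant.
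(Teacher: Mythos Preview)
The paper does not actually prove this lemma: it is quoted verbatim as ``Lemma~1 in \cite{saha2019pac}'' and used as a black box. So there is no in-paper argument to compare against; your write-up supplies what the paper outsources.

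Your approach --- exploit the IIA property of the PL model to reduce to i.i.d.\ $\mathrm{Bernoulli}(q_{i,j})$ coin flips, then apply a Chernoff/optional-stopping bound --- is precisely the argument behind the cited result, and the martingale computation with $\lambda=4\varepsilon$ and Hoeffding's lemma is clean and gives the right constant.

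One point deserves tightening. With $\tilde{\F}_m$ defined as ``the learner's internal randomness plus all observations up to and including the $m$-th comparison round'', the event $\{N\le m\}$ is \emph{not} $\tilde{\F}_m$-measurable: deciding whether an $(m+1)$-th comparison occurs before round $t$ requires the selector's choices in rounds \emph{after} the $m$-th comparison (to know which subsets the algorithm offers next and whether the selector's pick lands in $\{i,j\}$). The fix is to enlarge the filtration: couple the process so that each round uses an independent uniform $U_s$ to decide everything except the $i$-versus-$j$ outcome, and an independent $\mathrm{Bernoulli}(q_{i,j})$ variable $Y_m$ for the $m$-th such outcome; then take $\tilde{\F}_m=\sigma(U_1,U_2,\ldots;Y_1,\ldots,Y_m)$. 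Now $Y_{m+1}$ is still independent of $\tilde{\F}_m$, so $Z_m$ remains a martingale, and $\{N\le m\}$ is determined by all the $U$'s together with $Y_1,\ldots,Y_m$, hence $N$ is a genuine bounded stopping time. Alternatively, your second route via the inclusion $\{|S_N/N-q_{i,j}|\ge\varepsilon,\,N\ge r\}\subseteq\{\exists\,m\ge r:|S_m/m-q_{i,j}|\ge\varepsilon\}$ combined with Ville's inequality for the nonnegative martingale $(Z_m)_{m\ge r}$ avoids the stopping-time issue altogether and gives the same $2e^{-2r\varepsilon^2}$; this is arguably the more robust packaging.
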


\begin{proof}[\sl Proof of Lemma \ref{lemma_at_bt_event}]
	Define the function $\phi(x)=x\inv -1,$ then note that $\phi\big(\frac{\wji(t)}{\wijbar(t)}\big) = \estoddij(t)$ and $\phi\big(\frac{\skill_j}{\skill_i+\skill_j}\big) = \oddij.$ 
	Further, by the mean value theorem there exists for any pair of arms $(i,j)$ some $\tilde z_{i,j}$ between $\frac{\wji(t)}{\wijbar(t)}$ and $\frac{\skill_j}{\skill_i+\skill_j}$ such that
	\begin{align*}
	\estoddij(t) - \oddij =	\phi\Big(\frac{\wji(t)}{\wijbar(t)}\Big) - \phi\Big(\frac{\skill_j}{\skill_i+\skill_j}\Big) 
	=	\phi'(\tilde z_{i,j}) 	\,  \Big(	\frac{\wji(t)}{\wijbar(t)}  - \frac{\skill_j}{\skill_i+\skill_j} \Big) 
	&= - \frac{1}{\tilde z_{i,j}^2} \Big(	\frac{\wji(t)}{\wijbar(t)}  - \frac{\skill_j}{\skill_i+\skill_j} \Big).	
	\end{align*}
	%
	%
	Note that 
	\begin{align*}
	\tilde z_{i,j} \geq \min (\nicefrac{\wji(t)}{\wijbar(t)},\nicefrac{\skill_j}{\skill_i+\skill_j} ) 
	\geq \min (\nicefrac{\wji(t)}{\wijbar(t)},\nicefrac{\skillmin}{2} ) 
	\end{align*}
	and in particular if $j=J$ then $$\tilde z_{i,J} \geq \min (\nicefrac12,\nicefrac{\skillmin}{2} ) = \nicefrac{\skillmin}{2},$$ as $\wiJbar \leq 2 \wJi$ by definition of $J$ and $\skillmin <1.$
	Let us write  $ E_{i,J}(t) = \big|  \estoddiJ(t)		- \oddiJ			\big|$ for sake of brevity, then we get with the deviation above for  $\varepsilon>0$  for any $t \in [2,\timehorizon] \cap \N$ that
	\begin{align*}
	\P\Big(	\big\{  E_{i,J}(t)	\geq \nicefrac{\varepsilon}{\sqrt{\wiJbar(t)}} \big\} 	\Big)	
	&\leq 	\sum_{r=1}^{t-1}	\P\Big(	\, \Big\{ \Big| \frac{\wJi(t)}{\wiJbar(t)}	- \frac{\skill_J}{\skill_i+\skill_J}	\Big| \geq \frac{\skillmin^2 \, \varepsilon}{4 \sqrt{\wiJbar(t)} }    \Big\}  	\Big)  \cap  \{\wiJbar(t) = r\}	\Big) \\
	%
	%
	%
	&= \sum_{r=1}^{t-1} \P\Big(	\, \Big\{ \Big| \frac{\wJi(t)}{\wiJbar(t)}	- \frac{\skill_J}{\skill_i+\skill_J}	\Big| \geq \frac{\skillmin^2 \, \varepsilon}{4 \sqrt{r} }    \Big\}  \cap  \{\wiJbar(t) = r\}	\Big) \\
	%
	%
	&\leq 2 (t-1) \exp\Big(	- \frac{  \skillmin^4 \varepsilon^2 }{8}			\Big),
	\end{align*}
	where  Lemma \ref{lemma_saha_gopalan} was used in the last step.
	%
	%
	Setting  $\varepsilon = \sqrt{\nicefrac{ 8 \,   \log(  \pullnumber \, t^{3/2})}{  \skillmin^4 }} $ in the last display, we obtain in combination with the law of total expectation that conditioned on $\Ft$ that
	\begin{align*}
	\P\big(	\At 	\big) 
	&\leq  \sum_{i \in S_t \cup \optsubset} \int_{[-\ciJ(t),\ciJ(t)]} \, (2\ciJ(t))^{-1} 
	\P \Big(  \big\{ E_{i,J}(t)	\geq \ciJ(t) - \Cshrink\, y \big\}   \Big)   \, dy \\
	&\leq  \sum_{i \in S_t \cup \optsubset}  \,   \P \Big( \big\{ E_{i,J}(t)	\geq (1-\Cshrink) \, \ciJ(t)  \big\} \Big) \\
	&\leq  4 \, \pullnumber \, (t-1) \exp\big(	- \nicefrac{   \skillmin^4 \varepsilon^2 }{8}			\big) 
	= \Oterm(\nicefrac{ \sqrt{\log(t)} }{\sqrt{t}}),
	\end{align*}
	where we used that thresholding of the relative scores only makes the probability of the event smaller for the first inequality and in the second last step that, firstly, $\Cshrink\leq 1/2$ in combination with $1/2 \, \ciJ(t)=\nicefrac{\varepsilon}{\sqrt{\wiJbar(t)}}$ and secondly, that $|S_t \cup \optsubset|\leq 2 \pullnumber.$
	The constant in the $\Oterm$-term is independent of $\pullnumber,$ $\timehorizon$ and $\items.$
	This concludes the lemma.
	%
	%
	%
\end{proof}

\newcommand{\Sstarsquare}{S^*_2(O)}
\newcommand{\Sstarplain}{S^*_1(O)}
\newcommand{\Sstarhatsquare}{S^*_2(\hat O)}
\newcommand{\Sstarhatplain}{S^*_1(\hat O)}
\newcommand{\Stsquare}{S^t_2(O)}
\newcommand{\Stplain}{S^t_1(O)}
\newcommand{\Shattsquare}{S^t_2(\hat O)}
\newcommand{\Shattplain}{S^t_1(\hat O)}

\begin{proof}[\sl Proof of Lemma \ref{lemma_bound_regret_deviation}]
	Let us write $\Sstarsquare = \sum_{i\in \optsubset} O_{i,J}^{\frac{1+\gamma}{\gamma}}$ and $\Sstarplain	= \sum_{i\in \optsubset} O_{i,J}.	$
	In the same spirit define $\Sstarhatsquare,$ $\Sstarhatplain$, $\Stsquare,$ $\Stplain,$ $\Shattsquare$ and $\Shattplain,$ where $\hat O$ is short for $\hat O_J^{\mathrm{TRCB}}.$
	Then,
	\begin{align} \label{eq_help_regret_ratio}
	\begin{split}
	&\frac{  \rewardtilde(\optsubset;O_J,\gamma) - \rewardtilde(\optsubset;\hat O_J^{\mathrm{TRCB}},\gamma)}{\rewardtilde(S_t;\hat O_J^{\mathrm{TRCB}},\gamma)	 -	\rewardtilde(S_t;O_J,\gamma)}
	= 	\frac{  \frac{\Sstarsquare}{\Sstarplain}   - \frac{\Sstarhatsquare}{\Sstarhatplain}  }{ 	   \frac{\Shattsquare}{\Shattplain}   - \frac{\Stsquare}{\Stplain}	 } 
	%
	%
 =	\frac{  \frac{ [\Sstarsquare - \Sstarhatsquare		]}{\Sstarplain}   + \frac{\Sstarhatsquare[\Sstarhatplain - \Sstarplain]}{\Sstarplain\Sstarhatplain}  }			{ 	   \frac{[\Shattsquare-\Stsquare]}{\Shattplain}   + \frac{\Stsquare[\Stplain - \Shattplain]}{\Shattplain\Stplain}	 }.
	\end{split}
	\end{align}
	It holds that 
	\begin{align*}
	&\nicefrac{\skillmin}{ \pullnumber} \leq \frac{1}{\Sstarplain}\leq \nicefrac{1}{\skillmin \pullnumber},
	\\ 
	&\nicefrac{\skillmin^{3+\gamma}}{  \pullnumber} \leq \frac{\Sstarhatsquare}{\Sstarplain\Sstarhatplain} \leq 	\nicefrac{1}{ \skillmin^{3+\gamma} \pullnumber}, \\
	& \nicefrac{\skillmin}{\pullnumber} \leq 
	\frac{1}{\Shattplain} \leq \nicefrac{1}{\skillmin \pullnumber}, 
	\\
	&\nicefrac{\skillmin^{3+\gamma}}{\pullnumber } \leq
	\frac{\Stsquare}{\Shattplain\Stplain} \leq  	\nicefrac{1}{\skillmin^{3+\gamma} \pullnumber }.
	\end{align*}
	Hence, all of the latter terms can be bounded from below resp.\ above by $\tilde C_j/l$ for some suitable constants $C_j$ which depend if at all on $\skillmin.$
	%
	%
	Following the lines of proof of Lemma \ref{lemma_at_bt_event}, it can be shown that there exists a constant $C_1>0$  (depending on $\skillmin$ and $\gamma$) such that the ratios of the terms in the squared brackets  in (\ref{eq_help_regret_ratio}) are  bounded by some constant $C_2>0$ on the event $\At^\complement,$ with probability at least $1-\frac{C_1}{\sqrt{t}}.$
	%
	%
	Hence, the whole term in (\ref{eq_help_regret_ratio}) can be bounded with probability at least $1-\frac{C_1}{\sqrt{t}}$ by some constant $C_3>0$ which if at all depends only on $\skillmin.$
	This yields the first part of the lemma.
	The second part is just a consequence of the first part together with (\ref{motivation_regret_deviation}).
\end{proof}

\begin{proof} [\sl Proof of Lemma \ref{lemma_lipschitz_cont_reward}]
	Define the function $\phi(x_1,\ldots,x_\pullnumber) = \nicefrac{\sum_{i=1}^\pullnumber x_i^{\nicefrac{(1+\gamma)}{\gamma}} }{\sum_{i=1}^\pullnumber x_i}$ for $x_1,\ldots,x_\pullnumber\in[A,B]$ for $0< A <B.$
	Then, we have that for $i=1,\ldots,\pullnumber$	
	$$		\frac{\partial \phi(x_1,\ldots,x_\pullnumber)}{\partial x_i} 
	= \frac{ \frac{1+\gamma}{\gamma} x_i^{\nicefrac{1}{\gamma}} \sum_{j} x_j -  \sum_{j} x_j^{\nicefrac{(1+\gamma)}{\gamma}} }{(\sum_{j} x_j)^2}, 		$$
	It can be easily checked that 
	$$ \sup_i	\sup_{	x_i \in[A,B]	} 	\Big| \frac{\partial \phi(x_1,\ldots,x_\pullnumber)}{\partial x_i} \Big|	\leq  \begin{cases}
	\frac{B^{\frac{1-\gamma}{\gamma}}}{\gamma}, & \gamma \leq 1,\\
	\frac{A^{\frac{1-\gamma}{\gamma}}}{\gamma}, & \gamma>1.
	\end{cases},	$$
	Without loss of generality assume that $S_t=\{1,\ldots,\pullnumber\},$ then by setting  $x_i = O_{i,J}$ and $y_i = O_{i,J}^{\mathrm{TRCB}}(t)$ and noting that $\phi(x_1,\ldots,x_\pullnumber) = \rewardtilde(S_t;O_J)$ as well as $\phi(y_1,\ldots,y_\pullnumber) = \rewardtilde(S_t;\hat O_J^{\mathrm{TRCB}}),$ we obtain with the mean value theorem that
	\begin{align*}
	\big|\rewardtilde(S_t;\hat O_J^{\mathrm{TRCB}},\gamma)	& -	\rewardtilde(S_t;O_J,\gamma)\big|
	\leq  C	\sum_{i \in S_t} |\hat O_{i,J}^{\mathrm{TRCB}}(t)  - O_{i,J} |,		
	\end{align*}
	by choosing 
	$C=  \max\{  \nicefrac{\skillmin^{\frac{\gamma-1}{\gamma}}}{\gamma} , \nicefrac{\skillmin^{\frac{1-\gamma}{\gamma}}}{\gamma}, \},$
	since $\skillmin \leq  O_{i,J} \leq 1/\skillmin$ and $\skillmin \leq  O_{i,J}^{\mathrm{TRCB}}(t) \leq 1/\skillmin.$
\end{proof}

\begin{proof} [\sl Proof of Lemma \ref{lemma_length_conf_region}]
	Since $\sum_{t=1}^\timehorizon t^{-1/2} \leq 2 \sqrt{\timehorizon}$ one has $\sum_{\wiJbar(t)=1}^{\wiJbar(\timehorizon)} \nicefrac{1}{ \sqrt{\wiJbar(t)} } \leq 2 \sqrt{\wiJbar(\timehorizon)}.  $
	Due to $\sum_{ i \in [\items]}  \E \wiJbar(\timehorizon) \leq \timehorizon$ it follows 
	%
	by Jensen's inequality  that
	$\sum_{t=1}^\timehorizon    \E \, \sum_{ i \in S_t } \sqrt{\frac{1}{ \wiJbar(t) }} \leq 4 \sqrt{\timehorizon \, \items}.$
\end{proof}

\section{Proof of Theorem \ref{theorem_upper_bound_regret_flexible}} \label{sec:proof_CBR}

We start by introducing the notation  for the rest of the proof and recalling the main terms of the CBR algorithm. 
Thereafter we give an outline of the proof, before deriving the technical details.

\newcommand{\Ct}{B_t}
\newcommand{\Rt}{R_t}
\newcommand{\Et}{E_t}
\newcommand{\Lt}{L_t}
\newcommand{\imax}{i_{max}}
\newcommand{\Zmax}{Z_{\imax}}
\newcommand{\Ztildemax}{\tilde Z_{\imax}}
\newcommand{\Fs}{\mathcal F_s}
\newcommand{\Gap}{H}
\newcommand{\Gaptwo}{H'}

We break the proof down into two core lemmas, for which we first clarify the notation. We assume that without loss of generality $|\optsubset|=1,$ i.e., there is only one best arm, as this makes the learning problem only more difficult.
Indeed, having several arms with the same highest score extends the opportunities to identify one of these highest score arms.
%
To ease the notation we denote the score of the highest scored arm  with $\skillmax,$ which is $1$ by definition of $\paramspace$ and its index by $\imax.$

\subsection{Notation and relevant terms}
We define the estimate for the pairwise winning probability $q_{i,j}$ (cf.\ (\ref{def_lwise_marginals_pl_model})) by
$$		\hat q_{i,j} = \hat q_{i,j}(t) = \begin{cases}
\frac{w_{i,j}(t)}{w_{i,j}(t) + w_{j,i}(t)}, & i,j\in [\items], i\neq j, \\
0, &  i= j,
\end{cases}$$
where $w_{i,j}$ are as in (\ref{defi_number_of_picks}) and with the convention that $\nicefrac{x}{0}=0.$
With $J(t)=J$ we again denote the arm (within the active set) with the most picks till time instance $t$ as in (\ref{defi_most_picked_arm}).
With $\Delta_i = \skillmax - \skill_i$ we define the gap between the score of the $i$th arm and the overall best arm.
The lengths of the confidence intervals are
$$		c_{i,j}^{\mathrm{CBR}}(t) = c_{i,j}  =  \begin{cases}
\sqrt{\frac{ 2 \log( \items \, t^{3/2})}{\wijbar(t)}}, & i,j\in[\items], i\neq j, \\
0, & i=j, \\
\end{cases}	$$
thereby implicitly setting  $\overline{w}_{ii}(t)=\infty$ for any $i \in [\items].$

\subsection{Outline of the proof} \label{subsec:outline_proof_CBR}

We define the following events
\begin{align*}
\Ct = \{\exists i \in [\items] \ | \ | \hat q_{i,J}(t) 	- q_{i,J} | > c_{i,J}(t)		\},	
\qquad 	\Rt &= \{	J(t) \neq \imax		\}, \qquad \mbox{and} \qquad 	\Et = \{  |S_t| > 1 \}. 
\end{align*}
Here, $\Ct$ is the event where an arm exists whose pairwise probability estimate for winning against $J$ is not close enough to its actual parameter, where closeness is understood by means of the confidence length $c_{i,J}(t).$
$R_t$ is the event when the  most winning arm $J$ is not the overall best arm and $\Et$ is the event, where the offered subset at time instance $t$ is not a singleton.
All these events are ''bad'' events and we will show that their probability of occurrence is sufficiently small.

We have the following key lemmas to prove the main result.

\begin{lemma} \label{lemma_first_aux_lemma_CBR}
	There exist constants $C_1,C_2,C_3>0$ independent of\, $T$ and $\items$  (depending if at all on the parameter space $\paramspace$), such that
	\begin{align*}
	&\sum_{t=1}^\timehorizon \P( \Ct  ) \leq C_1 \quad \mbox{and} \quad  	
	\sum_{t=1}^\timehorizon \P(\Rt \cap \Ct^\complement ) \leq C_2 \,  \log(\timehorizon) \sum_{i \in [\items]\backslash\{\imax\}} \frac{1}{\Delta_i^2}  + C_3 \items.
	\end{align*}	
\end{lemma}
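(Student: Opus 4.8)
The plan is to handle the two inequalities separately; the first is a routine union–concentration bound, and the second is where the real work lies.

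\emph{First inequality.} I would bound $\P(\tilde C_t)$, which dominates $\P(\Ct)$, where $\tilde C_t:=\{\exists\, i,j:\ |\hat q_{i,j}(t)-q_{i,j}|>c_{i,j}(t)\}\supseteq\Ct$. Union‑bound over the $\items^2$ ordered pairs, and for a fixed pair decompose according to the (random) value $r=\overline{w}_{i,j}(t)\in\{1,\dots,t-1\}$. Since by construction $c_{i,j}(t)^2\,\overline{w}_{i,j}(t)=2\log(\items\,t^{3/2})$, Lemma~\ref{lemma_saha_gopalan} applied with $\varepsilon=c_{i,j}(t)$ gives $\P\big(|\hat q_{i,j}(t)-q_{i,j}|>c_{i,j}(t),\ \overline{w}_{i,j}(t)=r\big)\le 2\exp\!\big(-4\log(\items\,t^{3/2})\big)=2(\items\,t^{3/2})^{-4}$, uniformly in $r$. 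Summing over the at most $t$ values of $r$ and over the $\items^2$ pairs yields $\P(\tilde C_t)=\Oterm(\items^{-2}t^{-5})$, so $\sum_{t\ge 1}\P(\Ct)\le\sum_{t\ge 1}\P(\tilde C_t)\le C_1$ with $C_1$ independent of $\items$ and $\timehorizon$.

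\emph{Second inequality, reduction to the good event.} Because $\Rt\cap\Ct^\complement\subseteq(\Rt\cap\tilde C_t^\complement)\cup\tilde C_t$ and $\sum_t\P(\tilde C_t)\le C_1$, it suffices to bound $\sum_t\P(\Rt\cap\tilde C_t^\complement)$. On $\tilde C_t^\complement$, every pairwise estimate is on the correct side of $1/2$ once it has been sampled enough: from $q_{\imax,j}-\tfrac12=\Delta_j/\big(2(\skillmax+\skill_j)\big)\ge\Delta_j/4$ one gets $\hat q_{\imax,j}(t)>1/2$ as soon as $\overline{w}_{\imax,j}(t)>\tau_j:=32\log(\items\,\timehorizon^{3/2})/\Delta_j^2$. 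If this held for every $j\neq\imax$ then $\imax$ would empirically beat all other arms and hence be the unique maximiser of the empirical Copeland score, i.e.\ $J(t)=\imax$, contradicting $\Rt$. Thus on each bad round $\Rt\cap\tilde C_t^\complement$ there is a \emph{witness} arm $j=j(t)\neq\imax$ with $\overline{w}_{\imax,j}(t)\le\tau_j$; moreover $\imax$ is never removed from the active set on $\tilde C_t^\complement$ (discarding an arm $i$ forces its upper confidence bound against the reference below $1/2$, which on the good event would give $\skill_i\le\skill_{J(t)}$, impossible for $i=\imax$), so $\imax$ stays available throughout.

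\emph{Second inequality, the counting argument and the obstacle.} It remains to show $\sum_t\P(\Rt\cap\tilde C_t^\complement)=\Oterm\big(\sum_{j\neq\imax}\tau_j\big)+\Oterm(\items)$; since $\log(\items\,\timehorizon^{3/2})\le\tfrac52\log\timehorizon$ for $\timehorizon\ge\items$, the first term equals $\Oterm\big(\log(\timehorizon)\sum_{j\neq\imax}\Delta_j^{-2}\big)$, which is exactly the claimed bound. The mechanism is a potential/stopping‑time argument: for each $j$ let $\sigma_j$ be the first round with $\overline{w}_{\imax,j}(t)>\tau_j$, call a pair $(\imax,j)$ \emph{open} while $t<\sigma_j$, and charge every bad round to an open witness. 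In a bad round the reference arm $J(t)\neq\imax$ is always in $S_t$, and $\imax$ joins $S_t$ with probability $\hat t_{\imax,J(t)}$, which on $\tilde C_t^\complement$ is bounded below by an absolute constant once the pair $(\imax,J(t))$ is past its threshold; in the complementary situation $J(t)$ itself is an open witness, and that family of rounds is accounted for directly. Once $\imax\in S_t$, the ``winner updates all'' rule (line~16) makes progress cheap: if the selector then picks $\imax$ — probability $1/\sum_{k\in S_t}\skill_k\ge 1/|S_t|$ — every pair $(\imax,k)$, $k\in S_t\setminus\{\imax\}$, is incremented at once, so in expectation at least $(|S_t|-1)/|S_t|\ge 1/2$ of the $(\imax,\cdot)$ pairs advance per bad round; hence all open pairs are exhausted after $\Oterm(\sum_j\tau_j)$ bad rounds plus an $\Oterm(\items)$ start‑up term. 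I expect the main obstacle to be precisely this last step: matching every bad round to a not‑yet‑exhausted witness while the active set shrinks and the randomised inclusion probabilities vary, and in particular bounding the number of bad rounds in which $\hat t_{\imax,J(t)}$ is itself small — this is where the additive $\Oterm(\items)$ contribution originates.
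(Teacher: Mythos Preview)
Your treatment of the first inequality is essentially the paper's (you union-bound over all $\items^2$ ordered pairs rather than over the $\items$ pairs against the current reference $J$, but the extra factor is harmless).

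For the second inequality your route diverges from the paper's, and the obstacle you flag is a genuine gap, not just a technicality. The difficulty is precisely that a bad round need not contain the open witness: you know that some $j$ with $\overline{w}_{\imax,j}(t)\le\tau_j$ exists, but neither this $j$ nor $\imax$ is guaranteed to lie in $S_t$, so no open counter is forced to advance. Your Case~A/B split does not close this, since in Case~A the witness may well be an arm other than $J(t)$ and be absent from $S_t$, and then the ``winner updates all'' mechanism does nothing for that pair. Without a separate argument controlling how often this occurs, the charging scheme is not shown to terminate in $\Oterm\big(\sum_j\tau_j\big)$ bad rounds.

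The paper bypasses charging entirely. After showing that $\imax$ stays active on $\Ct^\complement$, it works with the signed process $M_t^{i,\imax}:=w_{i,\imax}(t)-w_{\imax,i}(t)$ for each active $i\neq\imax$ and argues that its per-step drift is at most $-C\Delta_i$ for a constant $C>0$ depending only on $\paramspace$, so that $\E M_t^{i,\imax}\le -C\Delta_i\,(t-1)$. An Azuma--Hoeffding inequality for $\{-1,0,1\}$-valued increments (Lemma~\ref{lemma_bouning_inequality_ternary}) then gives $\P(M_t^{i,\imax}\ge 0)\le\exp\big(-c\,\Delta_i^2\,(t-1)\big)$. Since $\Rt\cap\Ct^\complement$ forces some active $i$ with $M_t^{i,\imax}\ge 0$, a union bound over $i$ and a split of $\sum_t$ at order $\log(\timehorizon)/\Delta_i^2$ yield the stated bound directly --- concentration in the \emph{round index} $t$ replaces any need to track which pairs actually appear in $S_t$.
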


\begin{lemma} \label{lemma_second_aux_lemma_CBR}
	There exist constants $C_1,C_2>0$ independent of\, $T$ and $\items$ (depending if at all on the parameter space $\paramspace$), such that
	$$ 	\sum_{t=1}^\timehorizon \P( \Ct^\complement \cap \Rt^\complement \cap \Et  ) \leq C_1 \,  \log(\timehorizon) \sum_{i \in [\items]\backslash\{\imax\}} \frac{1}{\Delta_i^2}  + C_2 \items.	$$
\end{lemma}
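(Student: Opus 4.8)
The plan is to bound $N:=\sum_{t=1}^{\timehorizon}1_{\Ct^\complement\cap\Rt^\complement\cap\Et}$ \emph{pathwise}, so that taking expectations is the only (trivial) probabilistic step at the end. Fix a sample path. On $\Rt^\complement$ the reference arm satisfies $J(t)=\imax$, and since $S_t$ is initialized as $\{J(t)\}$ in line~6 we have $\imax\in S_t$; together with $\Et$ (i.e.\ $|S_t|>1$) this guarantees at least one arm $i\in S_t\setminus\{\imax\}$. For $i\ne\imax$ put $\tau_i:=128\,\log(\items\,\timehorizon^{3/2})/\Delta_i^2$.

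\emph{Step 1 (inclusion forces under-sampling).} I would first show that on $\Ct^\complement\cap\Rt^\complement$ any $i\in S_t\setminus\{\imax\}$ has $\overline w_{i,\imax}(t)<\tau_i$, where $\overline w_{i,\imax}(t)=w_{i,\imax}(t)+w_{\imax,i}(t)$. The event $\{i\in S_t\}$ forces the Bernoulli of line~10 to have fired, hence $\hat t_{i,\imax}(t)=\sigma\!\big((\hat q_{i,\imax}(t)+c_{i,\imax}(t)-\tfrac12)/(2c_{i,\imax}(t))\big)>0$, which by $\sigma(x)>0\Leftrightarrow x>0$ means $\hat q_{i,\imax}(t)+c_{i,\imax}(t)>\tfrac12$. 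On $\Ct^\complement$ one has $\hat q_{i,\imax}(t)\le q_{i,\imax}+c_{i,\imax}(t)$, so $q_{i,\imax}+2c_{i,\imax}(t)>\tfrac12$. Since $\skillmax=1$, $q_{i,\imax}=\skill_i/(1+\skill_i)$ and $\tfrac12-q_{i,\imax}=(1-\skill_i)/\big(2(1+\skill_i)\big)\ge\Delta_i/4$, whence $c_{i,\imax}(t)>\Delta_i/8$; inserting the definition of $c_{i,\imax}(t)$ and using $t\le\timehorizon$ gives $\overline w_{i,\imax}(t)<\tau_i$. (The degenerate case $\overline w_{i,\imax}(t)=0$, where $c_{i,\imax}(t)=\infty$, satisfies the conclusion trivially.)

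\emph{Step 2 (each such round advances an under-sampled pair).} Whatever the selector's pick $i_t\in S_t$ is, line~16 increments $w_{i_t,j}$ for every $j\in S_t\setminus\{i_t\}$. If $i_t=\imax$, this increments $\overline w_{j,\imax}$ for every $j\in S_t\setminus\{\imax\}$, a nonempty set; if $i_t\ne\imax$, then $\imax\in S_t\setminus\{i_t\}$, so $\overline w_{i_t,\imax}$ is incremented and $i_t\in S_t\setminus\{\imax\}$. In either case there is an arm $j^\ast=j^\ast(t)\in S_t\setminus\{\imax\}$ with $\overline w_{j^\ast,\imax}(t+1)=\overline w_{j^\ast,\imax}(t)+1$, and by Step~1, $\overline w_{j^\ast,\imax}(t)<\tau_{j^\ast}$.

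\emph{Step 3 (accounting) and conclusion.} Tag each round $t$ counted by $N$ with the pair $\big(j^\ast(t),\ \overline w_{j^\ast(t),\imax}(t)\big)$. Each counter $\overline w_{j,\imax}(\cdot)$ is nondecreasing and, by Step~2, strictly increases at every tagged round, so this tagging is injective with image contained in $\{(j,m):j\ne\imax,\ 0\le m<\tau_j\}$; hence $N\le\sum_{j\ne\imax}\lceil\tau_j\rceil\le(\items-1)+\sum_{j\ne\imax}\tau_j$. Using $\items<\timehorizon$ so that $\log(\items\,\timehorizon^{3/2})\le\tfrac52\log\timehorizon$, this is the pathwise bound $N\le\items+320\,\log(\timehorizon)\sum_{i\in[\items]\setminus\{\imax\}}\Delta_i^{-2}$, and taking expectations gives the claim with $C_1=320$, $C_2=1$. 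The steps I expect to need the most care are the case analysis in Step~2 (verifying that for \emph{every} possible $i_t$ some pair $\{j,\imax\}$ with $j$ an included, hence under-sampled, arm is updated) and the $\overline w_{i,\imax}(t)=0$ boundary in Step~1; the rest is bookkeeping, and the contribution of the complementary rounds in $\Ct$ is absorbed by the first estimate of Lemma~\ref{lemma_first_aux_lemma_CBR}.
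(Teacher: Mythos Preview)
Your argument is correct and genuinely different from the paper's. The paper proceeds probabilistically: it first establishes, via an auxiliary concentration inequality (its Lemma~\ref{lemma_counting_processes}), that $\P\big(\overline w_{i,\imax}(t)\le (t-1)C/2\big)\le\exp\big(-(t-1)C^2/8\big)$, then reduces $\P(\Ct^\complement\cap\Rt^\complement\cap\Et)$ to $\sum_{i\ne\imax}\P\big(\overline w_{i,\imax}(t)\le 20\log(\timehorizon)/(1/2-q_{i,\imax})^2\big)$ by the same algebra you use in Step~1, and finally splits the sum over $t$ into an initial segment (bounded trivially by its length) and a tail (bounded by the exponential decay). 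Your route bypasses the concentration step entirely: by observing that on each counted round the pair counter $\overline w_{j^\ast,\imax}$ of some included, hence under-sampled, arm $j^\ast$ is necessarily advanced, you get a pathwise bound $N\le\sum_{i\ne\imax}\lceil\tau_i\rceil$ via an injective tagging. This is more elementary (no martingale/Azuma argument needed), yields explicit constants, and sidesteps the somewhat delicate lower bound on $\E\,\overline w_{i,\imax}(t)$ that the paper invokes. The paper's approach, on the other hand, is the more conventional template for such arguments and would extend more readily to settings where the deterministic ``every bad round increments a relevant counter'' property fails. Your Step~2 case analysis and the injectivity check are sound; in particular, $\imax\in S_t$ on $\Rt^\complement$ guarantees that whichever arm is picked, some $\overline w_{j,\imax}$ with $j\in S_t\setminus\{\imax\}$ increases.
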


\begin{lemma} \label{lemma_third_aux_lemma_CBR}
	%
	It holds that
	$$ \rpreselect(S_t) \leq  \frac{\max \{\skillmin^{(\gamma-1)/(\gamma)},\skillmin^{(1-\gamma)/(\gamma)} \} }{ \gamma \, \skillmin} \sum_{i\in [\items] \backslash \optsubset}\Delta_i.	$$
\end{lemma}

\paragraph{Putting all together.}

Recalling the cumulative regret in (\ref{defi_regret}), we obtain
\begin{align*}
\E \,\big( \regret(\timehorizon)\big)
&= \sum_{t=1}^\timehorizon \E \, \rpreselect(S_t) \\
&\leq \sum_{t=1}^\timehorizon \P( \Ct ) + \sum_{t=1}^\timehorizon \E \, \rpreselect(S_t) 1_{ R_t\cap \Ct^\complement}   
 +  \sum_{t=1}^\timehorizon \E \, \rpreselect(S_t)1_{ \Ct^\complement \cap R_t^\complement  }  \\
&\leq  \sum_{t=1}^\timehorizon \P( \Ct ) + \sum_{t=1}^\timehorizon  \E \, \rpreselect(S_t) 1_{ R_t\cap \Ct^\complement}   
+ \sum_{t=1}^\timehorizon \E \, \rpreselect(S_t) 1_{ \Ct^\complement \cap R_t^\complement \cap E_t  }    
+ \sum_{t=1}^\timehorizon \E \, \rpreselect(S_t) 1_{ \Ct^\complement \cap R_t^\complement \cap E_t^\complement  }     \\
&\leq C_0 \items + C_1 \, \frac{\max \{\skillmin^{(\gamma-1)/(\gamma)},\skillmin^{(1-\gamma)/(\gamma)} \} }{ \gamma \, \skillmin } \, \log(\timehorizon)  \sum_{i \in [\items]\backslash\{\imax\}} \frac{\sum_{i\in [\items] \backslash \optsubset}\Delta_i}{\Delta_i^2},
\end{align*}
where we used Lemma \ref{lemma_first_aux_lemma_CBR} and Lemma \ref{lemma_second_aux_lemma_CBR} to derive the constants $C_0,C_1>0,$ which are both independent of $\timehorizon$ and $\items,$ while Lemma \ref{lemma_third_aux_lemma_CBR} introduced the factor accompanying $C_1.$ 
Furthermore, we used that on $  R_t^\complement \cap E_t^\complement$ we have that $ S_t$ equals $ \{\imax\} = \optsubset$  and thus $\rpreselect(S_t)=0.$
%
%

\subsection{Proofs of the core lemmas in Subsection \ref{subsec:outline_proof_CBR}}

\begin{proof} [\sl Proof of Lemma \ref{lemma_first_aux_lemma_CBR}]
	Using Lemma \ref{lemma_saha_gopalan} one obtains
	\begin{align*}
	\P( \Ct  ) 
	\leq \sum_{i \in [\items]}  \sum_{r=1}^{t-1}  \P\big(  | \hat q_{i,J}(t) 	- q_{i,J} | > c_{i,J}(t)	, \ \wiJbar(t)=r 		\big)
	&\leq 2 \items \sum_{r=1}^{t-1} \exp(-4 \log( \items t^{3/2})) \leq \nicefrac{2}{t^5}.
	\end{align*}
	By summing over $t$ till $\timehorizon,$ we get
	$	\sum_{t=1}^\timehorizon	\nicefrac{2}{t^5} < 	2 \sum_{t=1}^\infty	\nicefrac{1}{t^2} = \pi^2/3, $
	which yields the first claim.
	
	For the second claim, let $A_t$ denote the set of active arms at time instance $t,$ i.e.,
	\begin{align*}
	A_t = \Big\{	i\in[\items] \, \big| \,  \sigma\Big(\frac{\hat q_{i,J(s)}(s) + c_{i,J(s)}(s) -1/2}{2 c_{i,J(s)}(s)}  \Big)  > 0, \	\forall s \in [t]	\Big\}	
	\end{align*}
	It holds that conditioned on $\Ct^\complement$ we have that $ \imax \in A_t$ almost surely.
	Indeed, 
	\begin{align*}
	\P( \{\imax \notin A_t\} \cap \, \Ct^\complement) 
	= \P\Big(  \sigma\Big(\frac{\hat q_{\imax,J}(t) + c_{\imax,J}(t) -1/2}{2 c_{\imax,J}(t)}  \Big)  \leq 0, \, \Ct^\complement\Big) 
	&= \P\Big(   \hat q_{\imax,J}(t) + c_{\imax,J}(t) \leq 1/2 , \, \Ct^\complement\Big) \\
	&\leq \P\Big(    q_{\imax,J}(t)  \leq 1/2 ) = 0,
	\end{align*}
	where we used that $\sigma(x)\leq 0$ iff $x\leq 0$ and for the last inequality that $\hat q_{\imax,J}(t) + c_{\imax,J}(t) \geq  q_{\imax,J}(t)$  on $\Ct^\complement,$ while $q_{\imax,J}(t)  > 1/2 $ holds by definition of $\imax.$  \\
	Next, consider the counting process $M_t^{i,\imax}:= w_{i,\imax} - w_{\imax,i}$ for some $i\in A_t\backslash\{\imax\}$ and define for sake of brevity the event $ \tilde S_s^{i} = \{  \{i,\imax\} \in S_s	\}.$
	Note that $M_t^{i,\imax}$ can be written as $$  M_t^{i,\imax} = \sum_{s=1}^{t-1}  1_{	\{  \{i_s= i\} \cap \tilde S_s^{i} \}	}   - 1_{	\{  \{i_s= \imax ,\} \cap \tilde S_s^{i}	\}	}.$$
	It holds that the event $\{ \{i,\imax\} \in S_s\}$ has a strictly positive probability for any arm $i \in  A_t\backslash\{\imax\}$ and any $s \in [t],$ as otherwise the arm would not be active anymore.
	%
	Conditioned on some set $S_s$ we have that 
	\begin{align*}
	\P\big(\{  i_s= i 	\} \big) & - \P\big(\{  i_s= \imax  \}\big) 
	 = \frac{\skill_i}{\sum_{j\in S_s} \skill_j}  - \frac{\skillmax}{ \sum_{j\in S_s} \skill_j }  \leq   - \frac{\Delta_i}{\Gaptwo },
	\end{align*}
	where $\Gaptwo = \sum_{i\in [\items]} \skill_i.$  
	Thus, we can find a constant $C>0,$ which depends only on $\paramspace$ such that for each  $s \in [t]$
	\begin{align*}
	\P\big(\{  i_s= i 	\}  \cap \tilde S_s^{i} \big) & - \P\big(\{  i_s= \imax  \}  \cap \tilde S_s^{i} \big)\leq - \Delta_i \, C.
	\end{align*}
	Therefore, $	\E 	 M_t^{i,\imax} \leq   - (t-1) \, C \, \Delta_i 	$ 
	%
	and by Lemma \ref{lemma_bouning_inequality_ternary} it follows that
	\begin{align*}
	\P( w_{i,\imax}  \geq  w_{ \imax,i}  ) 
	=	\P(M_t^{i,\imax} \geq 0) \leq \P(M_t^{i,\imax} \geq    - 2 (t-1) \, C \, \Delta_i) 
	&\leq \exp\big(	-\frac{C^2 \, \Delta_i^2  (t-1) }{8  }	\big). 
	\end{align*}
	The event $\Rt$ is contained in the event that there exists an active arm $i$ such that the winning count of $\imax$ against $i$ is smaller than the winning count of $i$ against $\imax,$ that is $M_t^{i,\imax}\geq 0.$ 
	Hence, using the union bound in combination with the latter display we obtain 
	\begin{align*}
	\sum_{t=1}^\timehorizon \P(\Rt \cap \Ct^\complement)  
	&\leq 	\sum_{t=1}^\timehorizon \sum_{i\in [\items] \backslash\{\imax\}}   \exp\big(	-\frac{C^2 \, \Delta_i^2  (t-1) }{8  }	\big)\\
	& =    \sum_{i\in [\items] \backslash\{\imax\}}  	\sum_{t=1}^{ \lceil \nicefrac{8 \, \, \log(\timehorizon)}{ C^2 \Delta_i^2} \rceil }    \exp\big(	-\frac{C^2 \, \Delta_i^2  (t-1) }{8  }	\big) 
 + \sum_{t\geq \lceil \nicefrac{8 \,  \, \log(\timehorizon)}{ C^2 \Delta_i^2} \rceil}^{\timehorizon}  \  \exp\big(	-\frac{ C^2 \, \Delta_i^2  (t-1) }{8   }	\big) \\
	&\leq \frac{8 \, \log(\timehorizon)}{C^2} \sum_{i\in [\items] \backslash\{\imax\}} \frac{ 1  }{ \Delta_i^2}	+ 2\, \items \, \timehorizon \exp(-\log(\timehorizon)),
	\end{align*}		
	from which we can conclude the lemma.
	%
	%
\end{proof}

\begin{proof} [\sl Proof of Lemma \ref{lemma_second_aux_lemma_CBR}]
	For any $i \neq \imax$ we have that
	$$	\E (\overline{w}_{i,\imax}(t)) = \sum_{s=1}^{t-1} \P( i_s \in \{i,\imax\}  , \{i,\imax\} \in S_s  ).	$$ 
	Now, similar as in the proof of Lemma \ref{lemma_first_aux_lemma_CBR} before, we can find a constant $\tilde C>0$ which depends if at all on $\paramspace$ such that 
	$ \P( i_s \in \{i,\imax\}  , \{i,\imax\} \in S_s  ) \geq \skillmin \tilde C $ for any active arm $i$ and each $s\in[t].$
	With this, we obtain that $\E (\overline{w}_{i,\imax}(t)) \geq (t-1) \skillmin \tilde C.$
	Using Lemma \ref{lemma_counting_processes} with $\overline{w}_{i,\imax}$ as the counting process one can derive that there exists a constant $C>0$ depending on $\paramspace$ such that
	\begin{align} \label{ineq_help_second_aux_CBR}
	\P\Big(\overline{w}_{i,\imax}(t) \leq \frac{(t-1) C}{2} \Big) \leq \exp\big(	- \frac{(t-1)  C^2}{8}		\big).
	\end{align}
	Next, write for short $\delta_{i,\imax} = \hat q_{i,\imax}(t) + c_{i,\imax}(t) -1/2$ and note that 
	\begin{align*}
	\P( \Ct^\complement \cap \Rt^\complement \cap \Et  ) 
	&= \P( \exists i\neq \imax: \,  \{ i \in S_t  \}   , \, \Ct^\complement \cap \Rt^\complement   ) \\
	&\leq \sum_{i\in [\items] \backslash\{\imax\}} \P\Big(  \sigma\Big(\frac{ \delta_{i,\imax} }{2 c_{i,\imax}(t)}  \Big)  \geq 0,  \, \Ct^\complement \cap \Rt^\complement   \Big) \\
	&\leq \sum_{i\in [\items] \backslash\{\imax\}} \P\Big( \delta_{i,\imax}  \geq 0,  \, \Ct^\complement \cap \Rt^\complement   \Big) \\
	&\leq \sum_{i\in [\items] \backslash\{\imax\}} \P\Big(    2 c_{i,\imax}(t) \geq 1/2  - q_{i,\imax}   \Big) \\
	&= \sum_{i\in [\items] \backslash\{\imax\}} \P\Big(    \overline{w}_{i,\imax}(t) \leq \frac{8 \log(nt^{3/2})}{(1/2  - q_{i,\imax} )^2}  \Big) \\
	&\leq \sum_{i\in [\items] \backslash\{\imax\}} \P\Big(    \overline{w}_{i,\imax}(t) \leq \frac{20 \log(\timehorizon)}{(1/2  - q_{i,\imax} )^2}  \Big),
	\end{align*}
	where we used that $J(t) = \imax$ on $\Rt^\complement$ for the first inequality, $\sigma(x)\leq 0$ iff $x\leq 0$ for the second inequality, for the third inequality that $\hat q_{i,\imax}(t) - c_{i,\imax}(t) \leq  q_{i,\imax}(t)$  on $\Ct^\complement,$ while the last inequality is due to $\log(\items t^{3/2}) \leq 5/2 \log(\timehorizon),$ as $\max\{n,t\}\leq \timehorizon.$ 
	One can find constants $C_i \in [1/4,1/2]$ such that $1/2  - q_{i,\imax}  = C_i \Delta_i.$
	Indeed, note that $1/2  - q_{i,\imax} = \nicefrac{\Delta_i}{(2(\skill_i+\skillmax))}$ and it holds that
	$$	\frac{\Delta_i}{4} \leq 	\frac{\Delta_i}{2(\skill_i+\skillmax)}	\leq \frac{\Delta_i}{2}.	$$
	Hence, with these considerations one obtains
	\begin{align*}
	\sum_{t=1}^\timehorizon \P( \Ct^\complement \cap \Rt^\complement \cap \Et  ) 
	&\leq 	\sum_{t=1}^\timehorizon \sum_{i\in [\items] \backslash\{\imax\}} \P\Big(    \overline{w}_{i,\imax}(t) \leq \frac{20 \log(\timehorizon)}{C_i^2 \Delta_i^2}  \Big) \\
	&\leq  \sum_{i\in [\items] \backslash\{\imax\}}  \frac{40 \log(\timehorizon)}{ C C_i^2 \Delta_i^2}  
	+
	\sum_{i\in [\items] \backslash\{\imax\}} \sum_{t=\lceil \frac{40 \log(\timehorizon)}{ C C_i^2 \Delta_i^2} \rceil }^{\timehorizon} \P\Big(    \overline{w}_{i,\imax}(t) \leq \frac{20 \log(\timehorizon)}{C_i^2 \Delta_i^2}  \Big).
	%
	\end{align*}
	Now, the summation over $t$ on the right-hand side of the last display is such that 
	$ \nicefrac{20 \log(\timehorizon)}{C_i^2 \Delta_i^2}  \leq \nicefrac{(t-1) C}{2}.$
	Thus, we can use (\ref{ineq_help_second_aux_CBR}) to further estimate the last display by
	\begin{align*}
	\sum_{t=1}^\timehorizon \P( \Ct^\complement \cap \Rt^\complement \cap \Et  ) \leq  \frac{40 \log(\timehorizon)}{ C }  \sum_{i\in [\items] \backslash\{\imax\}} \frac{1}{C_i^2 \Delta_i^2}   +
	C_1 \items \timehorizon^{-C_2},
	\end{align*}
	for some constants $C_1,C_2>0.$
	From the latter display we can conclude the lemma.
\end{proof}

\begin{proof}[\sl Proof of Lemma \ref{lemma_third_aux_lemma_CBR}]
	Note that 
	\begin{align*}
	\rpreselect(S_t) 
	= 	\reward(\optsubset) - \reward(S_t) 
	= \frac{\sum_{i\in S_t} (v_{\max} - v_i) v_i^\gamma }{\sum_{i\in S_t} v_i^\gamma} 	 
	= \frac{\sum_{i\in S_t} (\skillmax^{1/\gamma} - \skill_i^{1/\gamma}) \skill_i }{\sum_{i\in S_t} \skill_i }. 	
	\end{align*}
	Since $\theta \in \paramspace$ it holds that  
	$	\sum_{i\in S_t} \skill_i \geq \skillmin.	$
	With this, and the fact that $\skill_i \leq \skillmax=1,$  we can infer that
	\begin{align*}
	\rpreselect(S_t)  \leq \skillmin^{-1} \sum_{i\in S_t} (\skillmax^{1/\gamma} - \skill_i^{1/\gamma}) 
	\leq   \skillmin^{-1}  \sum_{i\in [\items]\backslash \optsubset} (\skillmax^{1/\gamma} - \skill_i^{1/\gamma}).
	\end{align*}
	Considering the function $f(x)=x^{1/\gamma}$ defined for $x\in[\skillmin,\skillmax]$ the assertion follows easily by the mean-value theorem as in the proof of Lemma \ref{lemma_lipschitz_cont_reward}.
\end{proof}

\subsection{Technical results}

In this subsection we collect the technical auxiliary results needed for the proofs of the core lemmas.
These technical results could  also be of independent interest.

The next two lemmas were of major importance for the proof of Lemma \ref{lemma_first_aux_lemma_CBR}.

\begin{lemma} \label{lemma_concentration_ternary}
	Let $M_t=\sum_{s=1}^t Z_s,$ where $(Z_s)_{s=1,\ldots,t}$ are random variables with values in $\{-1,0,1\},$ such that $\F_s$ is the canonical filtration generated by $\{Z_1,\ldots,Z_{s-1}\}$ and $Z_{s+1}$ is conditionally independent of $Z_{s+2},\ldots,Z_{t}$ given $\F_s.$
	We have that for any $z>0$
	$$	\P(M_t - \E(M_t) > z  ) \leq \exp\big(	-\frac{z^2}{8\,t}	\big).			$$
\end{lemma}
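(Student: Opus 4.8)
The plan is to run the exponential-moment (Chernoff) argument underlying Hoeffding's inequality, adapted to the non-adapted structure assumed here. Fix $\lambda>0$; by Markov's inequality applied to $e^{\lambda(M_t-\E M_t)}$,
\[
\P\big(M_t-\E M_t>z\big)\;\le\; e^{-\lambda z}\,\E\!\big[e^{\lambda(M_t-\E M_t)}\big].
\]
Hence it suffices to establish $\E\!\big[e^{\lambda(M_t-\E M_t)}\big]\le e^{2t\lambda^2}$, since taking $\lambda=z/(4t)$ then gives $e^{-\lambda z+2t\lambda^2}=\exp(-z^2/(8t))$, which is exactly the asserted bound.

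To control the moment generating function I would peel off the increments one at a time, in the order the hypothesis permits. Setting $W_s=Z_s-\E Z_s$, each $W_s$ is centered and, because $Z_s\in\{-1,0,1\}$, satisfies $|W_s|\le 2$, so $W_s$ lies in an interval of length $4$; Hoeffding's lemma therefore bounds the relevant (conditional) exponential moment of $W_s$ by $e^{\lambda^2\cdot 4^2/8}=e^{2\lambda^2}$ (a crude length bound, but enough for the stated constant $8$). The role of the hypothesis is to license the factorization at each stage: since $\F_s=\sigma(Z_1,\dots,Z_{s-1})$ and $Z_{s+1}$ is conditionally independent of $Z_{s+2},\dots,Z_t$ given $\F_s$, integrating out the currently peeled variable against its conditional law does not disturb the exponential factor built from the remaining increments. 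This substitutes for the usual martingale-difference property, which is unavailable because $M_t$ need not be a martingale with respect to $(\F_s)$. Multiplying the $t$ single-variable factors yields $\E\!\big[e^{\lambda(M_t-\E M_t)}\big]\le e^{2t\lambda^2}$, as required.

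This is precisely the ternary analogue of Lemmas~12 and~13 of \cite{kocsis2006improved}, where the increments are $\{0,1\}$-valued; passing to $\{-1,0,1\}$ only changes the bookkeeping of increment ranges. I expect the main obstacle to be the second paragraph: making the iterated-conditioning step rigorous when $\F_s$ does not contain $Z_s$ and $(Z_s)$ is not adapted in the usual martingale sense, so that the conditional-independence assumption must be invoked carefully, in exactly the form stated, to justify treating the peeled variable as ``fresh'' at each step. The remaining ingredients — Hoeffding's lemma and the optimization over $\lambda$ — are routine.
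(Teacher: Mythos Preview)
Your Chernoff--Hoeffding outline and the paper's argument rest on the same machinery, but the paper packages the delicate step differently, and that difference resolves exactly the obstacle you flag. Rather than peeling the raw increments $W_s=Z_s-\E Z_s$, the paper passes to the Doob martingale $X_i=\E[f(Z_1,\dots,Z_t)\mid\F_i]$ for $f(z_1,\dots,z_t)=\sum_s z_s$; the conditional-independence hypothesis is used (via Lemma~11 of \cite{kocsis2006improved}) to show that $(X_i)$ has bounded martingale differences, and then Azuma--Hoeffding applies directly to the genuine martingale $X_i-\E M_t$. This is the clean way to make your ``integrating out the currently peeled variable'' rigorous: your $W_s$ are centered unconditionally but not conditionally on $\F_s$, so Hoeffding's lemma cannot be applied to the conditional law of $W_s$ as written; the Doob increments $X_i-X_{i-1}$, by contrast, are automatically conditionally centered and bounded. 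In short, your plan is correct up to and including the optimization over $\lambda$, but the factorization step should be replaced by the Doob-martingale detour --- which is, in effect, the canonical device for turning a non-adapted sum with the stated conditional-independence structure into a bounded-difference martingale.
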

\begin{proof}  [\sl Proof of Lemma \ref{lemma_concentration_ternary}]
	The function $f(z_1,\ldots,z_t) = z_1+\ldots+z_t$ is Lipschitz-continuous with Lipschitz constant $L=2$ if $-1 \leq z_i \leq 1$ for each $i.$
	It is a well-known result that the sequence of random variables $(X_i)_{i=1,\ldots,t}$ with $X_i = \mathbb{E} [ f(Z_1,\ldots,Z_t) | \mathcal{F}_i  ]$ is a martingale (the so-called Doob martingale) with bounded differences $|X_{i+1}-X_i|\leq 2L =4$ (cf.\ Lemma 11 in \citet{kocsis2006improved}).
	%
	Consider the martingale difference sequence $\tilde X_i = X_i - \mathbb{E} X_i =  X_i - \mathbb{E} M_t$  and note that $\tilde X_t =  X_t - \mathbb{E} X_t = M_t - \mathbb{E} M_t$ and $\tilde X_0 = 0$ by setting $\mathcal{F}_0=\{\varnothing,\Omega\}.$ 
	Thus, the Azuma-Hoeffding inequality implies for any $z>0$ that  
	\begin{align*}
	\P(  M_t - \mathbb{E}(M_t)>  z)  &=  \P( \tilde X_t - \tilde X_0 >  z)    \leq \exp(-z^2/(8\, t)).	
	\end{align*}	
\end{proof}

\begin{lemma} \label{lemma_bouning_inequality_ternary}
	Consider the setting of Lemma \ref{lemma_concentration_ternary} and assume that there exists $\Delta_t$ such that $\E(M_t)\leq  \Delta_t/2.$
	Then,
	$$	\P(M_t \geq \Delta_t  ) \leq \exp\big(	-\frac{\Delta_t^2}{32\,t}	\big).			$$
\end{lemma}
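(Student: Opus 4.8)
The plan is to derive this directly from Lemma~\ref{lemma_concentration_ternary} by a single deterministic shift of the deviation level. First I would rewrite the event of interest in terms of the centered endpoint of the martingale: subtracting the constant $\E(M_t)$ from both sides of an inequality leaves the event unchanged, so
$$\{M_t \geq \Delta_t\} = \{M_t - \E(M_t) \geq \Delta_t - \E(M_t)\}.$$
Setting $z := \Delta_t - \E(M_t)$, the hypothesis $\E(M_t) \leq \Delta_t/2$ gives $z \geq \Delta_t - \Delta_t/2 = \Delta_t/2 > 0$ (here $\Delta_t>0$ is used, which is the regime in which the lemma is invoked in the proof of Lemma~\ref{lemma_first_aux_lemma_CBR}). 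In particular $z$ is a strictly positive constant, so the tail bound of Lemma~\ref{lemma_concentration_ternary} is applicable to the level $z$.

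Next I would apply Lemma~\ref{lemma_concentration_ternary} with this $z$ — replacing its strict inequality by a non-strict one by taking $z - \varepsilon$ with $\varepsilon \downarrow 0$, which is harmless since $M_t$ is integer-valued — to get
$$\P\big(M_t - \E(M_t) \geq z\big) \leq \exp\Big(-\frac{z^2}{8t}\Big).$$
Since $z \geq \Delta_t/2 > 0$ implies $z^2 \geq \Delta_t^2/4$, the right-hand side is at most $\exp\big(-\Delta_t^2/(32t)\big)$, and combining with the first display yields $\P(M_t \geq \Delta_t) \leq \exp\big(-\Delta_t^2/(32t)\big)$, as claimed.

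I do not expect a genuine obstacle here: all of the probabilistic content (the Doob-martingale construction with bounded differences and the Azuma–Hoeffding estimate) is already carried by Lemma~\ref{lemma_concentration_ternary}, and the present statement is a one-line corollary. The only points that need a word of care are bookkeeping ones — making sure the shifted level $z = \Delta_t - \E(M_t)$ stays strictly positive so that the one-sided tail bound of Lemma~\ref{lemma_concentration_ternary} can be invoked, and converting its strict inequality to a non-strict one for the discrete-valued $M_t$.
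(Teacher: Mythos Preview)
Your proposal is correct and follows essentially the same route as the paper: both rewrite the event in centered form, use the assumption $\E(M_t)\leq \Delta_t/2$ to ensure the deviation level is at least $\Delta_t/2$, and then apply Lemma~\ref{lemma_concentration_ternary} with $z=\Delta_t/2$ (or, equivalently, with $z=\Delta_t-\E(M_t)\geq \Delta_t/2$) to obtain the $\exp(-\Delta_t^2/(32t))$ bound. Your extra care about the strict versus non-strict inequality and the positivity of $\Delta_t$ is a nice touch that the paper leaves implicit.
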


\begin{proof} [\sl Proof of Lemma \ref{lemma_bouning_inequality_ternary}]
	\begin{align*}
	\P(  M_t \geq \Delta_t  )  
	=   	\P(  M_t \geq \E(M_t) + \Delta_t  -  \E(M_t))  
	\leq 	\P(  M_t \geq \E(M_t) + \Delta_t/2) 
	&\leq  \exp(-\Delta_t^2/(32\, t)),
	\end{align*}
	where we used Lemma \ref{lemma_concentration_ternary} in the last step.
\end{proof}

For the proof of Lemma \ref{lemma_second_aux_lemma_CBR} we use the following variant of Lemma 13 in \citet{kocsis2006improved}.
\begin{lemma} \label{lemma_counting_processes}
	Let $N_t=\sum_{s=1}^t Z_s,$ where $(Z_s)_{s=1,\ldots,t}$ are random variables with values in $\{0,1\},$ such that $\F_s$ is the canonical filtration generated by $\{Z_1,\ldots,Z_{s-1}\}$ and $Z_{s+1}$ is conditionally independent of $Z_{s+2},\ldots,Z_{t}$ given $\F_s.$
	If $\E N_t \geq 2 \Delta_t,$ for some $\Delta_t$ then
	$$	\P(N_t \leq \Delta_t) \leq \exp\big(	- \nicefrac{\Delta_t^2}{2 \, t}	\big).			$$
\end{lemma}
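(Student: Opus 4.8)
The plan is to reproduce the Doob martingale argument used for Lemma~\ref{lemma_concentration_ternary} (and, through it, Lemma~\ref{lemma_bouning_inequality_ternary}), but exploiting that the summands now lie in $\{0,1\}$ rather than $\{-1,0,1\}$, which halves the relevant Lipschitz constant and hence sharpens the exponent from $\exp(-z^2/(8t))$ to $\exp(-z^2/(2t))$. Concretely, first I would observe that the map $f(z_1,\ldots,z_t)=z_1+\cdots+z_t$ has the bounded-differences property with constant $1$: changing a single coordinate $z_s\in[0,1]$ alters $f$ by at most $1$. Under the conditional independence hypothesis on $(Z_s)_{s=1,\ldots,t}$ assumed in the lemma, the Doob martingale $(X_i)_i$ associated with $N_t$ (set up exactly as in the proof of Lemma~\ref{lemma_concentration_ternary}, with $\F_0=\{\varnothing,\Omega\}$, $X_0=\E N_t$, and terminal value $N_t$) then has increments bounded by $1$ in absolute value; this is precisely Lemma~11 of \cite{kocsis2006improved}, invoked as there.

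Next I would apply the Azuma--Hoeffding inequality to the lower tail. Writing $\tilde X_i:=X_i-\E X_i=X_i-\E N_t$, so that $\tilde X_0=0$ and $\tilde X_t=N_t-\E N_t$, the Azuma--Hoeffding bound applied to $-\tilde X_i$ gives, for every $z>0$,
\[
\P\big(\E N_t-N_t>z\big)=\P\big(-\tilde X_t>z\big)\le \exp\!\big(-z^2/(2t)\big),
\]
since the sum of the squared increment bounds is $t$. Taking $z=\Delta_t$ (the case $\Delta_t\le 0$ being trivial as $N_t\ge 0$) and using $\E N_t\ge 2\Delta_t$, one gets $\{N_t\le\Delta_t\}\subseteq\{N_t\le \E N_t-\Delta_t\}=\{\E N_t-N_t\ge\Delta_t\}$, whence $\P(N_t\le\Delta_t)\le\exp(-\Delta_t^2/(2t))$, which is the assertion. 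This mirrors the substitution $z=\Delta_t/2$ made in Lemma~\ref{lemma_bouning_inequality_ternary}.

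The only step requiring genuine care is the claim that the Doob martingale has increments bounded by the per-coordinate Lipschitz constant even though the $Z_s$ are not assumed independent; this is exactly the role of the conditional independence hypothesis, and it is supplied by Lemma~11 of \cite{kocsis2006improved}, so that the present statement is the $\{0,1\}$-valued counterpart of Lemma~13 of that reference. Everything else is routine, so I do not anticipate a substantive obstacle.
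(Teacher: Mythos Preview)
Your proposal is correct and follows essentially the same route as the paper. The only difference is cosmetic: the paper, after performing the same reduction $\{N_t\le\Delta_t\}\subseteq\{N_t\le\E N_t-\Delta_t\}$ via $\E N_t\ge 2\Delta_t$, simply cites Lemma~12 of \cite{kocsis2006improved} for the concentration bound $\P(N_t\le \E N_t-\Delta_t)\le\exp(-\Delta_t^2/(2t))$, whereas you spell out that lemma's Doob-martingale/Azuma--Hoeffding argument explicitly (in the same way the paper does for Lemma~\ref{lemma_concentration_ternary}).
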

\begin{proof} [\sl Proof of Lemma \ref{lemma_counting_processes}]
	By using $\E N_t \geq 2 \Delta_t,$ we have
	\begin{align*}
	\P(N_t \leq \Delta_t) = \P(N_t \leq \E N_t + \Delta_t - \E N_t)
	\leq \P(N_t \leq \E N_t - \Delta_t ) 
	\leq \exp\big(	- \nicefrac{\Delta_t^2}{2 \, t}	\big),
	\end{align*}
	where we used Lemma 12 of \citet{kocsis2006improved} for the last inequality.
\end{proof}

\section{Optimal subsets for restricted Pre-Bandits and an efficient algorithm for utility maximization } \label{subsec:algo_reward_maximization}

In this section, we show that the best arm is always element of the optimal preselection for the restricted Pre-Bandit case. 
Following this, we present a sophisticated algorithm (Algorithm \ref{algo_reward_maxi}) to avoid highly computational costs for determining the maximizing set in line 11 of Algorithm \ref{algo_explore_exploit}.

The following lemma, which can be verified by simple techniques of curve sketching, is the foundation for Algorithm \ref{algo_reward_maxi} and the proof of Lemma \ref{lemma_highest_score_in_optsubset}.
\begin{lemma} \label{lemma_help_func_reward_max}
	Let $0\leq a<b$ be real values, $(\skill_1,\ldots,\skill_\items) \in [a,b]^\items$ and $S\subseteq [\items]$ be a nonempty subset.
	Further, define $f_\gamma: [a,b] \to \R^+$ by 
	$$	f_\gamma(\skill) = f_\gamma(\skill;S) = \frac{\skill^{1+\gamma} + \sum_{ i \in S }  \skill_i^{1+\gamma} }{\skill^\gamma + \sum_{i\in S} \skill_i^{\gamma}}.			$$
	The following statements are valid.
	\begin{enumerate}
		\item [(i)] For $\tilde \skill= \nicefrac{\sum_{ i \in S }  \skill_i^{1+\gamma} }{\sum_{i\in S} \skill_i^\gamma} $ we have that $f_\gamma(\tilde \skill) = f_\gamma(0) = \tilde \skill. $
		\item [(ii)] $f_\gamma$ has a unique global minimum in $\bar \skill,$ which is the (unique) real-valued solution of the following equation in $x$
		\begin{align*}
		x^{1+\gamma} + (1+\gamma)\big(\sum_{ i \in S } v_i^\gamma		\big)x - \gamma \sum_{ i \in S }v_i^{1+\gamma} = 0.
		\end{align*}
		It holds that $f_\gamma$ is strictly decreasing in $[a,\bar \skill]$ and strictly increasing in $[\bar \skill,b].$
		Moreover, $\bar \skill \leq \tilde \skill.$
		%
		%
		%
	\end{enumerate}
\end{lemma}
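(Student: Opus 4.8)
The plan is to reduce the whole statement to elementary single‑variable curve sketching, after absorbing the contribution of $S$ into two constants. Write $P:=\sum_{i\in S}\skill_i$ and $Q:=\sum_{i\in S}\skill_i^2$, so that $f(\skill)=(\skill^2+Q)/(\skill+P)$ and $\tilde\skill=Q/P$. Both $P$ and $Q$ are strictly positive whenever $\tilde\skill$ is defined (in particular under the paper's standing assumption $\skill_i\geq\skillmin>0$), so $f$, $\tilde\skill$ and $\bar\skill$ make sense on $[0,\infty)$; the fully degenerate case $P=Q=0$ is vacuous.

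For (i) I would simply substitute. One gets $f(0)=Q/P=\tilde\skill$, and
\[
f(Q/P)=\frac{Q^2/P^2+Q}{Q/P+P}=\frac{Q(Q+P^2)/P^2}{(Q+P^2)/P}=\frac{Q}{P}=\tilde\skill .
\]

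For (ii) I would differentiate once. A short computation gives
\[
f'(\skill)=\frac{2\skill(\skill+P)-(\skill^2+Q)}{(\skill+P)^2}=\frac{\skill^2+2P\skill-Q}{(\skill+P)^2},
\]
so on $[0,\infty)$ the sign of $f'$ is the sign of the upward parabola $g(\skill)=\skill^2+2P\skill-Q$. Its roots are $-P\pm\sqrt{P^2+Q}$: the one with the minus sign is strictly negative, and the one with the plus sign is exactly $\bar\skill=\sqrt{P^2+Q}-P>0$. Hence $g<0$ on $[0,\bar\skill)$ and $g>0$ on $(\bar\skill,\infty)$, which makes $f$ strictly decreasing on $[0,\bar\skill]$ and strictly increasing on $[\bar\skill,\infty)$; in particular $\bar\skill$ is the unique global minimiser, and intersecting these intervals with $[a,b]$ yields the asserted monotonicity on $[a,\bar\skill]$ and $[\bar\skill,b]$ (if $\bar\skill\notin[a,b]$ one of the two sub‑intervals is empty and $f$ is monotone on all of $[a,b]$). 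Finally $\bar\skill\geq 0$ is immediate, and $\bar\skill\leq\tilde\skill$ follows either from the equivalence $\sqrt{P^2+Q}\leq (P^2+Q)/P\iff P^2\leq P^2+Q$, or more conceptually from (i): $f$ is strictly decreasing‑then‑increasing with $f(0)=f(\tilde\skill)$ and $0<\tilde\skill$, so its unique minimiser lies in $[0,\tilde\skill]$.

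There is no genuine obstacle here — the argument is one‑variable calculus. The only two points that deserve a line of care are the positivity of $P$ and $Q$ (needed for $f$, $\tilde\skill$ and $\bar\skill$ to be well defined), and the bookkeeping between the natural domain $[0,\infty)$, on which the clean ``U‑shape'' holds, and the stated domain $[a,b]$, on which the monotonicity claims must be read; this just amounts to a case distinction on whether $\bar\skill$ falls inside $[a,b]$.
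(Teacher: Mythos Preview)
Your proof is correct and is precisely the ``simple techniques of curve sketching'' the paper invokes without spelling out: the paper gives no detailed argument for this lemma, and your substitution for (i) plus the computation of $f'(\skill)=(\skill^2+2P\skill-Q)/(\skill+P)^2$ with its single positive root $\bar\skill=\sqrt{P^2+Q}-P$ is exactly what is needed. Your remarks on the positivity of $P,Q$ and on the bookkeeping between $[0,\infty)$ and $[a,b]$ are also apt, as the paper's formulation tacitly presumes $\bar\skill\in[a,b]$.
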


\begin{lemma} \label{lemma_highest_score_in_optsubset}
	Let $\skill \in \paramspace$ be such that $|\argmax{i \in [\items]} \skill_i|=1$
	%
	%
	and let  $J  =  \argmax{i \in [\items]} \skill_i.$ 
	Then,  for any $l\in \N$, one has $J \in \optsubset,$ where each $\optsubset$ is a maximizing subset as in (\ref{definition_optimal_subset}) for $\actionspace=\subsets$.
	Furthermore, if $|\argmax{i \in [\items]} \skill_i|>1$ then
	$\reward(  \{  J \}  ) \geq \reward (  \{J \} \cup \{ i \} )
	$
	for any $i\in[\items]$, with an equality if and only if $\skill_i = \skill_J.$
	The same holds true for $\rewardtilde.$
	%
\end{lemma}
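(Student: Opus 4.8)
The plan is to reduce both parts to the one-variable analysis of Lemma~\ref{lemma_help_func_reward_max}. First I would dispose of the easy cases $\pullnumber=1$ (where the optimiser in~(\ref{definition_optimal_subset}) is a singleton realising $\argmax{i\in[\items]}\skill_i$) and $\pullnumber=\items$ (where $[\items]$ is the only $\pullnumber$-subset), for which $J\in\optsubset$ is immediate, and henceforth assume $2\le\pullnumber\le\items-1$. I would also remark that $\rewardtilde(\cdot)$ is a positive scalar multiple of $\reward(\cdot)$ (cf.\ the relation noted below~(\ref{defi_expected_utility_with_rel_score})), so every conclusion proved for $\reward$ carries over verbatim to $\rewardtilde$; thus it suffices to work with $\reward$.

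For the main assertion I would argue by contradiction. Suppose $\optsubset$ maximises $\reward$ over $\subsets$ but $J\notin\optsubset$; fix any $i\in\optsubset$, put $S:=\optsubset\setminus\{i\}\neq\varnothing$, and consider $\optsubset':=(\optsubset\setminus\{i\})\cup\{J\}$, which is again an $\pullnumber$-subset since $i\neq J$ and $J\notin\optsubset$. With $f=f(\cdot;S)$ the function of Lemma~\ref{lemma_help_func_reward_max} (applied with $a=0$, $b=1$) one has $\reward(\optsubset)=f(\skill_i)$ and $\reward(\optsubset')=f(\skill_J)=f(1)$, so the goal becomes $f(1)>f(\skill_i)$. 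Setting $\tilde\skill:=\reward(S)=\nicefrac{\sum_{j\in S}\skill_j^2}{\sum_{j\in S}\skill_j}$, a weighted average of the $\skill_j$ with $j\in S$, and using that $J\notin S$ forces $\skill_j<1$ for those $j$, I get $\tilde\skill<1=\skill_J$. By Lemma~\ref{lemma_help_func_reward_max}, $f$ strictly decreases on $[0,\bar\skill]$, strictly increases on $[\bar\skill,1]$, and $0\le\bar\skill\le\tilde\skill<1$. If $\skill_i\ge\bar\skill$, then $\bar\skill\le\skill_i<1$ sits in the strictly increasing part, so $f(\skill_i)<f(1)$. If $\skill_i<\bar\skill$, then $f(\skill_i)\le f(0)=\tilde\skill$ (strict decrease on $[0,\bar\skill]$ plus part~(i)), while $\tilde\skill=f(\tilde\skill)<f(1)$ since $\tilde\skill\in[\bar\skill,1)$, so again $f(\skill_i)<f(1)$. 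Either way $\reward(\optsubset')>\reward(\optsubset)$, contradicting optimality; hence $J\in\optsubset$.

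For the addendum, if the maximiser is not unique I would pick one maximising index $J$ (so $\skill_J=\skillmax=1$) and compute directly: for $i\neq J$, $\reward(\{J\})=1$ and $\reward(\{J\}\cup\{i\})=\nicefrac{(\skill_i^2+1)}{(\skill_i+1)}$, and $1\ge\nicefrac{(\skill_i^2+1)}{(\skill_i+1)}\iff\skill_i\ge\skill_i^2$, which holds for all $\skill_i\in[0,1]$ with equality iff $\skill_i\in\{0,1\}$; since $\skill\in\paramspace$ forces $\skill_i\ge\skillmin>0$, equality holds iff $\skill_i=1=\skill_J$ (the case $i=J$ being trivial). Alternatively one may apply Lemma~\ref{lemma_help_func_reward_max} with $S=\{J\}$, where $\tilde\skill=1$, $f(0)=f(1)=1$, and the decrease-then-increase shape forces $f\le1$ on $[0,1]$.

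The step I expect to require the most care is the subcase $\skill_i<\bar\skill$ of the main assertion: there $f(1)$ and $f(\skill_i)$ are not comparable by monotonicity alone, and one must route through $f(\skill_i)\le f(0)=f(\tilde\skill)<f(1)$, combining both monotonicity regions of $f$ with the fixed-point identity $f(0)=f(\tilde\skill)=\tilde\skill$ from part~(i). I would also take $a=0$ and $b=1$ precisely so that Lemma~\ref{lemma_help_func_reward_max} is available at the endpoints $0$ and $\skill_J=1$ that the argument evaluates.
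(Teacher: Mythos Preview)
Your proposal is correct and follows essentially the same exchange argument as the paper: assume $J\notin\optsubset$, swap some element of $\optsubset$ for $J$, and use Lemma~\ref{lemma_help_func_reward_max} to obtain a strict improvement. The paper's own write-up is terser and simply asserts the contradiction from $\skill_J>\tilde\skill$ and $\bar\skill\in[0,\tilde\skill]$ without spelling out the subcase $\skill_i<\bar\skill$; your explicit two-case treatment (routing through $f(\skill_i)\le f(0)=f(\tilde\skill)<f(1)$ when $\skill_i<\bar\skill$) fills exactly that gap, and your direct computation for the addendum is a fine alternative to the paper's appeal to the monotonicity of $f$.
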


\begin{proof} [\sl Proof of Lemma \ref{lemma_highest_score_in_optsubset}]
	%
	%
	We prove the first assertion by contradiction.
	Hence, suppose that $J \notin \optsubset.$
	Let $\tilde J \in \optsubset$ be such that $\skill_{\tilde J} < \skill_{J}$ and define $\tilde S = \optsubset \backslash \{ \tilde J\} \cup \{J\}. $
	%
	%
	Thus, by assumption it should hold that
	\begin{align*}
	\reward(\optsubset) &=	\frac{\skill_{\tilde J}^{1+\gamma} + \sum_{i \in \optsubset \backslash \{\tilde J \}} \skill_i^{1+\gamma} }{\skill_{\tilde J}^\gamma + \sum_{i \in \optsubset \backslash \{\tilde J \}} \skill_i^\gamma }  =	\frac{\sum_{ i \in \optsubset} \skill_i^{1+\gamma}}{\sum_{ i \in \optsubset} \skill_i^\gamma} \\
	&> \frac{\sum_{ i \in \tilde S  } \skill_i^{1+\gamma}}{\sum_{ i \in \tilde S} \skill_i^\gamma } =  	\frac{\skill_{ J}^{1+\gamma} + \sum_{i \in \optsubset \backslash \{\tilde J \}} \skill_i^{1+\gamma}}{\skill_{ J}^\gamma + \sum_{i \in \optsubset \backslash \{\tilde J \}} \skill_i^\gamma }  \\ &= \reward(\tilde S).
	\end{align*}	
	In terms of Lemma \ref{lemma_help_func_reward_max} this means that $f_\gamma(\skill_{\tilde J}, \optsubset \backslash \{ \tilde J\})  > f_\gamma(\skill_{J}, \optsubset \backslash \{ \tilde J\}),$ but this is a contradiction due to (i) and (ii) of Lemma \ref{lemma_help_func_reward_max}, as $\skill_J > \tilde \skill = \frac{  \sum_{i \in \optsubset \backslash \{\tilde J \}} \skill_i^{1+\gamma}}{ \sum_{i \in \optsubset \backslash \{\tilde J \}} \skill_i^\gamma}$ and $\bar \skill \in [0,\tilde \skill].$
	%
	%
	%
	The second claim follows immediately by the strict monotonic behavior of $f_\gamma$ and the claims for $\rewardtilde$ can be shown similarly.
\end{proof}

\begin{algorithm}[H]
	\begin{algorithmic}[1]
		\caption{Utility-maximization} \label{algo_reward_maxi}
		\INPUT { $\items$ many paramters $\skill_1,\ldots,\skill_\items,$ preciseness parameter $\gamma,$ preselection size $\pullnumber$\;}
		\STATE {\textbf{initialization:} $\tau \leftarrow Sort(\skill_1,\ldots,\skill_\items)$ \COMMENT{determine permutation which sorts the scores in decreasing order}\;}
		\STATE $S \leftarrow  \argmax{ i \in \tau([\items] ) } \skill_{\tau(i)} $ \COMMENT{select all high-score items}\;

		\IF{ $|S|\geq \pullnumber$ }
		\STATE {  \textbf{return:} {  randomly selected $\pullnumber$ elements of $S$\; } }
		\ELSE 	
		\STATE		$A \leftarrow [\items] \backslash [|S|] $ \COMMENT{ set of active arms }\;
		\REPEAT 
		
		\STATE $\tilde \skill \leftarrow  \nicefrac{\sum_{ i \in S }  \skill_i^{1+\gamma} }{\sum_{i\in S} \skill_i^\gamma }$\;
		
		
		\STATE $A_{next} \leftarrow \argmin{ i \in \{ \min A , \max A \} } \{ |\tilde \skill - f_\gamma(\skill_{\tau(i)};S)|     \}$ \\
		\COMMENT{$f_\gamma$ as in Lemma \ref{lemma_help_func_reward_max}, break ties arbitrarily}\;
		

		\STATE	$S \leftarrow S \cup \tau(A_{next}) $\;
		

		\STATE $A \leftarrow A \backslash A_{next}$\;
		\UNTIL{$|S| == \pullnumber$}
		
		\STATE { \textbf{return:}   $S$\; }
		
		\ENDIF
	\end{algorithmic}
\end{algorithm}
%

%
Let $\skill_{(i)}$ denote the $i$-th order statistic for $(\skill_1,\ldots,\skill_\items),$ i.e., 
$$ \skill_{(1)} \leq \skill_{(2)} \leq \ldots \leq  \skill_{(n)},  $$
then Lemma \ref{lemma_help_func_reward_max} implies that  $f_\gamma\big(v;\{\skill_{(n)}\}\big)\leq f_\gamma\big(\skill_{(n)};\{\skill_{(n)}\}\big) $ for any $v \in [0,\skill_{(n)}]$ and the smallest decrease of $f_\gamma\big(\cdot;\{\skill_{(n)}\} \big)$ over the discrete set $\{\skill_{(1)},\ldots, \skill_{(\items-1)}   \}$ is either for $\skill_{(\items-1)}$ or for $\skill_{(1)}.$

%
%
With this, Algorithm \ref{algo_reward_maxi} successively builds a set $S$ which will maximize the expected utility in (\ref{defi_expected_utility}) for a given score parameter $\skill=(\skill_1,\ldots,\skill_\items)$.
First, the scores are sorted in order to find the arms with the highest scores, as by Lemma \ref{lemma_highest_score_in_optsubset} these are always element of the maximizing subset.
If more than $(\pullnumber-1)$ elements have the same highest score,  a randomly chosen  $\pullnumber$-sized set of these is returned, since the expected utility among all possible $\pullnumber$-sized subsets of these is the same by Lemma \ref{lemma_help_func_reward_max} or Lemma \ref{lemma_highest_score_in_optsubset}. 

Otherwise, an active index set $A$ is initialized containing all indices for which it is not decided yet, if they are part of the maximizing set $S$ eventually.
As by Lemma \ref{lemma_highest_score_in_optsubset} the expected utility decreases from that point on by enlarging the set $S,$ the algorithm determines the arm with the smallest decrease for the expected utility, where ties are broken arbitrary by two possible candidates.

Since the expected utility of the currently set $S$ is identical to $f_\gamma(0;S)$ only the arms with the smallest resp.\ highest score parameter in $A$ have to be checked by the implication after Lemma \ref{lemma_highest_score_in_optsubset}.
It can be shown that the algorithm has worst complexity of $O( \pullnumber \, \items \, \log(\items))$ if an efficient sorting algorithm is used in the initial step.

\section{Further experiments for the Pre-Bandit problem}\label{sec:further_exp}

In this section, we provide further experiments on synthetic data for the two variants of the Pre-Bandit problem.

\begin{figure} 
	\centering
	\subfigure
	{\includegraphics[width=0.48\linewidth]{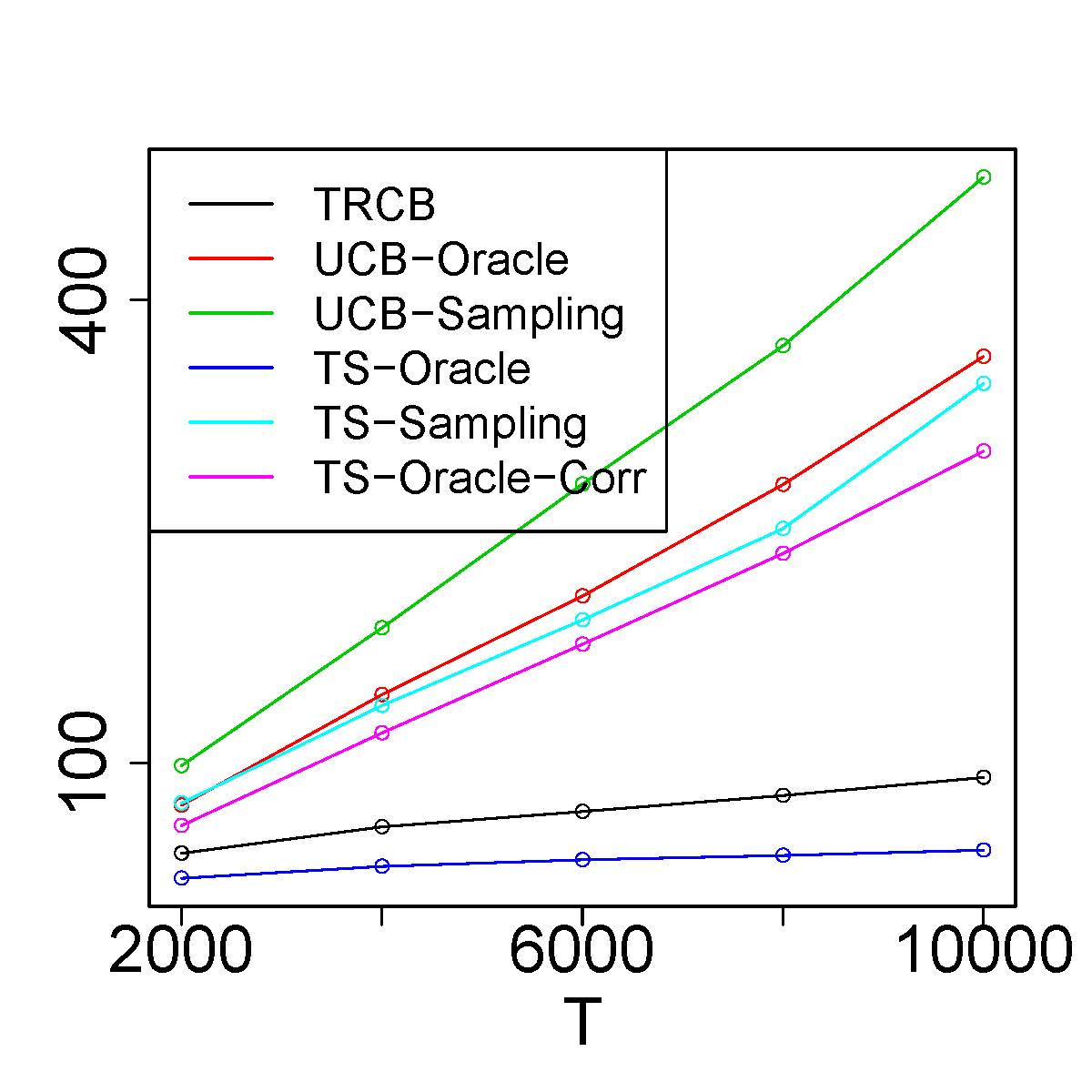}}
	\subfigure
	{\includegraphics[width=0.48\linewidth]{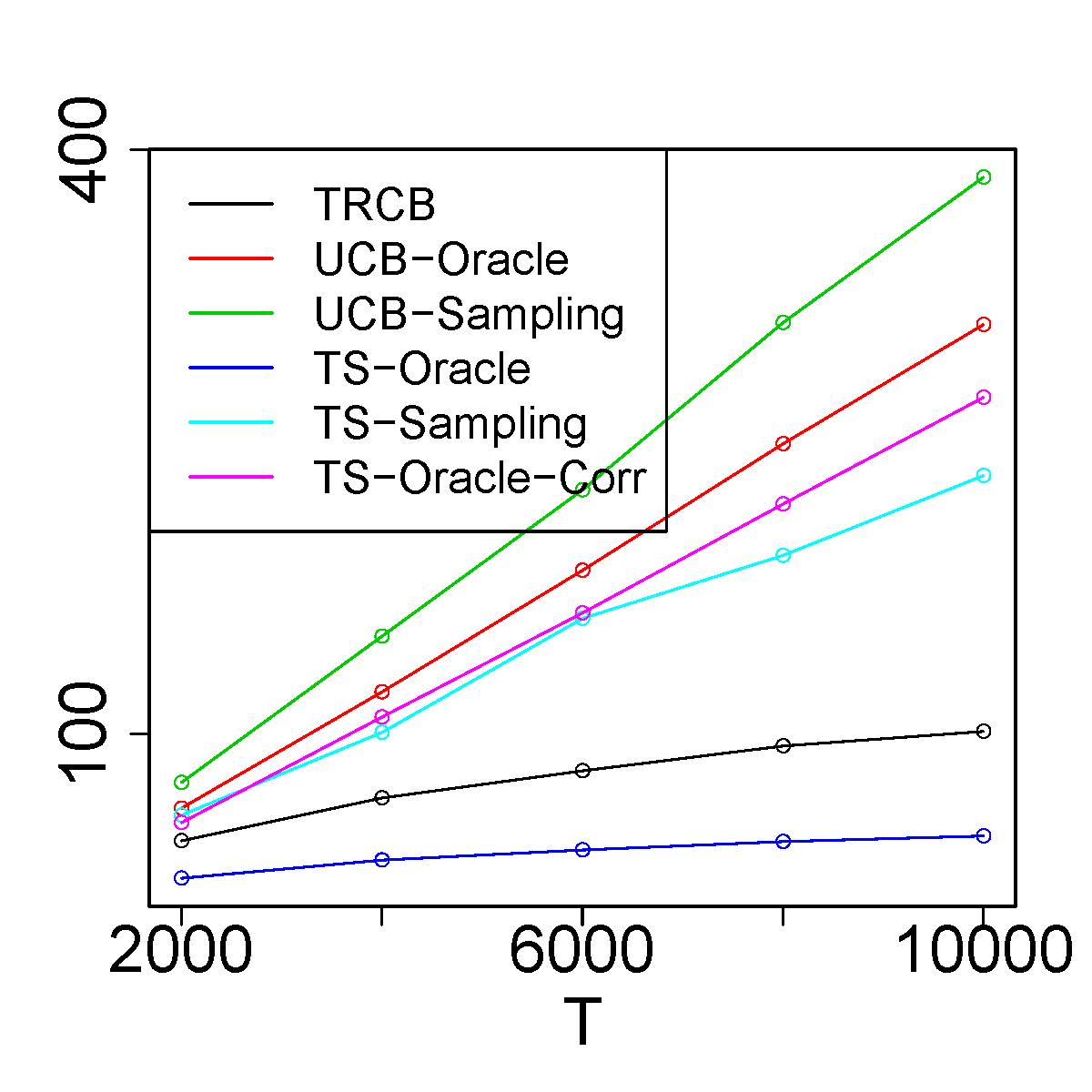}}
	\caption{Mean cumulative regret for 1000 runs of randomly generated restricted PB instances for $(\items,\pullnumber)=(20,4)$ (left) and $(\items,\pullnumber)=(30,5)$ (right).}
	\label{fig:regre_other_scenarios_restricted_pre}
\end{figure} 

\paragraph{Restricted Pre-Bandit problem (larger number of arms)} First, we present two additional scenarios of the simulation study in Section \ref{sec_experiments} for the restricted Pre-Bandit problem with larger numbers of arms $n$ and different preselection sizes $\pullnumber.$
In particular, we investigate  the performance of the following algorithms, which were also analyzed in Section \ref{sec_experiments}, for the restricted Pre-Bandit problem:
\begin{itemize}
	\item TRCB: The TRCB algorithm in Algorithm \ref{algo_explore_exploit} with $\Cshrink = 7 \cdot 10^{-5} $ and $\skillmin=0.02$ (here as a parameter of the algorithm).
	\item UCB-Oracle:	UCB-type algorithm of \citet{agrawal2016near} with knowledge of the best arm in advance and revenues are set to be the estimated score parameters (in short $r = \hat \skill$).
	\item UCB-Sampling:	UCB-type algorithm of \citet{agrawal2016near} without knowledge of the best arm in advance (sampled with MNL probability among the three best) and $r = \hat \skill$.
	\item TS-Oracle:	The Thompson sampling algorithm of \citet{agrawal2017thompson} (Algorithm 1)  with knowledge of the best arm in advance and $r = \hat \skill$.
	\item TS-Sampling:	The Thompson sampling algorithm of \citet{agrawal2017thompson} (Algorithm 1)  without knowledge of the best arm in advance (sampled with MNL probability among the three best) and $r = \hat \skill$.
	\item TS-Oracle-Corr: Correlated Thompson sampling algorithm of \citet{agrawal2017thompson} (Algorithm 2) with knowledge of the best arm in advance and $r = \hat \skill$.

\end{itemize}
The left picture in Figure \ref{fig:regre_other_scenarios_restricted_pre} provides the findings for the case $\items=20$ and $\pullnumber=4,$ while the right picture illustrates our results for $\items=30$ and $\pullnumber=5.$ Both scenarios are considered for the time horizons $\timehorizon\in \{ i \cdot 2000\}_{i=1}^{5}$ and the score parameters are drawn randomly from 
the $\items$-simplex  without any restrictions on $\skillmin$ and with $\gamma=1.$

\begin{figure} 
	\centering
	\subfigure
	{\includegraphics[width=0.48\linewidth]{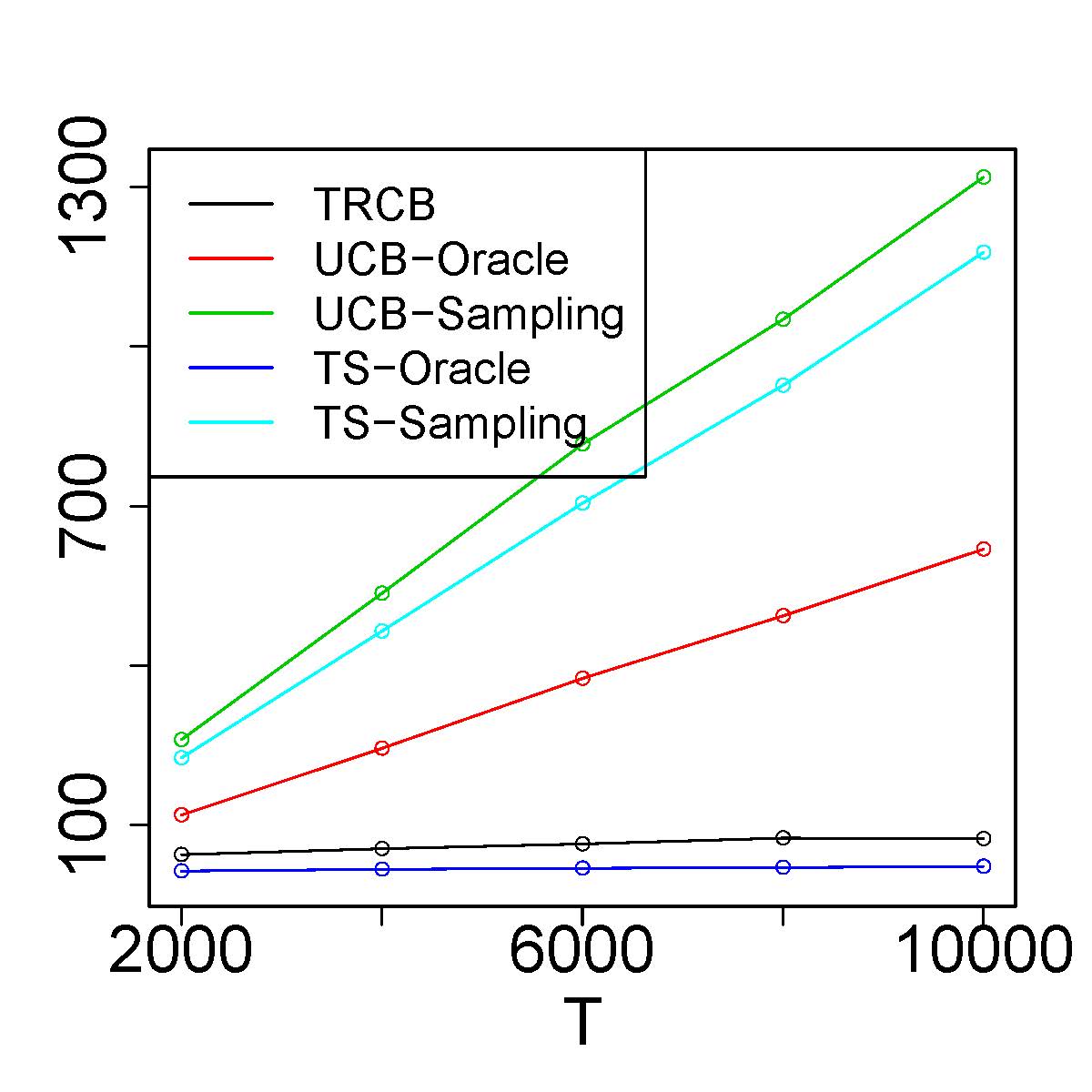}}
	\subfigure
	{\includegraphics[width=0.48\linewidth]{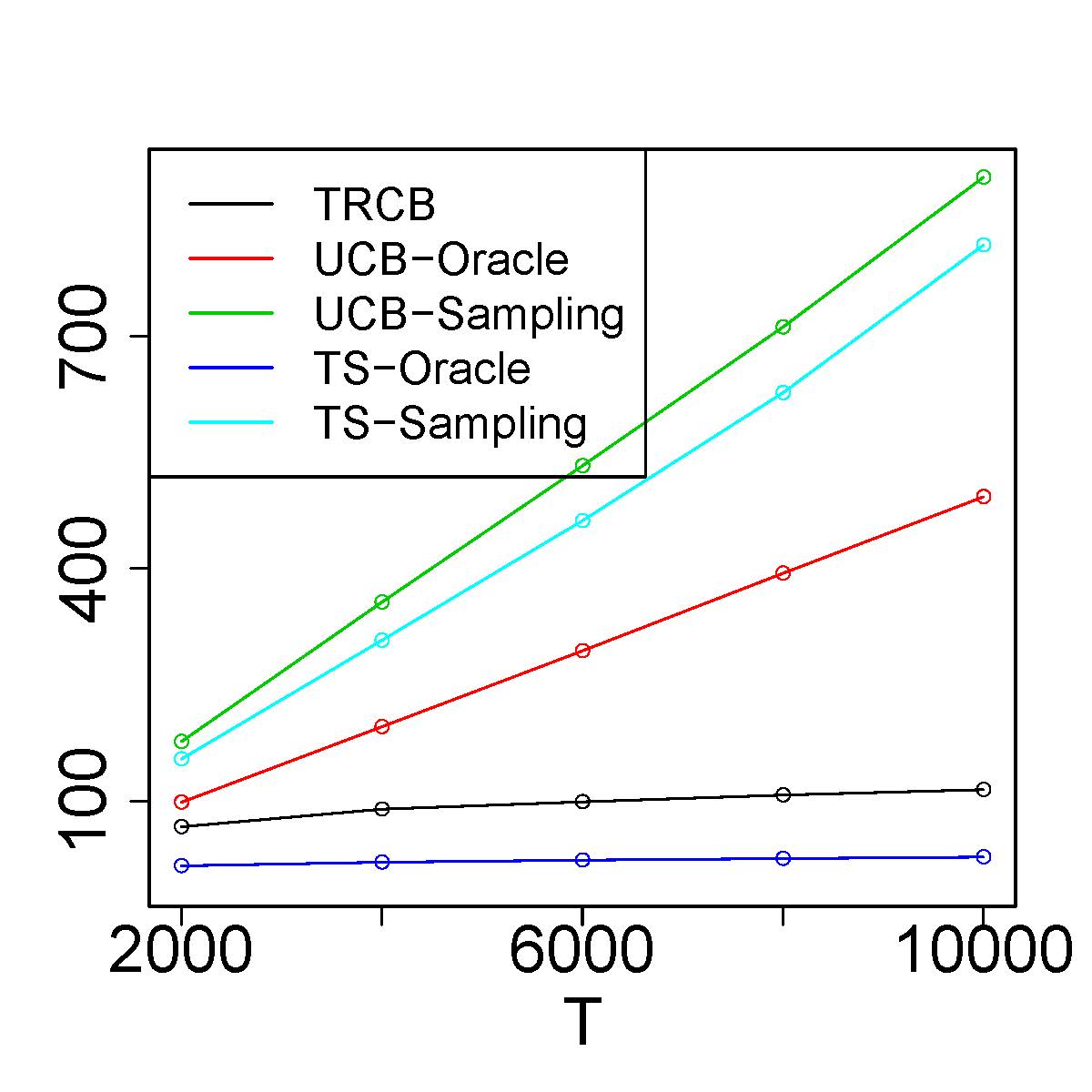}}
	\caption{Mean cumulative regret for 1000 runs of randomly generated restricted PB instances for $(\items,\pullnumber)=(10,3)$ (left) and $(\items,\pullnumber)=(20,4)$ (right) and $\gamma=1/20.$}
	\label{fig:regre_other_scenarios_restricted_pre_small_gamma}
\end{figure}


\begin{table}[h]
	\centering
	\caption{Empirical standard deviations of the cumulative regret for the different time horizon steps for the scenarios $(\items,\pullnumber)=(20,4)$ and $(\items,\pullnumber)=(30,5).$} 
	\label{table_std_dev_restricted_pre_bandits}
	\resizebox{0.47\textwidth}{!}{
		\begin{tabular}{rlllll}
			& \multicolumn{5}{|c}{ $(\items,\pullnumber)=(20,4)$}  \\
			\hline	
			\multicolumn{1}{c|}{ 	$\timehorizon$}   & 2000 & 4000 & 6000 & 8000 & 10000 \\ 
			\hline
			\multicolumn{1}{c|}{ TRCB}   & 21.47 & 32.93 & 43.73 & 57.51 & 66.36 \\
			\multicolumn{1}{c|}{ UCB-Oracle}   & 43.90 & 79.27 & 119.19 & 165.42 & 202.54  \\
			\multicolumn{1}{c|}{ UCB-Sampling}   & 98.31 & 187.52 & 280.59 & 370.10 & 479.20 \\
			\multicolumn{1}{c|}{ TS-Oracle}   & 7.74 & 10.01 & 11.37 & 13.42 & 14.06 \\
			\multicolumn{1}{c|}{ TS-Sampling}   & 86.11 & 161.01 & 235.16 & 329.54 & 429.52 \\
			\multicolumn{1}{c|}{ TS-Oracle-Corr}   & 21.84 & 43.65 & 63.73 & 87.99 & 111.97 \\
			\hline
			\hline 			
			& \multicolumn{5}{|c}{ $(\items,\pullnumber)=(30,5)$} \\			\hline 		
			\multicolumn{1}{c|}{ TRCB}   &
			20.54 & 34.81 & 40.84 & 54.91 & 58.88 \\
			\multicolumn{1}{c|}{ UCB-Oracle}  &
			34.45 & 65.40 & 102.52 & 143.31 & 172.84 \\ 
			\multicolumn{1}{c|}{ UCB-Sampling} &
			75.21 & 150.24 & 225.29 & 311.10 & 385.72 \\ 
			\multicolumn{1}{c|}{ TS-Oracle}   &
			6.91 & 9.48 & 11.17 & 13.43 & 13.91 \\ 
			\multicolumn{1}{c|}{ TS-Sampling}  &
			53.66 & 101.35 & 175.86 & 246.12 & 284.47 \\ 
			\multicolumn{1}{c|}{ TS-Oracle-Corr}   &
			18.01 & 38.74 & 55.92 & 80.59 & 97.91 \\  
	\end{tabular}}
\end{table}

The findings are similarly as for the case $\items=10$ and $\pullnumber=3,$ that is only the Thompson Sampling algorithm with knowledge of the best arm apriori (TS-Oracle) outperforms TRCB, while the other algorithms are outperformed by TRCB.
Furthermore, we report the empirical standard deviations of the considered algorithms for each time horizon in both scenarios in Table \ref{table_std_dev_restricted_pre_bandits}. 
Only TS-Oracle has a throughout smaller standard deviation than TRCB, while all the others have variations of a higher magnitude than TRCB. 

\begin{figure}
		 [H]
	\centering
	\subfigure
	{\includegraphics[width=0.48\linewidth]{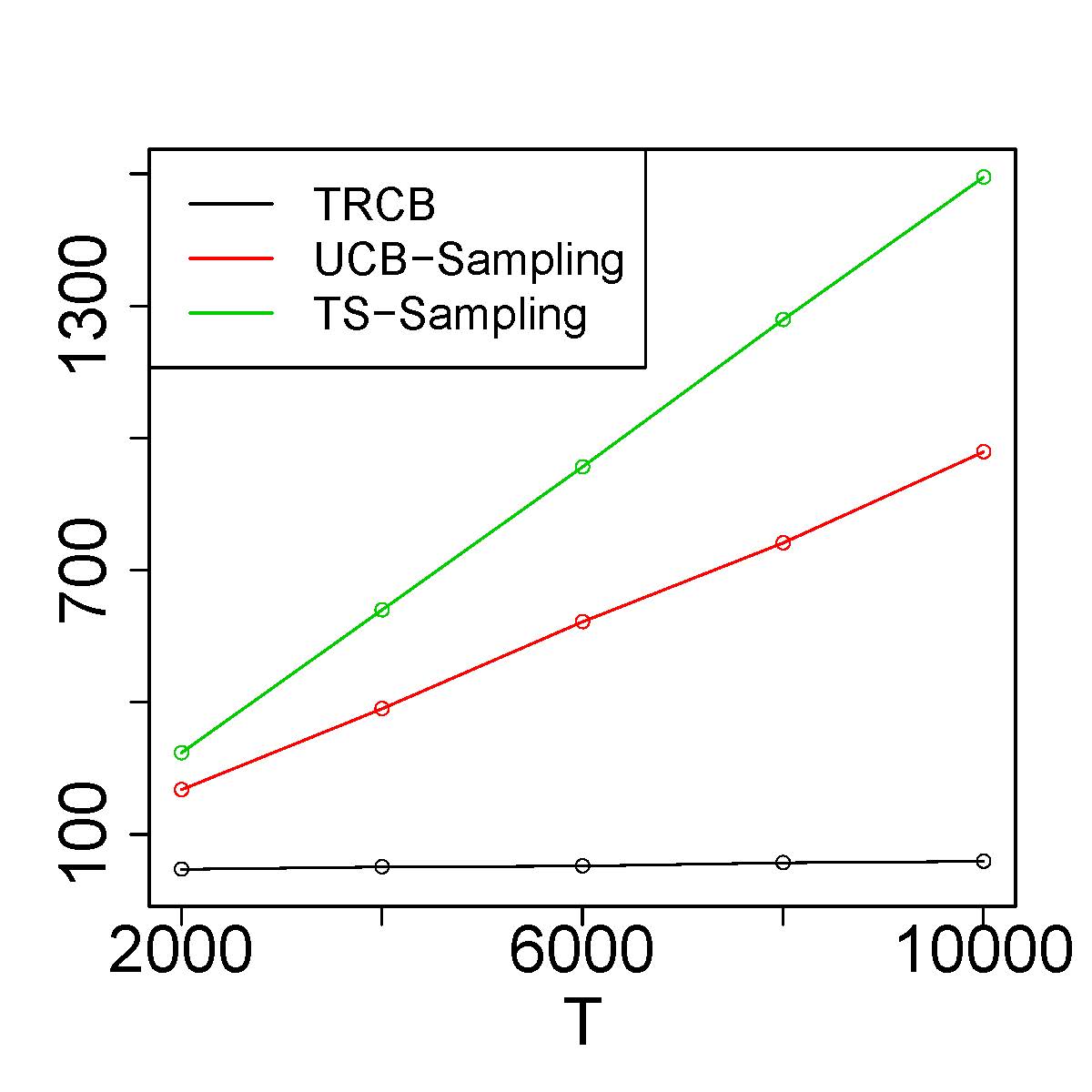}}
	\subfigure
	{\includegraphics[width=0.48\linewidth]{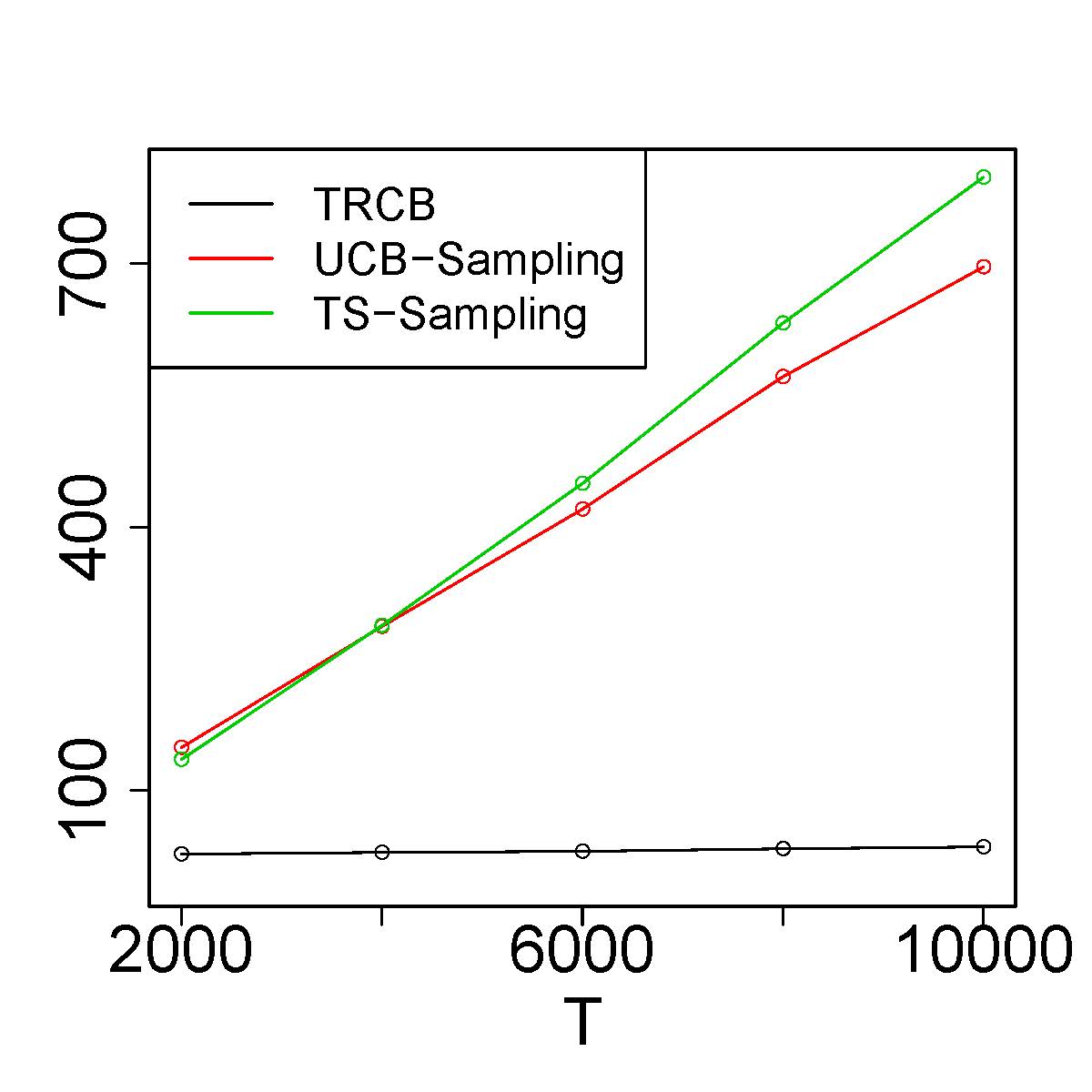}}
	\caption{Mean cumulative regret for 1000 runs of randomly generated restricted PB instances for $(\items,\pullnumber)=(10,3)$ (left) and $(\items,\pullnumber)=(20,4)$ (right) and $\gamma=20.$}
	\label{fig:regre_other_scenarios_restricted_pre_large_gamma}
\end{figure}

\paragraph{Restricted Pre-Bandit problem (Varying degree of preciseness)} Next, we consider two additional scenarios, in which we initially set $\gamma = 1/20$ such that  the most preferred subsets consists throughout of the top-$\pullnumber$ arms and 
The results for $\gamma = 1/20$  are depicted in Figure \ref{fig:regre_other_scenarios_restricted_pre_small_gamma} for the cases $(\items,\pullnumber)=(10,3)$ and  $(\items,\pullnumber)=(20,4)$ for the algorithms described above.
Note that TS-Oracle-Corr could not be compared as it sampled negative values for the score parameters, which lead to numerical issues regarding the evaluation of the utility function. 
Again the findings are in line with the observations we have made in the simulations before, i.e., only TS-Oracle is able to outperform our algorithm TRCB due to its advantage of knowing the best arm. 
In particular, this demonstrates that our algorithm performs well for scenarios where top-$\pullnumber$ subsets are the desired outcome for a user.

In addition, we consider the case $\gamma = 20$ such that the most preferred subsets are basically all subsets which contain the arm with the highest score (cf.\ Example 2 in the main paper).
Figure \ref{fig:regre_other_scenarios_restricted_pre_large_gamma} illustrates the results for the cases $(\items,\pullnumber)=(10,3)$ and  $(\items,\pullnumber)=(20,4),$ where we do not included the algorithms which have prior knowledge of the best arm, as these naturally have throughout a regret of zero.
This experiment indicates that the considered DAS algorithms depend too much on the assumption that the no-choice option corresponds to the highest scored arm as also remarked in Section \ref{sec_experiments}.
%
%

%
%

%
\paragraph{Flexible Pre-Bandit problem} In addition to the simulations in Section \ref{sec_experiments}, we investigate the empirical regret growth over time for larger  numbers of arms $\items$ for our CBR algorithm for the flexible Pre-Bandit problem.
We consider two variants of the CBR-algorithm:
\begin{itemize}
	\item CBR: The CBR algorithm with 
		$\sigma(x)= (1 \wedge x) 1_{[0,\infty)}(x).$

	\item CBR-As: The CBR algorithm with
		$\sigma(x) = \frac{1}{\pi} \arctan\Big( \frac{x-\nicefrac{1}{2}}{(1-x)^\rho x^\rho}  \Big)+ \frac12 $ and $\rho=2.$
\end{itemize}
Figure \ref{fig:regretanalysisracingalgo} illustrates the results of our simulations for both CBR algorithm variants over 500 repetitions, respectively, with $\items \in \{60,120,240\},$ over the time horizons $\timehorizon \in \{ i \cdot 2000  \}_{i=1}^5$ and the score parameters were drawn randomly from the unit interval and with $\gamma=1.$ 

\begin{figure}  
	\centering
	
	{\includegraphics[width=0.94\linewidth]{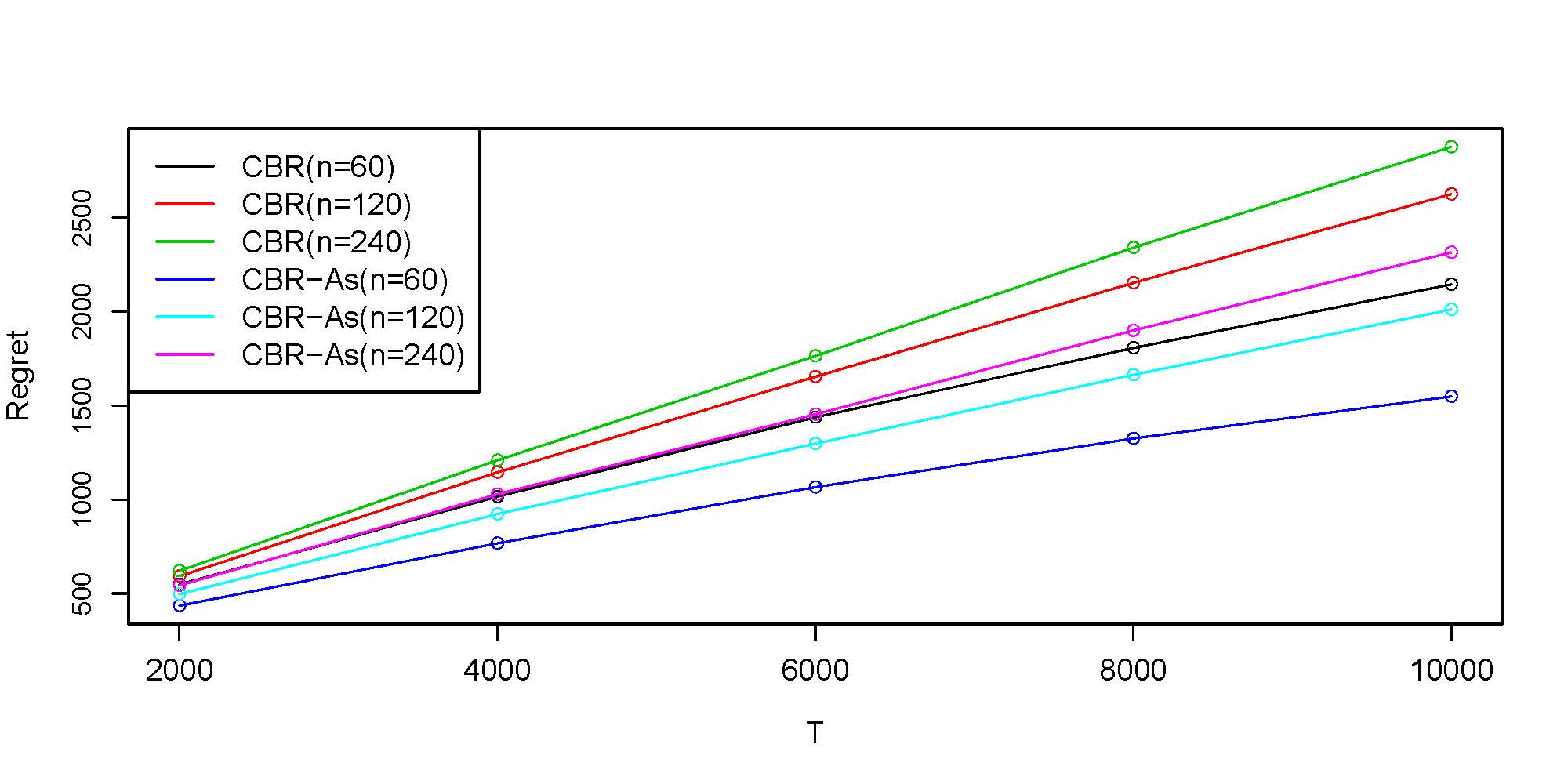}}
	\caption{
		Mean cumulative regret of the variants of the CBR algorithm for 500 runs of randomly generated flexible Pre-Bandit instances for $\items \in \{60,120,240\}.$  }
	\label{fig:regretanalysisracingalgo}
\end{figure}

It is clearly visible that  CBR-As outperforms CBR due to the more sophisticated choice of the S-curved function $\sigma.$ 
Thus, it is reasonable to believe that the performance of CBR can be significantly improved by an appropriate choice of $\sigma.$
Note that the Double Thompson Sampling considered in Section \ref{sec_experiments} was not competitive in these scenarios and is therefore omitted.

\end{document}